\newtheorem{lem}{Lemma}
\newtheorem{thm}[lem]{Theorem}
\newtheorem{cor}[lem]{Corollary}
\newtheorem{clm}[lem]{Claim}
\newtheorem{defn}[lem]{Definition}
\newcommand{\newreptheorem}[2]{\newtheorem*{rep@#1}{\rep@title} 
\newenvironment{rep#1}[1]{\def\rep@title{#2 \ref*{##1}}\begin{rep@#1}}{\end{rep@#1}}
}
\newcommand{\preck}{\textup{\textrm{prec@}k}\xspace}
\newcommand{\preckappa}{\textup{\textrm{prec@}$\kappa$}\xspace}
\newcommand{\struct}{\text{struct}\xspace}
\newcommand{\ramp}{\text{ramp}\xspace}
\newcommand{\avg}{\text{avg}\xspace}
\newcommand{\mx}{\text{max}\xspace}
\newcommand{\perckavg}{{\sc Perceptron@k-avg}\xspace}
\newcommand{\perckmax}{{\sc Perceptron@k-max}\xspace}
\newcommand{\sgdk}{{\sc SGD@k-avg}\xspace}
\newcommand{\sgdkmax}{{\sc SGD@k-max}\xspace}
\newcommand{\FN}{\text{FN}}
\newcommand{\spmb}{{\bfseries 1PMB}\xspace}
\newcommand{\R}{{\mathbb R}}
\newcommand{\V}{{\cal V}}
\newcommand{\X}{{\cal X}}
\renewcommand{\L}{{\cal L}}
\newcommand{\T}{{\mathbb T}}
\newcommand{\W}{{\cal W}}
\newcommand{\K}{{\cal K}}
\newcommand{\ZZ}{{\cal Z}}
\newcommand{\bc}[1]{\left\{{#1}\right\}}
\newcommand{\br}[1]{\left({#1}\right)}
\newcommand{\bs}[1]{\left[{#1}\right]}
\newcommand{\abs}[1]{\left| {#1} \right|}
\newcommand{\norm}[1]{\left\| {#1} \right\|}
\newcommand{\ceil}[1]{\left\lceil #1 \right\rceil}
\renewcommand{\O}[1]{{\cal O}\br{{#1}}}
\newcommand{\softO}[1]{\tilde{\cal O}\br{{#1}}}
\newcommand{\Om}[1]{\Omega\br{{#1}}}
\newcommand{\ip}[2]{\left\langle{#1},{#2}\right\rangle}
\renewcommand{\vec}[1]{{\mathbf{#1}}}
\newcommand{\vecX}{\vec{X}}
\newcommand{\vz}{\vec{0}}
\newcommand{\x}{\vec{x}}
\newcommand{\y}{\vec{y}}
\newcommand{\z}{\vec{z}}
\newcommand{\g}{\vec{g}}
\newcommand{\w}{\vec{w}}
\renewcommand{\v}{\vec{v}}
\newcommand{\hx}{\hat\x}
\newcommand{\hy}{\hat\y}
\newcommand{\hz}{\hat\z}
\newcommand{\ty}{\tilde\y}
\newcommand{\Ind}[1]{{\mathbb I}\bs{{#1}}}
\renewcommand{\>}{\rightarrow}
\newcommand{\<}{\leftarrow}
\title{Surrogate  Functions for Maximizing Precision at the Top}
\author{Purushottam Kar\\Microsoft Research India\\t-purkar@microsoft.com \and Harikrishna Narasimhan$^\ast$\\Indian Institute of Science\\harikrishna@csa.iisc.ernet.in \and Prateek Jain\\Microsoft Research India\\prajain@microsoft.com}
\begin{document}

\maketitle

\begin{abstract} 
The problem of maximizing precision at the top of a ranked list, often dubbed Precision@k (\preck), finds relevance in myriad learning applications such as ranking, multi-label classification, and learning with severe label imbalance. However, despite its popularity, there exist significant gaps in our understanding of this problem and its associated performance measure.

The most notable of these is the lack of a convex upper bounding surrogate for \preck. We also lack scalable perceptron and stochastic gradient descent algorithms for optimizing this performance measure. In this paper we make key contributions in these directions. At the heart of our results is a family of truly upper bounding surrogates for \preck. These surrogates are motivated in a principled manner and enjoy attractive properties such as consistency to \preck under various natural margin/noise conditions.

These surrogates are then used to design a class of novel perceptron algorithms for optimizing \preck with provable mistake bounds. We also devise scalable stochastic gradient descent style methods for this problem with provable convergence bounds. Our proofs rely on novel uniform convergence bounds which require an in-depth analysis of the structural properties of \preck and its surrogates. We conclude with experimental results comparing our algorithms with state-of-the-art cutting plane and stochastic gradient algorithms for maximizing \preck.
\end{abstract}

\let\thefootnote\relax\footnotetext{$^\ast$ Work done while H.N. was an intern at Microsoft Research India, Bangalore.}

\section{Introduction}
\label{sec:intro}
Ranking a given set of points or labels according to their relevance forms the core of several real-life learning systems. For instance, in classification problems with a rare-class as is the case in spam/anomaly detection, the goal is to rank the given emails/events according to their likelihood of being from the rare-class (spam/anomaly). Similarly, in multi-label classification problems, the goal is to rank the labels according to their likelihood of being relevant to a data point \cite{TsoumakasK07}. 

The ranking of items at the top is of utmost importance in these applications and several performance measures, such as Precision@k, Average Precision and NDCG have been designed to promote accuracy at top of ranked lists. Of these, the Precision@k (\preck) measure is especially popular in a variety of domains. Informally, \preck counts the number of relevant items in the top-k positions of a ranked list and is widely used in domains such as binary classification \cite{Joachims05}, multi-label classification \cite{PrabhuV14} and ranking \cite{LeSmola07}.

Given its popularity, \preck has received attention from algorithmic, as well as learning theoretic perspectives. However, there remain specific deficiencies in our understanding of this performance measure. In fact, to the best of our knowledge, there is only one known convex surrogate function for \preck, namely, the \emph{struct-SVM} surrogate due to \cite{Joachims05} which, as we reveal in this work, is not an upper bound on \preck in general, and need not recover an optimal ranking even in strictly separable settings. 

Our aim in this paper is to develop efficient algorithms for optimizing \preck for ranking problems with binary relevance levels. Since the intractability of binary classification in the agnostic setting \cite{GuruswamiR09} extends to \preck, our goal would be to exploit natural notions of \emph{benign-ness} usually observed in natural distributions to overcome such intractability results.

\subsection{Our Contributions}
We make several contributions in this paper that both, give deeper insight into the \preck performance measure, as well as provide scalable techniques for optimizing it.

\textbf{Precision@k margin}: motivated by the success of margin-based frameworks in classification settings, we develop a family of margin conditions appropriate for the \preck problem. Recall that the \preck performance measure counts the number of relevant items at the top $k$ positions of a ranked list. The simplest of our margin notions, that we call the \emph{weak $(k,\gamma)$-margin}, is said to be present if a privileged set of $k$ relevant items can be separated from all irrelevant items by a margin of $\gamma$.  This is the least restrictive margin condition that allows for a perfect ranking w.r.t \preck. Notably, it is much less restrictive than the binary classification notion of margin which requires all relevant items to be separable from all irrelevant items by a certain margin. We also propose two other notions of margin suited to our perceptron algorithms.

\textbf{Surrogate functions for \preck}: we design a family of three novel surrogates for the \preck performance measure. Our surrogates satisfy two key properties. Firstly they always upper bound the \preck performance measure so that optimizing them promotes better performance w.r.t \preck. Secondly, these surrogates satisfy \emph{conditional consistency} in that they are consistent w.r.t. \preck under some noise condition. We show that there exists a one-one relationship between the three \preck margin conditions mentioned earlier and these three surrogates so that each surrogate is consistent w.r.t. \preck under one of the margin conditions. Moreover, our discussion reveals that the three surrogates, as well as the three margin conditions, lie in a concise hierarchy.

\textbf{Perceptron and SGD algorithms}: using insights gained from the previous analyses, we design two perceptron-style algorithms for optimizing \preck. Our algorithms can be shown to be a natural extension of the classical perceptron algorithm for binary classification \cite{rosenblatt58a}. Indeed, akin to the classical perceptron, both our algorithms enjoy mistake bounds that reduce to crisp convergence bounds under the margin conditions mentioned earlier. We also design a mini-batch-style stochastic gradient descent algorithm for optimizing \preck.

\textbf{Learning theory}: in order to prove convergence bounds for the SGD algorithm, and online-to-batch conversion bounds for our perceptron algorithms, we further study \preck and its surrogates and prove uniform convergence bounds for the same. These are novel results and require an in-depth analysis into the involved structure of the \preck performance measure and its surrogates. However, with these results in hand, we are able to establish crisp convergence bounds for the SGD algorithm, as well as generalization bounds for our perceptron algorithms.

{\bf Paper Organization}: Section~\ref{sec:formulation} presents the problem formulation and sets up the notation. Section~\ref{sec:new-surrogate} introduces three novel surrogates and margin conditions for \preck and reveals the interplay between these with respect to consistency to \preck. Section~\ref{sec:perceptron} presents two perceptron algorithms for \preck and their mistake bounds, as well as a mini-batch SGD-based algorithm. Section~\ref{sec:genbound} discusses uniform convergence bounds for our surrogates and their application to convergence and online-to-batch conversion bounds for our the perceptron and SGD-style algorithms. We conclude with empirical results in Section~\ref{sec:exps}.


\subsection{Related Work}
There has been much work in the last decade in designing algorithms for bipartite ranking problems. While the earlier methods  for this problem, such as RankSVM, focused on optimizing pair-wise ranking accuracy \cite{ranksvm1, ranksvm2, rankboost, ranknet}, of late, there has been enormous interest in performance measures that promote good ranking performance at the top portion of the ranked list, and in ranking methods that directly optimize these measures \cite{Clemencon07,pnorm,infpush,Boyd+12,NarasimhanA13a,NarasimhanA13b,Nan+14}.

In this work, we focus on one such evaluation measure -- Precision@k, which is widely used in practice. The only prior algorithms that we are aware of that directly optimize this performance measure are a structural SVM based cutting plane method due to \cite{Joachims05}, and an efficient stochastic implementation of the same due to \cite{KarN014}. However, as pointed out earlier, the convex surrogate used in these methods is not well-suited for \preck. 

It is also important to note that the bipartite ranking setting considered in this work is different from other popular forms of ranking such as subset or list-wise ranking settings, which arise in several information retrieval applications, where again there has been much work in optimizing performance measures that emphasize on accuracy at the top (e.g. NDCG) \cite{ndcg09, listnet, svmmap, LeSmola07, ChakrabartiKSB08, HRV14}. There has also been some recent work on perceptron style ranking methods for list-wise ranking problems \cite{ChAm14}, but these methods are tailored to optimize the NDCG and MAP measures, which are different from the \preck measure that we consider here. Other less related works include online ranking algorithms for optimizing ranking measures in an adversarial setting with limited feedback \cite{ChAm15}.


\section{Problem Formulation and Notation}
\label{sec:formulation}
We will be presented with a set of labeled points $(\x_i,y_i),\ldots, (\x_n,y_n)$, where $\x_i \in \X$ and $y_i \in \{0,1\}$. We shall use $\vecX$ to denote the entire dataset, $\vecX_+$ and $\vecX_-$ to denote the set of positive and negatively (null) labeled points, and $\y \in \{0,1\}^n$ to denote the label vector. $\z = (\x,y)$ shall denote a labeled data point. Our results readily extend to multi-label and ranking settings but for sake of simplicity, we focus only on bipartite ranking problems, where the goal is to rank (a subset of) positive examples above the negative ones. 

Given $n$ labeled data points $\z_1,\ldots,\z_n$ and a scoring function $s: \X \> \R$, let $\sigma_s \in S_n$ be the permutation that sorts points according to the scores given by $s$ i.e. $s(\x_{\sigma_s(i)}) \geq s(\x_{\sigma_s(j)})$ for $i \leq j$. The Precision@k measure for this scoring function can then be expressed as:
 
\begin{align}
\label{eq:preck-def}
\preck(s;\ \z_1,\ldots,\z_n) = \sum_{i=1}^k (1- \y_{\sigma_s(i)}).
\end{align}
Note that the above is a ``loss'' version of the performance measure which penalizes any top-$k$ ranked data points that have a null label. For simplicity, we will use the abbreviated notation $\preck(s) := \preck(s;\ \z_1,\ldots,\z_n)$. We will also use the shorthand $s_i = s(\x_i)$. For any label vectors $\y',\y'' \in \{0,1\}^n$, we define
\begin{equation}
\begin{aligned}
\Delta(\y',\y'') &= \sum_{i=1}^n(1-\y_i')\y_i'',\\
K(\y',\y'') &= \sum_{i=1}^n\y_i'\y_i''.\label{eq:del}
\end{aligned}
\end{equation}
Let $n_+(\y') = K(\y',\y') = \norm{\y'}_1$ denote the number of positives in the label vector $\y'$ and $n_+ = n_+(\y)$ denote the number of actual positives. Let $\y^{(s,k)}$ be the label vector that assigns the label $1$ only to the top $k$ ranked items according to the scoring function $s$. That is, $\y^{(s,k)}_i = 1$ if $\text{if } \sigma^{-1}_s(i) \leq k$ and $0$ otherwise. It is easy to verify that for any scoring function $s$, $\Delta(\y,\y^{(s,k)}) = \preck(s)$.


\section{A Family of Novel Surrogates for \preck}
\label{sec:new-surrogate}
As \preck is a non-convex loss function that is hard to optimize directly, it is natural to seek surrogate functions that act as a good proxy for \preck. There will be two properties that we shall desire of such a surrogate:
\begin{enumerate}
	\item \textbf{Upper Bounding Property}: the surrogate should \emph{upper bound} the \preck loss function, so that minimizing the surrogate promotes small \preck loss.
	\item \textbf{Conditional Consistency}: under some regularity assumptions, optimizing the surrogate should yield an optimal solution for \preck as well.
\end{enumerate}
Motivated by the above requirements, we develop a family of surrogates which upper bound the \preck loss function and are consistent to it under certain margin/noise conditions. We note that the results of \cite{CalauzenesUG12} that negate the possibility of consistent convex surrogates for ranking performance measures do not apply to our results since they are neither stated for \preck, nor do they negate the possibility of conditional consistency.

It is notable that the seminal work of \cite{Joachims05} did propose a convex surrogate for \preck, that we refer to as $\ell^\struct_\preck(\cdot)$. However, as the discussion below shows, this surrogate is not even an upper bound on \preck let alone be consistent to it. Understanding the reasons for the failure of this surrogate would be crucial in designing our own.

\subsection{The Curious Case of $\ell^\struct_\preck(\cdot)$}
The $\ell^\struct_\preck(\cdot)$ surrogate is a part of a broad class of surrogates called {\em struct-SVM} surrogates that are designed for structured output prediction problems that can have exponentially large output spaces \cite{Joachims05}. Given a set of $n$ labeled data points, $\ell^\struct_\preck(\cdot)$ is defined as
\begin{align}
\max_{\substack{\hy\in\bc{0,1}^n\\\norm{\hy}_1 = k}}\bc{\Delta(\y,\hy) + \sum_{i=1}^n\br{\hy_i-\y_i}s_i}.
\end{align}
The above surrogate penalizes a scoring function if there exists a set of $k$ points with large scores (i.e. the second term is large) which are actually negatives (i.e. the first term is large). However, since the candidate labeling $\hy$ is restricted to labeling just $k$ points as positive whereas the true label vector $\y$ has $n_+$ positives, in cases where $n_+ > k$, a non-optimal candidate labeling $\hy$ can exploit the remaining $n_+-k$ labels to hide the high scoring negative points, thus confusing the surrogate function. This indicates that this surrogate may not be an upper bound to \preck. We refer the reader to Appendix~\ref{sec:old-surrogate} for an explicit example where, not only does this surrogate not upper bound \preck, but more importantly, minimizing $\ell^\struct_\preck(\cdot)$ does not produce a model that is optimal for \preck, even in separable settings where all positives points are separated from negatives by a margin.

In the sequel, we shall propose three surrogates, all of which are consistent with \preck under various noise/margin conditions. The surrogates, as well as the noise conditions, will be shown to form a hierarchy.

\subsection{The Ramp Surrogate $\ell^\ramp_\preck(\cdot)$}
The key to maximizing \preck in a bipartite ranking setting is to select a subset of $k$ relevant items and rank them at the top $k$ positions. This can happen iff the \emph{top ranked} $k$ relevant items are not outranked by any irrelevant item. Thus, a surrogate must penalize a scoring function that assigns scores to irrelevant items that are higher than those of the top ranked relevant items. Our {\em ramp} surrogate $\ell^\ramp_\preck(s)$ implicitly encodes this strategy:
\begin{equation}
\label{eq:ramp_surr}
\max_{\norm{\hy}_1=k}\bc{\Delta(\y,\hy) + \sum_{i=1}^n\hy_is_i} - \underbrace{\max_{\substack{\norm{\ty}_1 = k\\K(\y,\tilde\y) = k}}\sum_{i=1}^n\tilde\y_is_i}_{(P)}.
\end{equation}
The term $(P)$ contains the sum of scores of the $k$ highest scoring positives. Note that $\ell^\ramp_\preck(\cdot)$ is similar to the ``ramp'' losses for binary classification \cite{DoLTCS08}. We now show that $\ell^\ramp_\preck(\cdot)$ is indeed an upper bounding surrogate for \preck.
\begin{clm}
\label{clm:upper-bd-tight}
For any $k \leq n_+$ and scoring function $s$, we have $\ell^\ramp_\preck(s) \geq \preck(s)$. Moreover, if $\ell^\ramp_\preck(s) \leq \xi$ for a given scoring function $s$, then there necessarily exists a set $S \subset [n]$ of size at most $k$ such that for all $\|\hy\|_1 = k$, we have $\sum_{i \in S}s_i \geq \sum_{i=1}^n\hy_is_i + \Delta(\y,\hy) - \xi.$
\end{clm}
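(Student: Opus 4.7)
The plan is to unpack the definition of $\ell^\ramp_\preck(s)$ and exploit the fact that both terms are expressed as maxima over labelings; the first can be lower-bounded by a specific choice, while the second has an explicit combinatorial meaning (sum of top-$k$ scores \emph{among positives}).

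For the upper-bound part, I would first plug in the particular labeling $\hy = \y^{(s,k)}$ (the indicator of the top-$k$ ranked points) into the first maximum. Since $\|\y^{(s,k)}\|_1 = k$, this is a feasible choice, and by the identity $\Delta(\y,\y^{(s,k)}) = \preck(s)$ recorded in Section~\ref{sec:formulation}, it yields
\[
\max_{\|\hy\|_1=k}\bc{\Delta(\y,\hy) + \sum_{i=1}^n \hy_i s_i} \;\geq\; \preck(s) + \sum_{i=1}^n \y^{(s,k)}_i s_i.
\]
The second sum is simply the sum of the $k$ largest scores $s_i$ overall. Next I would observe that the maximum in $(P)$, being the sum of the $k$ largest scores among positives only (this is where the hypothesis $k \leq n_+$ is used, so that the feasible set $\{\|\ty\|_1 = k,\, K(\y,\ty)=k\}$ is nonempty), is bounded above by the sum of the $k$ largest scores across \emph{all} points. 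Subtracting these two inequalities then gives $\ell^\ramp_\preck(s) \geq \preck(s)$.

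For the second assertion, I would let $\ty^\ast$ be a maximizer of $(P)$, and set $S = \{i : \ty^\ast_i = 1\}$, so $|S| = k$ and every element of $S$ is a positive. Then the hypothesis $\ell^\ramp_\preck(s) \leq \xi$ rearranges to
\[
\max_{\|\hy\|_1=k}\bc{\Delta(\y,\hy) + \sum_{i=1}^n \hy_i s_i} \;\leq\; \sum_{i \in S} s_i + \xi,
\]
and dropping the max on the left gives the desired inequality for every $\hy$ with $\|\hy\|_1 = k$.

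There is no real obstacle here; the whole argument is a careful unpacking of the two maxima. The only subtle point is the comparison between ``top-$k$ scores overall'' (coming from evaluating the first maximum at $\y^{(s,k)}$) and ``top-$k$ scores among positives'' (the value of $(P)$): the former dominates because positives form a subset of all points, and this is what makes the subtraction in \eqref{eq:ramp_surr} nonnegative at $\hy = \y^{(s,k)}$. I would flag this as the one step that deserves an explicit sentence in the writeup, as it is the structural reason $\ell^\ramp_\preck$ succeeds at upper bounding \preck where $\ell^\struct_\preck$ fails.
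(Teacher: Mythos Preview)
Your proposal is correct and follows essentially the same route as the paper's proof: plug in $\hy=\y^{(s,k)}$ for the first maximum, compare the resulting top-$k$ overall score sum with the top-$k$ positives score sum $(P)$ (the paper phrases this as the unconstrained maximum dominating the constrained one), and for the second part take $S$ to be the support of the maximizer of $(P)$, i.e.\ the $k$ highest-scoring positives. The only cosmetic difference is that the paper writes the comparison as a short chain of (in)equalities rather than in prose.
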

Proofs for this section are deferred to Appendix~\ref{app:prf_surrogate}. We can show that this surrogate is conditionally consistent as well. To do so, we introduce the notion of {\em weak $(k,\gamma)$-margin}.
\begin{defn}[\emph{Weak} $(k,\gamma)$-margin]
\label{defn:weak-k-margin}
A set of $n$ labeled data points satisfies the \emph{weak} $(k,\gamma)$-margin condition if for some scoring function $s$ and set $S_+ \subseteq \vecX_+$ of size $k$,
\[
\min_{i \in S_+} s_i - \max_{j : \y_j = 0} s_j \geq \gamma.
\]
Moreover, we say that the function $s$ \emph{realizes} this margin. We abbreviate the \emph{weak} $(k,1)$-margin condition as simply the \emph{weak} $k$-margin condition.
\end{defn}
Clearly, a dataset has a {\em weak} $(k,\gamma)$-margin iff there exist some $k$ positive points that substantially outrank all negatives. Note that this notion of margin is strictly weaker than the standard notion of margin for binary classification as it allows all but those $k$ positives to be completely mingled with the negatives. Moreover, this seems to be one of the most natural notions of margin for \preck. The following lemma establishes that $\ell^\ramp_\preck(\cdot)$ is indeed consistent w.r.t. \preck under the {\em weak} $k$-margin condition.
\begin{clm}
\label{clm:consistent-weak}
For any scoring function $s$ that realizes the \emph{weak} $k$-margin over a dataset, $\ell^\ramp_\preck(s) = \preck(s) = 0$.
\end{clm}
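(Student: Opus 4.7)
\textbf{Proof plan for Claim~\ref{clm:consistent-weak}.}

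My plan is to prove the two equalities $\ell^\ramp_\preck(s) = 0$ and $\preck(s) = 0$ separately, starting with the easier one. For $\preck(s) = 0$, I would invoke the weak $k$-margin directly: there is a set $S_+ \subseteq \vecX_+$ of $k$ positives, each of whose score exceeds every negative's score by at least $1 > 0$. Consequently the top $k$ entries of the sorted list under $s$ cannot contain any negative, since every negative is strictly beaten by each of the $k$ points in $S_+$. By the definition of $\preck$ in \eqref{eq:preck-def}, this immediately gives $\preck(s) = 0$.

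For $\ell^\ramp_\preck(s) = 0$, I would analyze the two terms in \eqref{eq:ramp_surr}. The subtracted term $(P)$ is, by definition, the sum of the top-$k$ scores among the positively labeled points, and this quantity is well-defined since the margin condition ensures $n_+ \geq |S_+| = k$. The core of the proof is to show that the first $\max$ equals $(P)$ as well. For this I would use an exchange argument: pick any $\hat\y$ achieving the first $\max$, and suppose it selects $m \geq 1$ negatives and $k - m$ positives. Since $|S_+| = k$ and $\hat\y$ selects at most $k - m$ elements of $S_+$, there are at least $m$ elements of $S_+$ that $\hat\y$ does \emph{not} select. Swap any selected negative (score $s^-$) with any such unselected element of $S_+$ (score $s^+$); the change in the objective $\Delta(\y,\hat\y) + \sum_i \hat\y_i s_i$ is exactly $s^+ - s^- - 1$, which is nonnegative by the \emph{weak} $k$-margin. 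Iterating this swap $m$ times produces a new $\hat\y$ supported entirely on positives with the same or larger objective, hence $\Delta(\y,\hat\y) = 0$ and the first $\max$ is bounded above by $\max_{\norm{\hat\y}_1 = k,\ K(\y,\hat\y) = k} \sum_i \hat\y_i s_i = (P)$. The reverse inequality is immediate since any $\tilde\y$ feasible for $(P)$ is also feasible for the first $\max$ with $\Delta(\y,\tilde\y) = 0$. Therefore both terms equal $(P)$ and $\ell^\ramp_\preck(s) = 0$.

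The main technical delicacy is the bookkeeping of the exchange argument, specifically verifying that the supply of unselected $S_+$ points is never exhausted during the $m$ swaps. This follows from the counting above, but requires observing that after each swap the selected set gains one $S_+$ point and loses a negative, so the number of unselected $S_+$ points decreases by exactly one while the number of selected negatives also decreases by one, preserving the invariant (unselected $S_+$) $\geq$ (selected negatives) at each step. Everything else reduces to comparing the two maxima term-by-term, which is routine.
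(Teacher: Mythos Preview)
Your proposal is correct and rests on the same key inequality as the paper: for any negative $j$ and any $i\in S_+$, the weak $k$-margin gives $s_i - s_j \geq 1$, so trading a selected negative for an unselected $S_+$ point never decreases $\Delta(\y,\hat\y)+\sum_i\hat\y_is_i$. The paper packages this slightly differently: rather than iterating swaps, it fixes an arbitrary $\hat\y$ with $K(\y,\hat\y)=k'<k$, upper-bounds the $k'$ true-positive scores by the sum over the top $k'$ positives, upper-bounds the $k-k'$ false-positive scores by the sum over the next $k-k'$ top positives minus $(k-k')$ via the margin, and observes that the resulting bound is exactly $(P)$, so the bracketed term minus $(P)$ is $\leq 0$; the conclusion $\preck(s)=0$ then falls out of the upper-bounding property (Claim~\ref{clm:upper-bd-tight}) rather than from a separate argument. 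Your exchange argument is a cleaner, more constructive version of the same bound and the bookkeeping you flag (unselected $S_+$ points never run out) is handled correctly; the paper's static case split avoids that bookkeeping at the cost of introducing the auxiliary partition $S^\ast=S_1^\ast\cup S_2^\ast$. Either route is fine.
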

This suggests that $\ell^\ramp_\preck(\cdot)$ is not only a tight surrogate, but tight at the optimal scoring function, i.e. $\preck(s) = 0$; this along with upper bounding property implies consistency. However, it is also a non-convex function due to the term $(P)$. To obtain convex surrogates, we perform relaxations on this term by first rewriting it as follows:
\begin{equation}
\label{eq:pq}
(P) = \sum_{i=1}^n\y_is_i - \underbrace{\min_{\substack{\tilde\y \preceq \y \\ \norm{\tilde\y}_1 = n_+ - k}}\sum_{i=1}^n\tilde\y_is_i}_{(Q)},
\end{equation}
where $\tilde\y \preceq \y$ implies that $\y_i = 0 \Rightarrow \tilde\y_i = 0$. Thus, to convexify the surrogate $\ell^\ramp_\preck(\cdot)$, we need to design a convex upper bound on $(Q)$. Notice that the term $(Q)$ contains the sum of the scores of the $n_+ - k$ lowest ranked positive data points. This can be readily upper bounded in several ways which give us different surrogate functions.

\subsection{The Max Surrogate $\ell^\mx_\preck(\cdot)$}
An immediate convex upper bound on $(Q)$ is obtained by replacing the sum of scores of the $n_+ - k$ lowest ranked positives with those of the highest ranked ones as follows: $(Q) \leq \max_{\substack{\tilde\y \preceq (1-\hy)\cdot\y \\ \norm{\tilde\y}_1 = n_+ - k}}\sum_{i=1}^n\tilde\y_is_i$, which gives us the $\ell^\mx_\preck(s)$ surrogate defined below: 
\vskip-3ex
\small
\begin{equation}
\label{eq:max_surr}
\max_{\norm{\hy}_1=k}\bc{\Delta(\y,\hy) + \sum_{i=1}^n(\hy_i - \y_i)s_i+ \max_{\substack{\tilde\y \preceq (1-\hy)\cdot\y \\ \norm{\tilde\y}_1 = n_+ - k}}\sum_{i=1}^n\tilde\y_is_i}.
\end{equation}
\normalsize
The above surrogate, being a point-wise maximum over convex functions, is convex, as well as an upper bound on $\preck(s)$ since it upper bounds $\ell^\ramp_\preck(s)$. This surrogate can also be shown to be consistent w.r.t. \preck under the {\em strong} $\gamma$-margin condition defined below for $\gamma = 1$.
\begin{defn}[\emph{Strong} $\gamma$-margin]
\label{defn:strong-k-margin}
A set of $n$ labeled data points satisfies the $\gamma$-\emph{strong} margin condition if for some scoring function $s$, $\min_{i : \y_i = 1} s_i - \max_{j : \y_j = 0} s_j \geq \gamma$.
\end{defn}
We notice that the strong margin condition is actually the standard notion of binary classification margin and hence much stronger than the {\em weak} $(k,\gamma)$-margin condition. It also does not incorporate any elements of the \preck problem. This leads us to look for tighter convex relaxations to the term (Q) that we do below.

\subsection{The Avg Surrogate $\ell^\avg_\preck(\cdot)$}
A tighter upper bound on $(Q)$ can be obtained by replacing $(Q)$ by the average score of the false negatives.
Define $C(\hy) = \frac{n_+ - K(\y,\hy)}{n_+ - k}$ and consider the relaxation $(Q)\leq \frac{1}{C(\hy)}\sum_{i=1}^n(1 - \hy_i)\y_is_i$. Combining this  with \eqref{eq:ramp_surr}, we get a new convex surrogate $\ell^\avg_\preck(s)$ defined as:
\vskip-3ex
\small
\begin{equation}
\label{eq:avg_surr}
\max_{\norm{\hy}_1=k}\bc{\Delta(\y,\hy) + \sum_{i=1}^ns_i(\hy_i-\y_i) + \frac{1}{C(\hy)}\sum_{i=1}^n(1 - \hy_i)\y_is_i}.
\end{equation}
\normalsize
We refer the reader to Appendix~\ref{app:upperbd} for a proof that $\ell^\avg_\preck(\cdot)$ is an upper bounding surrogate. It is notable that for $k = n_+$ (i.e. for the PRBEP measure), the surrogate $\ell^\avg_\preck(\cdot)$ recovers Joachims' original surrogate $\ell^\struct_\preck(\cdot)$. To establish conditional consistency of this surrogate, consider the following notion of margin:
\begin{defn}[$(k,\gamma)$-margin]
\label{defn:k-margin}
A set of $n$ labeled data points satisfies the $(k,\gamma)$-margin condition if for some scoring function $s$, we have, for all sets $S_+ \subseteq \vecX_+$ of size $n_+ - k + 1$,
\[
\frac{1}{n_+ - k + 1}\sum_{i \in S_+} s_i - \max_{j : \y_j = 0} s_j \geq \gamma.
\]
Moreover, we say that the function $s$ \emph{realizes} this margin. We abbreviate the $(k,1)$-margin condition as simply the $k$-margin condition.
\end{defn}
We can now establish the consistency of $\ell^\avg_\preck(\cdot)$ under the $k$-margin condition. See Appendix~\ref{app:proof-clm-consistent} for a proof.
\begin{clm}
\label{clm:consistent}
For any scoring function $s$ that realizes the $k$-margin over a dataset, $\ell^\avg_\preck(s) = \preck(s) = 0$.
\end{clm}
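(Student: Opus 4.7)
The plan is to establish (a) $\preck(s) = 0$ and (b) $\ell^\avg_\preck(s) \leq 0$; combined with the upper-bounding property of $\ell^\avg_\preck$ noted immediately after \eqref{eq:avg_surr}, this yields $\ell^\avg_\preck(s) = \preck(s) = 0$.

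For (a), let $m = \max_{j : \y_j = 0} s_j$ and sort the positive scores $p_1 \geq p_2 \geq \ldots \geq p_{n_+}$. Applying the $k$-margin to the $n_+ - k + 1$ lowest-ranked positives $\{p_k, p_{k+1}, \ldots, p_{n_+}\}$, their mean is at least $m + 1$; since the maximum of a finite set dominates its mean, $p_k \geq m + 1 > m$. Hence $p_1, \ldots, p_k$ all strictly outrank every negative, giving $\preck(s) = 0$.

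For (b), fix any $\hy$ with $\|\hy\|_1 = k$ and let $a = K(\y,\hy)$. Partition the indices into $A_0 = \{i : \hy_i = 1, \y_i = 0\}$ and $B_1 = \{i : \hy_i = 0, \y_i = 1\}$, so $|A_0| = k - a$, $|B_1| = n_+ - a$, $\Delta(\y,\hy) = k - a$, and $C(\hy) = (n_+ - a)/(n_+ - k)$. Routine substitution collapses the bracketed objective of \eqref{eq:avg_surr} to
\[
(k - a)\bigl[\,1 + \mu(A_0) - \mu(B_1)\,\bigr],
\]
where $\mu(\cdot)$ denotes the average score of its argument. For $a = k$ this is $0$; otherwise it suffices to show $\mu(B_1) \geq 1 + \mu(A_0)$.

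The two means are bounded separately. Since $A_0 \subseteq \vecX_-$, trivially $\mu(A_0) \leq m$. For $B_1 \subseteq \vecX_+$ with $|B_1| \geq n_+ - k + 1$, pick the $n_+ - k + 1$ lowest-scoring elements of $B_1$ and apply the $k$-margin: their mean is at least $m + 1$, so every remaining element of $B_1$ has score at least the maximum of this subset, which is itself at least its mean $\geq m + 1$. Therefore $\mu(B_1) \geq m + 1 \geq 1 + \mu(A_0)$, completing (b). The main obstacle is the algebraic collapse that isolates the single sign-determining quantity $1 + \mu(A_0) - \mu(B_1)$; once this form is in hand, the $k$-margin bites on exactly a subset of $B_1$ of the right cardinality and the remainder is elementary.
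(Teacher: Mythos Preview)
Your proof is correct and follows essentially the same approach as the paper: both reduce the objective for a fixed $\hy$ with $K(\y,\hy)=k'<k$ to $(k-k')\bigl[1+\mu(A_0)-\mu(B_1)\bigr]$ and then bound $\mu(A_0)\le m$ and $\mu(B_1)\ge m+1$ via the $k$-margin, concluding with the upper-bounding property. The only cosmetic differences are that the paper handles $\mu(B_1)\ge m+1$ via a separate monotonicity lemma (their Lemma~\ref{lem:rank-ineq}) rather than your inline argument, and the paper does not prove $\preck(s)=0$ separately but deduces it from $\ell^\avg_\preck(s)\le 0$ together with $\preck\ge 0$ and the upper bound.
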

We note that the $(k,\gamma)$-margin condition is strictly weaker than the \emph{strong} $\gamma$-margin condition (Definition~\ref{defn:strong-k-margin}) since it still allows a non negligible fraction of the positive points to be assigned a lower score than those assigned to negatives. On the other hand, the $(k,\gamma)$-margin condition is strictly stronger than the {\em weak} $(k,\gamma)$-margin condition (Definition~\ref{defn:weak-k-margin}). The weak $k$-margin condition only requires one set of $k$-positives to be separated from the negatives, whereas the above margin condition at least requires the average of {\em all} positives to be separated from the negatives.

\begin{figure}[t!]
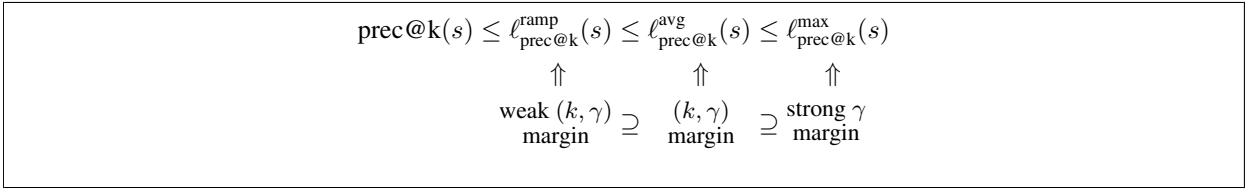

\begin{mdframed}
\vspace*{-3ex}
\begin{alignat*}{2}
	\preck(s) \leq \ell^\ramp_\preck(s) &\leq \ell^\avg_\preck(s) &&\leq \ell^\mx_\preck(s)\\
	\Uparrow \ \ \ \ \ \ \ & \ \ \ \ \ \ \ \ \ \ \ \Uparrow && \ \ \ \ \ \ \ \ \ \ \Uparrow\\
	\substack{\text{\small weak $(k,\gamma)$}\\\text{\small margin}} &\supseteq \ \ \ \substack{\text{\small $(k,\gamma)$}\\\text{\small margin}} &&\supseteq \substack{\text{\small strong $\gamma$}\\\text{\small margin}}
\end{alignat*}
\end{mdframed}
\caption{A hierarchy among the three surrogates for \preck and the corresponding margin conditions for conditional consistency.}%
\label{fig:surrogate-hierarchy}%
\end{figure}

As Figure~\ref{fig:surrogate-hierarchy} demonstrates, the three surrogates presented above, as well as their corresponding margin conditions, fall in a neat hierarchy. We will now use these surrogates to formulate two perceptron algorithms with mistake bounds with respect to these margin conditions.\


\section{Perceptron \& SGD Algorithms for \preck}
\label{sec:perceptron}
We now present perceptron-style algorithms for maximizing the \preck performance measure in bipartite ranking settings. Our algorithms work with a stream of binary labeled points and process them in \emph{mini-batches} of a predetermined size $b$. Mini-batch methods have recently gained popularity and have been used to optimize ranking loss functions such as $\ell^\struct_\preck(\cdot)$ as well \cite{KarN014}. It is useful to note that the requirement for mini-batches goes away in ranking and multi-label classification settings, for our algorithms can be applied to individual data points in those settings (e.g. individual queries in ranking settings).

At every time instant $t$, our algorithms receive a batch of $b$ points $\vecX_t = \bs{\x_t^1,\ldots,\x_t^b}$ and rank these points using the existing model. Let $\Delta_t$ denote the \preck loss (equation~\ref{eq:preck-def}) at time $t$. If $\Delta_t = 0$ i.e. all top $k$ ranks are occupied by positive points, then the model is not updated. Otherwise, the model is updated using the false positives and negatives. For sake of simplicity, we will only look at linear models in this paper. Depending on the kind of updates we make, we get two variants of the perceptron rule for \preck.

Our first algorithm, \perckavg, updates the model using a combination of all the false positives and negatives (see Algorithm~\ref{algo:ptronATk}). The effect of the update is a very natural one -- it explicitly boosts the scores of the positive points that failed to reach the top ranks, and attenuates the scores of the negative points that got very high scores. It is interesting to note that in the limiting case of $k = 1$ and unit batch length (i.e. $b=1$), the \perckavg update reduces to that of the standard perceptron algorithm \cite{rosenblatt58a,Minsky88} for the choice $\hy_t = \text{sign}(s_t)$. Thus, our algorithm can be seen as a natural extension of the classical perceptron algorithm.

\begin{algorithm}[t]
	\caption{\small {\sc \perckavg}}
	\label{algo:ptronATk}
	\begin{algorithmic}[1]
		\small{
			\REQUIRE{Batch length $b$}
			\STATE $\w^0 \< \vz, t \< 0$
			\WHILE{stream not exhausted}
				\STATE $t \< t+1$
				\STATE Receive $b$ data points $\vecX_t = \bs{\x_t^1,\ldots,\x_t^b}$, $\y_t \in \{0,1\}^b$
				\STATE Calculate $s_t = \w^{t-1}\vecX_t$ and let $\hy_t = \y^{(s_t,k)}$
				\STATE $\Delta_t \< \Delta(\y_t,\hy_t)$
				\IF{$\Delta_t = 0$}
					\STATE $\w^t \< \w^{t-1}$
				\ELSE
					\STATE $D_t \< \frac{\Delta_t}{\norm{\y_t}_1 - K(\y_t,\hy_t)}$
					\STATE $\w^t \< \w^{t-1} - \sum_{i \in [b]} (1-\y_i)\hy_i\cdot\x_t^i$ \COMMENT{false positives}
					\STATE $\w^t \< \w^t + D_t\cdot\sum_{i \in [b]} (1-\hy_i)\y_i\cdot\x_t^i$ \COMMENT{false negatives}
				\ENDIF
			\ENDWHILE
			\STATE \textbf{return} {$\w^t$}
		}
	\end{algorithmic}
\end{algorithm}

\begin{algorithm}[t]
	\caption{\small {\sc \perckmax}}
	\label{algo:ptronATk-fast}
	\begin{algorithmic}[1]
		\small{
			\makeatletter\setcounter{ALC@line}{9}\makeatother
			\STATE \qquad$S_t \< \FN(s,\Delta_t)$
			\STATE \qquad$\w^t \< \w^{t-1} - \sum_{i \in [b]} (1-\y_i)\hy_i\cdot\x_t^i$ \COMMENT{false positives}
			\STATE \qquad$\w^t \< \w^t + \sum_{i \in S_t} \x_t^i$ \COMMENT{top ranked false negatives}
		}
	\end{algorithmic}
\end{algorithm}

\begin{algorithm}[t]
	\caption{\small {\sc \sgdk}}
	\label{algo:stoc-grad-preck}
	\begin{algorithmic}[1]
		\small{
			\REQUIRE{Batch length $b$, step lengths $\eta_t$, feasible set $\W$}
			\ENSURE{A model $\bar\w \in \W$}
			\STATE $\w^0 \< \vz, t \< 0$
			\WHILE{stream not exhausted}
				\STATE $t \< t+1$
				\STATE Receive $b$ data points $\vecX_t = \bs{\x_t^1,\ldots,\x_t^b}$, $\y_t \in \{0,1\}^b$
				\STATE Set $\g_t \in \partial_\w\ell^\avg_\preck(\w_{t-1};\vecX_t,\y_t)$\COMMENT{See Algorithm~\ref{algo:grad-calc}}
				\STATE $\w_t \< \Pi_\W\bs{\w_{t-1} - \eta_t\cdot\g_t}$\COMMENT{project onto set $\W$}
			\ENDWHILE
			\STATE \textbf{return} {$\bar\w = \frac{1}{t}\sum_{\tau=1}^t\w_\tau$}
		}
	\end{algorithmic}
\end{algorithm}

\begin{algorithm}[t]
	\caption{\small Subgradient calculation for $\ell^\avg_\preck(\cdot)$}
	\label{algo:grad-calc}
	\begin{algorithmic}[1]
		\small{
			\REQUIRE{A model $\w_{\text{in}}$, $n$ data points $\vecX, \y$, parameter $k$}
			\ENSURE{A subgradient $\g \in \partial_\w\ell^\avg_\preck(\w_{\text{in}};\vecX,\y)$}
			\STATE Sort pos. and neg. points separately in dec. order of scores assigned by $\w_{\text{in}}$ i.e. $s_1^+ \geq \ldots \geq s_{n_+}^+$ and $s_1^- \geq \ldots \geq s_{n_-}^-$
			\FOR{$k' = 0 \> k$}
				\STATE $D_{k'} \< \frac{k - k'}{n_+ - k'}$
				\STATE $\Delta_{k'} \< k -k' - D_{k'}\sum_{i=k'+1}^{n_+}s^+_i + \sum_{i=1}^{k-k'}s^-_i$
				\STATE $\g_{k'} \< \sum_{i=1}^{k-k'}\x^-_i - D_{k'}\sum_{i=k'+1}^{n_+}\x^+_i$
			\ENDFOR
			\STATE $k^\ast \< \arg\max_{k'} \Delta_{k'}$
			\STATE \textbf{return} {$\g_{k^\ast}$}
		}
	\end{algorithmic}
\end{algorithm}

The next lemma establishes that, similar to the classical perceptron \cite{novikoff1962convergence}, \perckavg also enjoys a mistake bound. Our mistake bound is stated in the most general agnostic setting with the hinge loss function replaced with our surrogate $\ell^\avg_\preck(s)$. All proofs in this section are deferred to Appendix~\ref{app:prf_perc}.
\begin{thm}
\label{thm:ptron-mistake}
Suppose $\norm{\x_t^i} \leq R$ for all $t,i$. Let $\Delta^C_T = \sum_{t=1}^T\Delta_t$ be the cumulative mistake value observed when Algorithm~\ref{algo:ptronATk} is executed for $T$ batches. Also, for any $\w$, let $\hat\L^\avg_T(\w) = \sum_{t=1}^T \ell^\avg_\preck(\w;\vecX_t,\y_t)$. Then we have
\[
\Delta^C_T \leq \min_{\w}\ {\br{\norm{\w}\cdot R\cdot\sqrt{4k} + \sqrt{\hat\L^\avg_T(\w)}}^2}.
\]
\end{thm}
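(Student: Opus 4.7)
My plan is to carry out a Novikoff-style two-sided analysis: lower bound $\ip{\w^T}{\w^\ast}$ by tracking its growth at each update, upper bound $\norm{\w^T}^2$ similarly, combine via Cauchy--Schwarz, and solve the resulting quadratic inequality in $\sqrt{\Delta^C_T}$. Throughout, let $\w^\ast$ denote any fixed comparator, $\v_t := \w^t - \w^{t-1}$ the update step, and $\mathrm{FP}_t$, $\mathrm{FN}_t$ the counts of false positives and false negatives at round $t$. Two identities will be used throughout: since $\norm{\hy_t}_1 = k$, we have $k = K(\y_t,\hy_t) + \Delta_t$ and $\mathrm{FN}_t = n_+(\y_t) - K(\y_t,\hy_t) = n_+(\y_t) - k + \Delta_t$, so in particular $D_t\cdot \mathrm{FN}_t = \Delta_t$.

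\textbf{Lower bounding $\ip{\w^T}{\w^\ast}$.} The key step is to show $\ip{\v_t}{\w^\ast} \geq \Delta_t - \ell^\avg_\preck(\w^\ast;\vecX_t,\y_t)$ for every update round. I will plug $\hy = \hy_t$ into the max defining $\ell^\avg_\preck(\w^\ast;\vecX_t,\y_t)$ and rewrite $\hy_i^t - \y_i^t = (1-\y_i^t)\hy_i^t - (1-\hy_i^t)\y_i^t$ to split the cross term cleanly. The resulting bound will match $\Delta_t - \ip{\v_t}{\w^\ast}$ provided $D_t = 1 - 1/C(\hy_t)$; this identity in turn reduces to $n_+(\y_t) - k - \mathrm{FN}_t = -\Delta_t$, which is immediate from the two identities above. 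Summing then yields $\ip{\w^T}{\w^\ast} \geq \Delta^C_T - \hat\L^\avg_T(\w^\ast)$.

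\textbf{Upper bounding $\norm{\w^T}^2$.} Expand $\norm{\w^t}^2 - \norm{\w^{t-1}}^2 = 2\ip{\w^{t-1}}{\v_t} + \norm{\v_t}^2$. The cross term is non-positive: each false positive lies in the top-$k$ of $\w^{t-1}$ while each false negative lies outside, so $\min_{\mathrm{FP}} \ip{\w^{t-1}}{\x_t^i} \geq \max_{\mathrm{FN}} \ip{\w^{t-1}}{\x_t^i}$; averaging and scaling by $D_t \cdot \mathrm{FN}_t = \mathrm{FP}_t = \Delta_t$ gives $-\!\sum_{\mathrm{FP}}\!s_t^i + D_t\sum_{\mathrm{FN}}\!s_t^i \leq 0$. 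For the squared norm, the triangle inequality combined with $\norm{\x_t^i}\leq R$ yields $\norm{\v_t} \leq (\Delta_t + D_t\cdot\mathrm{FN}_t)R = 2\Delta_t R$, so $\norm{\v_t}^2 \leq 4\Delta_t^2 R^2 \leq 4k\Delta_t R^2$ using $\Delta_t \leq k$. Telescoping then gives $\norm{\w^T}^2 \leq 4kR^2\Delta^C_T$.

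\textbf{Finishing.} By Cauchy--Schwarz, $\Delta^C_T - \hat\L^\avg_T(\w^\ast) \leq \ip{\w^T}{\w^\ast} \leq 2R\norm{\w^\ast}\sqrt{k\Delta^C_T}$. This is a quadratic in $x = \sqrt{\Delta^C_T}$; the quadratic formula together with $\sqrt{a+b} \leq \sqrt{a} + \sqrt{b}$ gives $\sqrt{\Delta^C_T} \leq \norm{\w^\ast}R\sqrt{4k} + \sqrt{\hat\L^\avg_T(\w^\ast)}$, and squaring and minimizing over $\w^\ast$ yields the claim. I expect the main conceptual obstacle to be the inner-product lower bound: one has to recognize that the specific reweighting $D_t = \Delta_t/\mathrm{FN}_t$ is \emph{exactly} the factor that makes the one-sided surrogate inequality collapse onto $\ip{\v_t}{\w^\ast}$, via the hidden identity $D_t = 1 - 1/C(\hy_t)$. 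The norm bound and quadratic-inequality step are routine adaptations of the classical Novikoff argument once the top-$k$ ordering of scores is exploited to neutralize the cross term.
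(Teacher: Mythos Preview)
Your proposal is correct and follows essentially the same approach as the paper: the paper proves the theorem via two lemmata that are exactly your two bounds (Lemma~\ref{lem:ptron-mistake-proof-part1} for $\norm{\w^t}^2 \leq \norm{\w^{t-1}}^2 + 4kR^2\Delta_t$ and Lemma~\ref{lem:ptron-mistake-proof-part2} for $P_t \geq P_{t-1} + \Delta_t - \ell^\avg_\preck(\w;\vecX_t,\y_t)$), combines them via Cauchy--Schwarz, and solves the same quadratic in $\sqrt{\Delta^C_T}$; the paper also uses the identity $D_t = 1 - 1/C(\hy_t)$ and the top-$k$ ordering argument precisely as you do. The only minor omission is that you should explicitly handle rounds with $\Delta_t = 0$ (where $\v_t = 0$ and the inequality $0 \geq -\ell^\avg_\preck(\w^\ast;\vecX_t,\y_t)$ follows from nonnegativity of the surrogate), and note that the case $\Delta^C_T \leq \hat\L^\avg_T(\w^\ast)$ is trivial before invoking the quadratic; the paper treats both of these as separate cases.
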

Similar to the classical perceptron mistake bound \cite{novikoff1962convergence}, the above bound can also be reduced to a simpler convergence bound in \emph{separable settings}.
\begin{cor}
Suppose a unit norm $\w^\ast$ exists such that the scoring function $s: \x \mapsto \x^\top\w^\ast$ realizes the $(k,\gamma)$-margin condition for all the batches, then Algorithm~\ref{algo:ptronATk} guarantees the mistake bound: $\ \ \Delta^C_T \leq \frac{4kR^2}{\gamma^2}.$
\end{cor}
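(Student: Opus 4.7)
The plan is to derive this as a direct specialization of Theorem~\ref{thm:ptron-mistake} by exhibiting a single comparator $\w$ whose cumulative surrogate loss is exactly zero. Since the theorem gives
\[
\Delta^C_T \leq \min_{\w}\ \br{\norm{\w}\cdot R\cdot\sqrt{4k} + \sqrt{\hat\L^\avg_T(\w)}}^2,
\]
the corollary will follow if I can find $\w$ with controlled norm such that $\ell^\avg_\preck(\w;\vecX_t,\y_t) = 0$ for every batch $t \in [T]$.

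The natural choice is $\w := \w^\ast/\gamma$. First I would verify that this scaled comparator realizes the $(k,1)$-margin, i.e.\ the $k$-margin in the sense of Definition~\ref{defn:k-margin}, on each batch. This is immediate from the definition: multiplying the scoring function by $1/\gamma$ scales both terms on the left-hand side of the margin inequality by $1/\gamma$, turning the lower bound of $\gamma$ into $1$ for every size-$(n_+-k+1)$ subset $S_+ \subseteq \vecX_+$. Hence the scoring function $\x \mapsto \x^\top(\w^\ast/\gamma)$ realizes the $k$-margin on each batch $\vecX_t$.

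Next I would invoke Claim~\ref{clm:consistent}, which states that any scoring function realizing the $k$-margin satisfies $\ell^\avg_\preck(s) = 0$. Applied batch-by-batch, this gives $\ell^\avg_\preck(\w^\ast/\gamma;\vecX_t,\y_t) = 0$ for all $t$, and therefore $\hat\L^\avg_T(\w^\ast/\gamma) = 0$. Plugging this comparator into the mistake bound of Theorem~\ref{thm:ptron-mistake} and using $\norm{\w^\ast/\gamma} = 1/\gamma$ yields
\[
\Delta^C_T \leq \br{\frac{1}{\gamma}\cdot R\cdot\sqrt{4k} + 0}^2 = \frac{4kR^2}{\gamma^2},
\]
as claimed. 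There is no real obstacle here: the only thing to be careful about is confirming that the $(k,\gamma)$-margin notion scales linearly in the scoring function (so that the rescaling to $\w^\ast/\gamma$ actually upgrades the margin from $\gamma$ to $1$), which is transparent from Definition~\ref{defn:k-margin}. Everything else is bookkeeping on top of the two ingredients already established in the paper.
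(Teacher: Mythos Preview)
Your proposal is correct and matches the paper's intended derivation: the paper does not spell out a separate proof for this corollary, but presents it as the immediate specialization of Theorem~\ref{thm:ptron-mistake} to the separable case, which is exactly what you do by plugging in the comparator $\w^\ast/\gamma$, invoking Claim~\ref{clm:consistent} batch-wise to zero out $\hat\L^\avg_T$, and reading off the bound.
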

The above result assures that, as datasets become ``easier'' in the sense that their $(k,\gamma)$-margin becomes larger, \perckavg will converge to an optimal hyperplane at a faster rate. It is important to note there that the $(k,\gamma)$-margin condition is strictly weaker than the standard classification margin condition. Hence for several datasets, \perckavg might be able to find a perfect ranking while at the same time, it might be impossible for standard binary classification techniques to find any reasonable classifier in poly-time \cite{GuruswamiR09}. 

We note that \perckavg performs updates with all the false negatives in the mini-batches. This raises the question as to whether sparser updates are possible as such updates would be slightly faster as well as, in high dimensional settings, ensure that the model is sparser. To this end we design the \perckmax algorithm (Algorithm~\ref{algo:ptronATk-fast}). \perckmax differs from \perckavg in that it performs updates using only a few of the {\em top ranked} false negatives. More specifically, for any scoring function $s$ and $m > 0$, define:
\[
\FN(s,m) = \mathop{\arg\max}_{{S \subset \vecX_t^+, |S|=m}}\ \sum_{i \in S}\br{1-\y^{(s,k)}_i}\y_is_i
\]
as the set of the $m$ top ranked false negatives. \perckmax makes updates only for false positives in the set $\FN(s,\Delta_t)$. Note that $\Delta_t$ can significantly smaller than the total number of false negatives if $k \ll n_+$. \perckmax also enjoys a mistake bound but with respect to the $\ell^{\text{max}}_{\preck}(\cdot)$ surrogate.
\begin{thm}
\label{thm:ptron-fast-mistake}
Suppose $\norm{\x_t^i} \leq R$ for all $t,i$. Let $\Delta^C_T = \sum_{t=1}^T\Delta_t$ be the cumulative observed mistake value when Algorithm~\ref{algo:ptronATk-fast} is executed for $T$ batches. Also, for any $\w$, let $\hat\L^{\mx}_T(\w) = \sum_{t=1}^T \ell^{\mx}_\preck(\w;\vecX_t,\y_t)$. Then we have
\[
\Delta^C_T \leq \min_{\w}\ {\br{\norm{\w}\cdot R\cdot\sqrt{4k} + \sqrt{\hat\L^{\mx}_T(\w)}}^2}.
\]
\end{thm}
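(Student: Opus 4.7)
The plan is to follow Novikoff's classical perceptron analysis and track $\|\w^t\|^2$ and $\ip{\w^t}{\w^\ast}$ across rounds for an arbitrary reference $\w^\ast$, then combine the two telescoping bounds via Cauchy--Schwarz. Write $\text{FP}_t := \{i : \hy_{t,i}=1,\ \y_{t,i}=0\}$ for the false positives in batch $t$ and $\vecv_t := \sum_{i\in S_t}\x_t^i - \sum_{i\in\text{FP}_t}\x_t^i$ for the net update, so that $\w^t = \w^{t-1}+\vecv_t$ (with $\vecv_t=\vz$ on rounds with $\Delta_t=0$).

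For the norm growth, expand $\|\w^t\|^2 - \|\w^{t-1}\|^2 = 2\ip{\w^{t-1}}{\vecv_t} + \|\vecv_t\|^2$. The cross term is non-positive: $\text{FP}_t$ and $S_t$ both have size $\Delta_t$, but by definition of $\hy_t$ every element of $\text{FP}_t$ lies in the top-$k$ of the score list $s_t = \w^{t-1}\vecX_t$ while every element of $S_t$ lies outside it, so $\sum_{i\in\text{FP}_t} s_{t,i} \geq \sum_{i\in S_t} s_{t,i}$. The quadratic term satisfies $\|\vecv_t\|^2 \leq (2\Delta_t R)^2 \leq 4kR^2\Delta_t$ by the triangle inequality and $\Delta_t\leq k$. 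Summing telescopically produces $\|\w^T\|^2 \leq 4kR^2\Delta^C_T$.

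The main obstacle is the per-round alignment inequality $\ip{\vecv_t}{\w^\ast} \geq \Delta_t - \ell^\mx_\preck(\w^\ast;\vecX_t,\y_t)$. To obtain it I would substitute the particular candidate $\hy = \hy_t$ into the definition \eqref{eq:max_surr}; writing $s^\ast := \w^\ast\vecX_t$ and $A_t$ for the false negatives of $\hy_t$, the outer terms reduce to $\Delta_t + \sum_{i\in\text{FP}_t}s^\ast_i - \sum_{i\in A_t}s^\ast_i$, and the inner maximum over $\tilde\y\preceq(1-\hy_t)\cdot\y_t$ of size $n_+-k$ picks the top $n_+-k$ elements of $A_t$ under $s^\ast$, equivalently omitting the $\Delta_t$ lowest-scoring ones. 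After cancellation this yields
\[
\ell^\mx_\preck(\w^\ast;\vecX_t,\y_t) \geq \Delta_t + \sum_{i\in\text{FP}_t}s^\ast_i - \sum_{i\in B_t}s^\ast_i,
\]
where $B_t\subseteq A_t$ is the bottom-$\Delta_t$ subset. The apparent mismatch --- $S_t$ is chosen using $\w^{t-1}$ rather than $\w^\ast$ --- is dispatched by a one-line observation: $S_t$ is itself a size-$\Delta_t$ subset of $A_t$, so $\sum_{i\in S_t}s^\ast_i \geq \sum_{i\in B_t}s^\ast_i$, and the per-round bound follows at once. Summing over $t$ (rounds with $\Delta_t=0$ are trivial since $\ell^\mx_\preck\geq 0$) yields $\ip{\w^T}{\w^\ast}\geq \Delta^C_T - \hat\L^{\mx}_T(\w^\ast)$.

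Combining the two pieces by Cauchy--Schwarz gives $\Delta^C_T - \hat\L^{\mx}_T(\w^\ast) \leq \|\w^T\|\,\|\w^\ast\| \leq 2R\|\w^\ast\|\sqrt{k\,\Delta^C_T}$. Treating this as a quadratic inequality in $\sqrt{\Delta^C_T}$ and using $\sqrt{a^2+b}\leq a+\sqrt{b}$ (valid for $a,b\geq 0$) to decouple the two contributions yields $\sqrt{\Delta^C_T} \leq 2R\|\w^\ast\|\sqrt{k} + \sqrt{\hat\L^{\mx}_T(\w^\ast)}$, which on squaring and minimising over $\w^\ast$ is exactly the stated bound. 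The only non-routine step is the surrogate comparison in the third paragraph; the rest follows the Novikoff template essentially verbatim.
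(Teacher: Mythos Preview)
Your proposal is correct and follows essentially the same Novikoff-style template as the paper: a norm-growth bound $\|\w^T\|^2 \leq 4kR^2\Delta^C_T$, a per-round alignment bound $\ip{\w^t-\w^{t-1}}{\w^\ast} \geq \Delta_t - \ell^\mx_\preck(\w^\ast;\vecX_t,\y_t)$, and the quadratic/Cauchy--Schwarz combination. The only cosmetic difference is in the alignment step: the paper observes directly that $A_t\setminus S_t$ is a feasible $\tilde\y$ of size $n_+-k$ and hence its $s^\ast$-sum is dominated by the inner maximum, whereas you compute that maximum explicitly as $\sum_{A_t}s^\ast-\sum_{B_t}s^\ast$ and then invoke $\sum_{S_t}s^\ast\geq\sum_{B_t}s^\ast$; these are the same inequality read in opposite directions.
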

Similar to \perckavg, we can give a simplified mistake bound in situations where the separability condition specified by Definition~\ref{defn:strong-k-margin} is satisfied.
\begin{cor}
Suppose a unit norm $\w^\ast$ exists such that the scoring function $s: \x \mapsto \x^\top\w^\ast$ realizes the \emph{strong} $\gamma$-margin condition for all the batches, then Algorithm~\ref{algo:ptronATk-fast} guarantees the mistake bound: $\ \ \Delta^C_T \leq \frac{4kR^2}{\gamma^2}.$
\end{cor}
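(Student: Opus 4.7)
The plan is to derive this corollary as a direct specialization of Theorem~\ref{thm:ptron-mistake} (the agnostic mistake bound for \perckmax). The key observation, already flagged in the hierarchy of Figure~\ref{fig:surrogate-hierarchy}, is that the \emph{strong} $\gamma$-margin condition forces the max surrogate $\ell^{\mx}_\preck$ to vanish at a suitable witness $\w$. Once we have this vanishing, the agnostic bound collapses to the stated closed-form expression.

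First, I would reduce to the unit-margin case by rescaling: if a unit-norm $\w^\ast$ realizes the strong $\gamma$-margin, then $\w := \w^\ast/\gamma$ realizes the strong $1$-margin (i.e., $\min_{i:\y_i=1} \w^\top\x - \max_{j:\y_j=0} \w^\top\x \geq 1$ on every batch). Note $\norm{\w} = 1/\gamma$. The next step is to verify that $\ell^{\mx}_\preck(\w;\vecX_t,\y_t) = 0$ on every batch. Since $\ell^{\mx}_\preck \geq \ell^{\ramp}_\preck \geq \preck \geq 0$, it suffices to show the max surrogate is $\leq 0$ at $\w$. Unpacking the definition in~\eqref{eq:max_surr}, for any candidate $\hy$ with $\|\hy\|_1 = k$ having $p := K(\y,\hy)$ true positives, one can rewrite the inner expression (after cancelling the $\sum_{i\in A\cap \text{pos}} s_i$ terms between $\sum_{i\in A} s_i$ and $\sum_{i:y_i=1}s_i$) as
\[
(k-p) + \sum_{i\in A\cap \text{neg}} s_i - \sum_{i \in B \setminus \tilde A} s_i,
\]
where $A = \{i: \hy_i=1\}$, $B = \{i: \y_i=1, \hy_i=0\}$, and $\tilde A \subseteq B$ is the $n_+-k$ highest-scoring positives in $B$. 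Both index sets $A\cap\text{neg}$ and $B\setminus \tilde A$ have size $k-p$, and under the strong $1$-margin each negative score is at least $1$ below each positive score, so the difference of the two sums is $\leq -(k-p)$. Hence the bracket is $\leq 0$ for every $\hy$, and the maximum is $0$, as required.

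Finally, plugging $\w = \w^\ast/\gamma$ and $\hat\L^{\mx}_T(\w) = 0$ into Theorem~\ref{thm:ptron-fast-mistake} yields
\[
\Delta^C_T \;\leq\; \bigl(\tfrac{1}{\gamma}\cdot R\cdot \sqrt{4k} + 0\bigr)^2 \;=\; \frac{4kR^2}{\gamma^2},
\]
which is the claimed bound.

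The proof is essentially a template instantiation, so there is no serious obstacle; the only non-trivial piece is the surrogate-vanishing computation. That step is what I would double-check most carefully, because the alignment of indices in the inner $\tilde\y$-maximization and the counting of how many negative/positive scores survive after cancellation is easy to miscount. Once the cardinalities $|A\cap\text{neg}| = |B\setminus\tilde A| = k-p$ are established, the margin condition gives the bound cleanly.
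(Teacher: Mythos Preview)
Your proposal is correct and follows the same route the paper (implicitly) takes: the corollary is obtained by plugging a rescaled witness into Theorem~\ref{thm:ptron-fast-mistake} after observing that the strong margin forces $\ell^{\mx}_\preck$ to vanish. The paper does not spell out the surrogate-vanishing computation (it only asserts the consistency of $\ell^{\mx}_\preck$ under the strong margin as part of the hierarchy in Figure~\ref{fig:surrogate-hierarchy}), so your explicit verification via the decomposition $(k-p) + \sum_{A\cap\text{neg}} s_i - \sum_{B\setminus\tilde A} s_i$ is in fact more detailed than what appears in the paper. One small slip: in your opening sentence you cite Theorem~\ref{thm:ptron-mistake} (the \perckavg bound) while describing it as the \perckmax bound; the correct reference, which you do use in the final step, is Theorem~\ref{thm:ptron-fast-mistake}.
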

As the \emph{strong} $\gamma$-margin condition is exactly the same as the standard notion of margin for binary classification, the above bound is no stronger than the one for the classical perceptron. However, in practice, we observe that \perckmax at times outperforms even \perckavg, even though the latter has a tighter mistake bound. This suggests that our analysis of \perckmax might not be optimal and fails to exploit latent structures that might be present in the data. 

\textbf{Stochastic Gradient Descent for Optimizing \preck.}\\
We now extend our algorithmic repertoire to include a stochastic gradient descent (SGD) algorithm for the \preck performance measure. SGD methods are known to be very successful at optimizing large-scale empirical risk minimization (ERM) problems as they require only a few passes over the data to achieve optimal statistical accuracy.

However, SGD methods typically require access to cheap gradient estimates which are difficult to obtain for non-additive performance measures such as \preck. This has been noticed before by several previous works \cite{KarN014, NarasimhanKJ15} who propose to use mini-batch methods to overcome this problem \cite{KarN014}. By combining the $\ell^\avg_\preck(\cdot)$ surrogate with mini-batch-style processing, we design \sgdk (Algorithm~\ref{algo:stoc-grad-preck}), a scalable SGD algorithm for optimizing \preck. The algorithm uses mini-batches to update the current model using gradient descent steps. The subgradient calculation for this surrogate turns out to be non-trivial and is detailed in Algorithm~\ref{algo:grad-calc}.

The task of analyzing this algorithm is made non-trivial by the fact that the gradient estimates available to \sgdk via Algorithm~\ref{algo:grad-calc} are far from being unbiased. The luxury of having unbiased gradient estimates is crucially exploited by standard SGD analyses but unfortunately, unavailable to us. To overcome this hurdle, we propose a uniform convergence based proof that, in some sense, bounds the bias in the gradient estimates.

In the following section, we present this, and many other generalization and online-to-batch conversion bounds with applications to our perceptron and SGD algorithms.
 

\section{Generalization Bounds}\label{sec:genbound}
In this section, we discuss novel uniform convergence (UC) bounds for our proposed surrogates. We will use these UC bounds along with the mistake bounds in Theorems~\ref{thm:ptron-mistake}~and~\ref{thm:ptron-fast-mistake} to prove two key results -- 1) online-to-batch conversion bounds for the \perckavg and \perckmax algorithms and, 2) a convergence guarantee for the \sgdk algorithm.

To better present our generalization and convergence bounds, we use normalized versions of \preck and the surrogates. To do so we write $k = \kappa\cdot n_+$ for some $\kappa \in (0,1]$ and define, for any scoring function $s$, its \preckappa loss as:
\[
\preckappa(s;\ \z_1,\ldots,\z_n) = \frac{1}{\kappa n_+}\Delta(\y,\y^{(s,\kappa n_+)}).
\]
We will also normalize the surrogate functions $\ell^\ramp_\preckappa(\cdot)$, $\ell^\mx_\preckappa(\cdot)$, and $\ell^\avg_\preckappa(\cdot)$ by dividing by $k =\kappa\cdot n_+$.
\begin{defn}[Uniform Convergence]
A performance measure $\Psi : \W \times (\X \times \bc{0,1})^n \mapsto \R_+$ exhibits uniform convergence with respect to a set of predictors $\W$ if for some $\alpha(b,\delta) = \text{poly}\br{\frac{1}{b},\log\frac{1}{\delta}}$, for a sample $\hz_1,\ldots,\hz_b$ of size $b$ chosen i.i.d. (or uniformly without replacement) from an arbitrary population $\z_1,\ldots,\z_n$, we have w.p. $1 - \delta$,
\[
\sup_{\w\in\W}\ \abs{\Psi(\w;\ \z_1,\ldots,\z_n) - \Psi(\w;\ \hz_1,\ldots,\hz_b)} \leq \alpha(b,\delta)
\]
\end{defn}
We now state our UC bounds for \preckappa and its surrogates. We refer the reader to Appendix~\ref{app:uc-bounds-surrogates} for proofs.
\begin{thm}
\label{thm:uc-preck-surrogates}
The loss function $\preckappa(\cdot)$, as well as the surrogates $\ell^\ramp_\preckappa(\cdot)$, $\ell^\avg_\preckappa(\cdot)$ and $\ell^\mx_\preckappa(\cdot)$, all exhibit uniform convergence at the rate $\alpha(b,\delta) = \O{\sqrt{\frac{1}{b}\log\frac{1}{\delta}}}$.
\end{thm}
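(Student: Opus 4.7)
The plan is to prove uniform convergence for each of the four measures via the same two-stage recipe: a McDiarmid bounded-differences concentration of the sup-deviation around its expectation, combined with a Rademacher-complexity control of that expectation. The normalization by $k = \kappa n_+$ is the crucial design choice that makes this work, because it brings the single-swap sensitivity of each measure down to $O(1/(\kappa b))$, which is exactly what McDiarmid needs in order to yield the advertised $\sqrt{(1/b)\log(1/\delta)}$ rate.

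The first step is to establish bounded differences. For the three surrogates, I would use the fact that each can be written in the schematic form
\[
\ell^\square_\preckappa(\w; \hat{\z}_{1:b}) \;=\; \frac{1}{k}\max_{\hy}\Bigl\{\, a(\hy,\y) + \sum_{i=1}^b c_i(\hy)\,\w^\top \hat{\x}_i \,\Bigr\},
\]
where $a(\hy,\y)$ is affine in $\y$ and $c_i(\hy)\in[-1,1]$ after absorbing the $C(\hy)$ normalization in the $\avg$ case. Swapping one sample point changes any fixed inner expression by at most $2R\norm{\w}$, and an envelope argument for maxima preserves this bound, so dividing by $k$ gives single-coordinate sensitivity $O(R\norm{\w}/(\kappa b))$. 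For $\preckappa$ itself, I would give a direct combinatorial argument: replacing one point in the sample alters the top-$k$ set by at most two items (the entering point may displace exactly one incumbent), so the unnormalized $\Delta(\y,\y^{(s,k)})$ changes by at most $2$, again giving sensitivity $O(1/(\kappa b))$. The random fluctuations of $\hat n_+$ under sampling add at most a constant to the numerator and denominator, which is handled by a standard telescoping.

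The second step applies McDiarmid (and its Bardenet--Maillard / Hoeffding--Serfling variant for the without-replacement case) to
\[
\Phi(\hat{\z}_{1:b}) \;=\; \sup_{\w\in\W}\bigl|\Psi(\w;\hat{\z}_{1:b}) - \Psi(\w;\z_{1:n})\bigr|,
\]
which immediately yields $\Phi \leq \E\Phi + O(\sqrt{(1/b)\log(1/\delta)})$ with probability $1-\delta$. The third step controls $\E\Phi$ by symmetrization plus Rademacher complexity. Using a ghost sample and Rademacher variables $\sigma_i$, the linearity of $s(\x) = \w^\top\x$ lets me pull $\w$ outside the Rademacher sum in each of the $\hy$-indexed inner expressions, bounding the Rademacher complexity by $O(R\norm{\W}/\sqrt{b})$ up to a $\sqrt{(1/b)\log\binom{b}{k}} = O(\sqrt{(1/b)\log(1/\kappa)})$ correction that arises from passing the Rademacher average through the outer max over $\hy$ (bounded via Massart's lemma after uniform bounds on the inner affine forms). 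For $\preckappa$ itself, I would appeal to Claim~\ref{clm:upper-bd-tight}-style sandwiching: $\preckappa$ is controlled above by $\ell^\ramp_\preckappa$ and below by an analogous sample-independent threshold-classifier proxy whose Rademacher complexity is governed by the VC-dimension of linear half-spaces, yielding the same rate.

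The main obstacle is Step one for $\ell^\avg_\preckappa$ and $\ell^\mx_\preckappa$, where the outer maximization over $\hy$ interacts with an inner normalization $C(\hy) = (n_+ - K(\y,\hy))/(n_+ - k)$ whose denominator shifts when the sampled positive count $\hat n_+$ changes by one. One has to verify that this multiplicative perturbation does not blow up the bounded-difference constant beyond $O(1/(\kappa b))$; the key is that $C(\hy)$ stays in a bounded range and the perturbation of $1/C(\hy)$ is itself $O(1/b)$ after accounting for the $(\kappa n_+)$ scaling. A second subtlety worth flagging is that the Rademacher-based bound on $\E\Phi$ must handle the exponentially large label set $\{\hy : \|\hy\|_1 = k\}$, but because each sample only produces $\binom{b}{k}$ realizable labelings and each summand is uniformly bounded, Massart's finite-class lemma contributes only a benign $\sqrt{\log(1/\kappa)}$ factor that is absorbed into the $O(\cdot)$.
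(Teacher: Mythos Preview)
Your bounded-differences step is largely plausible (with the caveat that the sensitivity is really $O(1/(\kappa\hat p\,b))$ with random $\hat p$, so a clean McDiarmid application needs a conditioning argument to keep $\hat n_+$ bounded away from zero). The genuine gap is Step~3, where you propose to control $\mathbb{E}\Phi$ by ghost-sample symmetrization and Rademacher complexity. Standard symmetrization replaces the population target by an independent copy via $\Psi(\w)=\mathbb{E}\hat\Psi(\w)$, but the surrogates here are \emph{non-decomposable}: each is a $\max$ over labelings $\hy$ of the whole sample, and $\mathbb{E}\bigl[\max_{\hy}(\cdots)\bigr]\neq\Psi(\w)$ in general. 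So there is a bias term $\sup_\w\abs{\mathbb{E}\hat\Psi(\w)-\Psi(\w)}$ that your outline never addresses. Even if you absorb the bias, the process you obtain after swapping $\hat\z_i\leftrightarrow\hat\z'_i$ is not of the additive form $\sup_\w\sum_i\sigma_i g(\w,\hat\z_i)$ because the outer $\max_{\hy}$ couples all $b$ coordinates; neither Talagrand contraction nor Massart's finite-class lemma applies in the way you sketch, since the maximizing $\hy$ is itself a function of $\w$ and of the sample, not one of $\binom{b}{k}$ fixed functions. Your sandwiching idea for $\preckappa$ has the same issue: upper- and lower-bounding by quantities that each converge uniformly does not yield uniform convergence of the sandwiched quantity unless the sandwich is tight.

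The paper bypasses Rademacher entirely. It isolates two structural lemmas: a ``sup of Lipschitz is Lipschitz'' statement, and a rank-based concentration result saying that for any fixed ordering and any fixed bounded $h$, the average of $h$ over the top $\kappa$-fraction of a sample is within $O(\sqrt{(1/b)\log(1/\delta)})$ of its population counterpart (proved by rewriting the top-fraction sum via a threshold indicator and applying Bernstein twice). Each surrogate is then broken into a few ``top-fraction average'' pieces; the rank lemma gives pointwise convergence for each piece, the Lipschitz lemma (in $\w$) plus an $L_\infty$ covering number over $\W$ upgrades this to uniform convergence. For $\preckappa$ itself the paper writes the loss through the data-dependent threshold $v_\w$ and invokes a VC-dimension argument over the class of affine-thresholded indicators. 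For $\ell^\avg_\preckappa$, the $C(\hy)$ normalization forces an extra layer that your plan does not capture: the candidate labelings are stratified by true-positive rate $\beta=K(\y,\hy)/n_+$, uniform convergence is proved at each fixed $\beta$, Lipschitz continuity in $\beta$ is established separately, and a union bound over the $O(b)$ achievable values of $\beta$ on the sample closes the argument. Your McDiarmid step could in principle replace the covering argument, but the core ideas---the threshold/rank reduction for pointwise convergence and the $\beta$-stratification for $\ell^\avg_\preckappa$---are what make the proof go through and are absent from your plan.
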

Recently, \cite{KarN014} also established a similar result for the $\ell^\struct_\preck(\cdot)$ surrogate. However, a very different proof technique is required to establish similar results for $\ell^\mx_\preckappa(\cdot)$ and $\ell^\avg_\preckappa(\cdot)$, partly necessitated by the terms in these surrogates which depend, in a complicated manner, on the positives predicted by the candidate labeling $\hy$. Nevertheless, the above results allow us to establish strong online-to-batch conversion bounds for \perckavg and \perckmax, as well as convergence rates for the \sgdk method. In the following we shall assume that our data streams are composed of points chosen i.i.d. (or u.w.r.) from some fixed population $\ZZ$.
\newcommand{\sgdmax}{SGD@k-max}
\begin{figure*}[t]
\centering\hspace*{-20pt}
\subfigure[PPI]{
\includegraphics[scale=0.51]{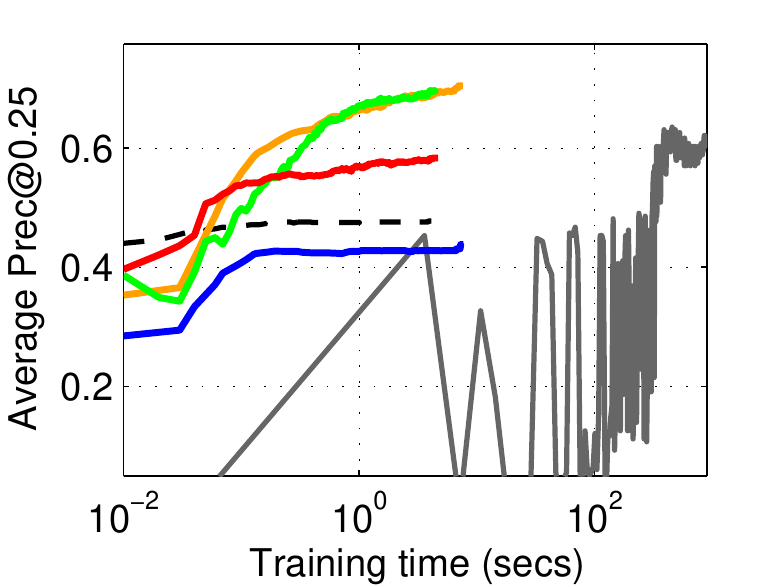}
\label{subfig:ppi}
}\hspace*{-10pt}
\subfigure[Letter]{
\includegraphics[scale=0.51]{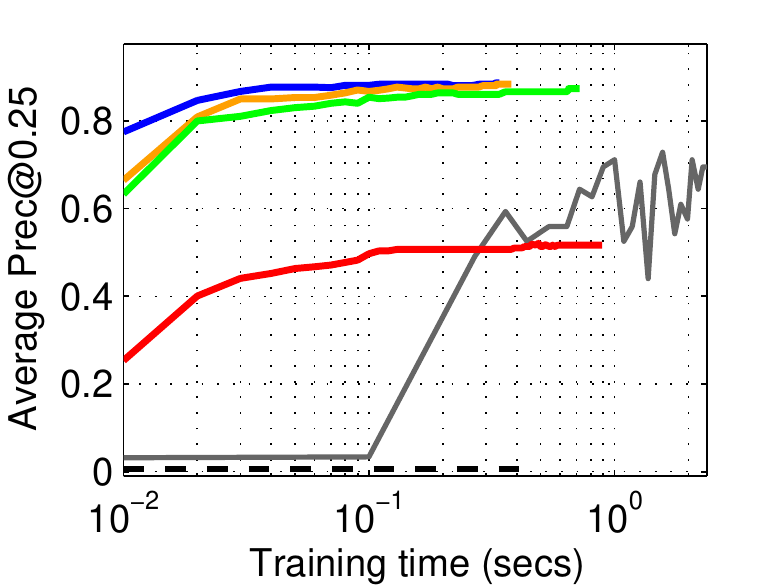}
\label{subfig:letter}
}\hspace*{-10pt}
\subfigure[Adult]{
\includegraphics[scale=0.51]{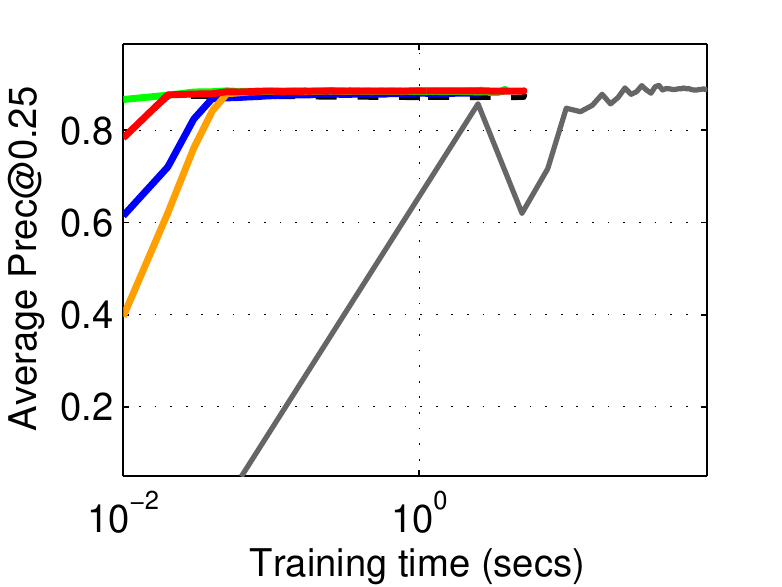}
\label{subfig:a9a}
}\hspace*{-10pt}
\subfigure[IJCNN]{
\includegraphics[scale=0.51]{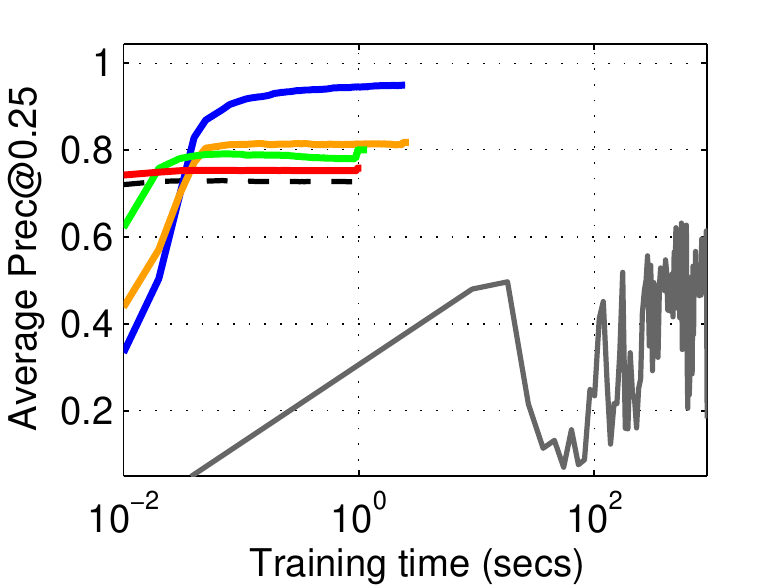}
\label{subfig:ijcnn1}
}\hspace*{-10pt}
\subfigure{
\includegraphics[scale=0.15]{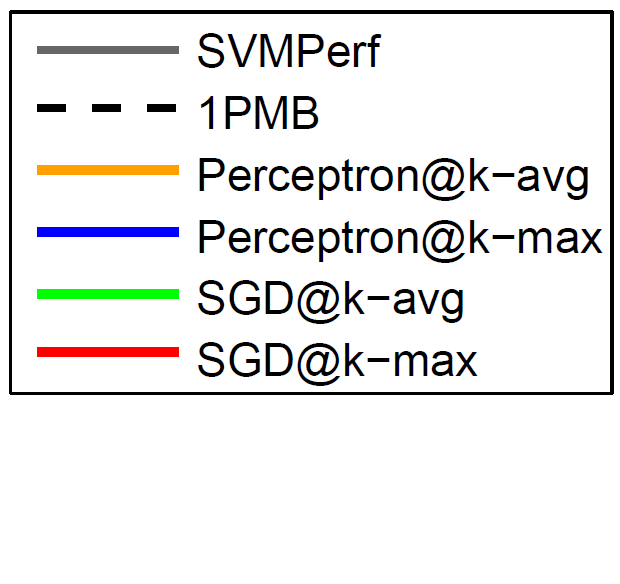}
}\vspace*{-5pt}
\caption{A comparison of the proposed perceptron and SGD based methods with baseline methods (SVMPerf and \spmb) on prec@0.25 maximization tasks. \perckavg and \sgdk (both based on $\ell^\avg_\preck(\cdot)$) are the most consistent methods across tasks.}
\label{fig:training-time}
\end{figure*}

\begin{figure*}[t]
\centering\hspace*{-15pt}
\subfigure[KDD08]{
\includegraphics[scale=0.4]{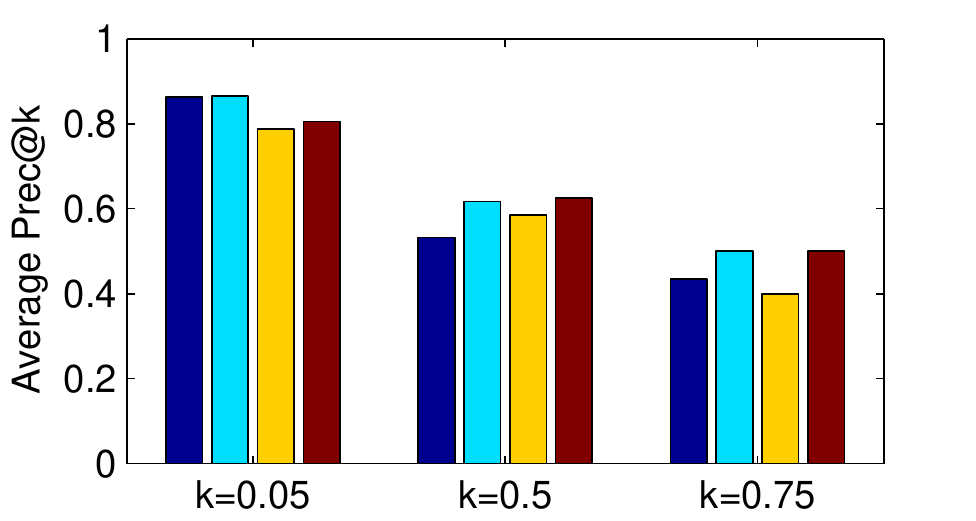}
\label{subfig:bar-kdd08}
}\hspace*{-15pt}
\subfigure[PPI]{
\includegraphics[scale=0.4]{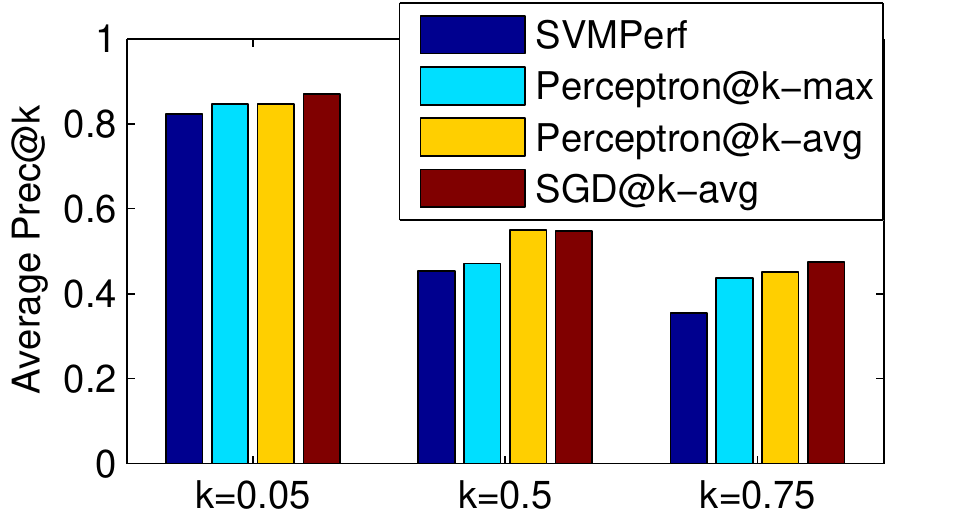}
\label{subfig:bar-ppi}
}\hspace*{-10pt}
\subfigure[KDD08]{
\includegraphics[scale=0.5]{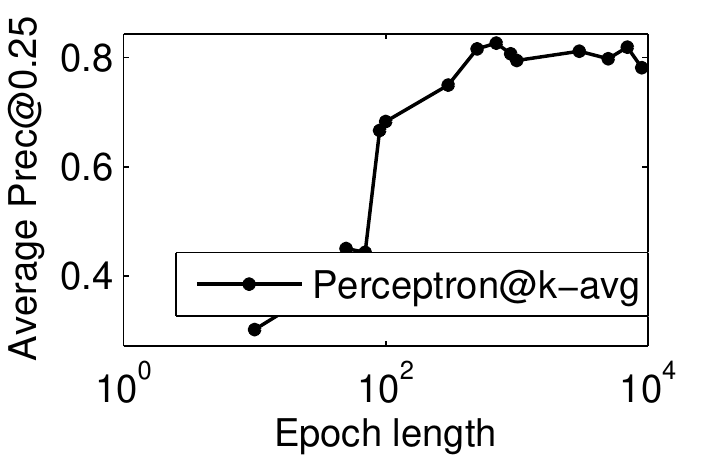}
\label{subfig:ppi-pavg}
}\hspace*{-10pt}
\subfigure[KDD08]{
\includegraphics[scale=0.5]{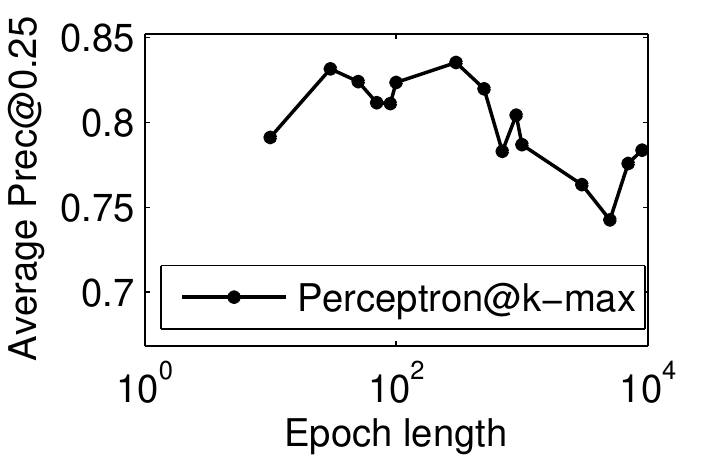}
\label{subfig:ppi-pmax}
}\hspace*{-10pt}
\subfigure[KDD08]{
\includegraphics[scale=0.5]{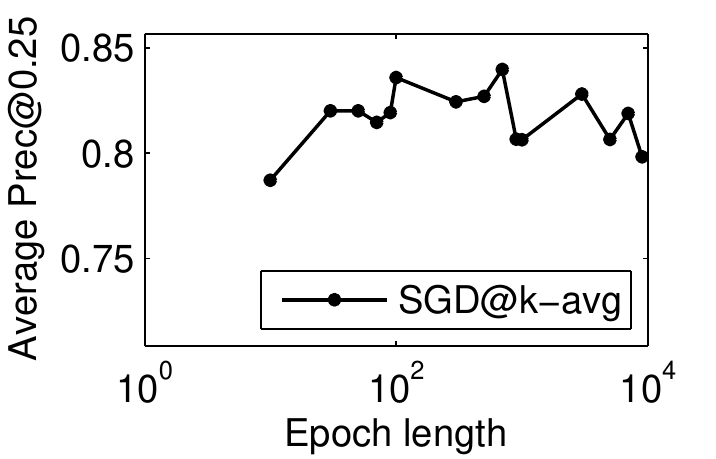}
\label{subfig:ppi-savg}
}\vspace*{-5pt}
\caption{(a), (b): A comparison of different methods on optimizing \preckappa for different values of $\kappa$. (c), (d), (e): The performance of the proposed perceptron and SGD methods on prec@0.25 maximization tasks with varying batch lengths $b$.}
\label{fig:beta}
\end{figure*}
\begin{thm}
Suppose an algorithm, when fed a random stream of data points, in $T$ batches of length $b$ each, generates an ensemble of models $\w_1,\ldots,\w_T$ which together suffer a cumulative mistake value of $\Delta_T^C$. Then, with probability at least $1 - \delta$, we have
\[
\frac{1}{T}\sum_{t=1}^T\preckappa(\w^t;\ZZ) \leq \frac{\Delta_T^C}{bT} + \O{\sqrt{\frac{1}{b}\log\frac{T}{\delta}}}.
\]
\end{thm}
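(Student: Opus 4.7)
The plan is to carry out a standard online-to-batch conversion, with the only non-standard aspect being that the per-step concentration has to come from Theorem~\ref{thm:uc-preck-surrogates} rather than a plain Hoeffding/Azuma argument. The latter is unavailable here because \preckappa is a non-additive, rank-based quantity: the empirical \preckappa on a batch is in general a \emph{biased} estimator of the population \preckappa (exactly the point flagged right before the \sgdk analysis), so cumulative-loss-to-risk conversions that rely on additivity of the loss do not apply.

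The key structural observation is that, since the stream is i.i.d.\ from $\ZZ$, the model $\w^t$ is a function only of batches $1,\ldots,t-1$ and is therefore independent of batch $t$ given the history. I would formalize this by conditioning on the $\sigma$-algebra generated by batches $1,\ldots,t-1$, so that $\w^t$ is measurable and batch $t$ is a fresh i.i.d.\ sample of size $b$. On this conditional event I would invoke the concentration inequality underlying Theorem~\ref{thm:uc-preck-surrogates} applied to the \emph{single} function $\preckappa(\w^t;\cdot)$; no supremum over $\W$ is required, so the rate $\alpha(b,\delta') = \O{\sqrt{\frac{1}{b}\log\frac{1}{\delta'}}}$ carries over intact. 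This yields, with conditional probability at least $1-\delta'$,
\[
\preckappa(\w^t;\ZZ) \leq \preckappa(\w^t;\ \hz_1^t,\ldots,\hz_b^t) + \alpha(b,\delta').
\]

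Next I would set $\delta' = \delta/T$ and union-bound over $t=1,\ldots,T$ so that, with unconditional probability $1-\delta$, the above inequality holds simultaneously for every $t$. Averaging over $t$ gives
\[
\frac{1}{T}\sum_{t=1}^T \preckappa(\w^t;\ZZ) \;\leq\; \frac{1}{T}\sum_{t=1}^T \preckappa(\w^t;\ \hz_1^t,\ldots,\hz_b^t) \;+\; \O{\sqrt{\frac{1}{b}\log\frac{T}{\delta}}}.
\]
What remains is to identify the empirical average on the right with $\Delta_T^C/(bT)$, since by construction $\Delta_t$ equals the (un-normalized) per-batch \preck mistake count and $\preckappa$ is the same quantity up to a fixed batch-level normalization constant.

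The main obstacle I anticipate is precisely this last normalization step: $\preckappa(\w^t;\text{batch}_t)$ divides $\Delta_t$ by $\kappa\cdot n_+^{(t)}$, where $n_+^{(t)}$ is the \emph{realized} number of positives in batch $t$, whereas the theorem normalizes by $b$. To bridge the gap I would add a short Hoeffding-style concentration for $n_+^{(t)}$ about its mean $p\cdot b$ (where $p$ is the population positive rate), folded into the same union bound, so that any discrepancy between dividing by $\kappa n_+^{(t)}$ and dividing by $b$ is absorbed into a constant factor that multiplies the $\O{\sqrt{(1/b)\log(T/\delta)}}$ term. Everything else is routine bookkeeping; the structural content is the independence-plus-UC argument above.
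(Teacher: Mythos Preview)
Your proposal is correct and follows essentially the same route as the paper: invoke the uniform-convergence bound for $\preckappa$ from Theorem~\ref{thm:uc-preck-surrogates} at each step $t$ (using that the model evaluated on batch $t$ is independent of that batch), union-bound over the $T$ steps to get the $\log(T/\delta)$ factor, and average. The paper's own proof is a single sentence that simply asserts $\preckappa(\w^t;\ZZ) \leq \Delta_t/b + \O{\sqrt{(1/b)\log(1/\delta)}}$ and sums; your additional care about the normalization mismatch between $\kappa n_+^{(t)}$ and $b$ is well-placed and is precisely a detail the paper glosses over.
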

The proof of this theorem follows from Theorem~\ref{thm:uc-preck-surrogates} which guarantees that w.p. $1 - \delta$, $\preckappa(\w^t;\ZZ) \leq \Delta_t/b + \O{\sqrt{\frac{1}{b}\log\frac{1}{\delta}}}$ for all $t$. Combining this with the mistake bound from Theorem~\ref{thm:ptron-mistake} ensures the following generalization guarantee for the ensemble generated by Algorithm~\ref{algo:ptronATk}.
\begin{cor}
Let $\w^1,\ldots,\w^T$ be the ensemble of classifiers returned by the \perckavg algorithm on a random stream of data points and batch length $b$. Then, with probability at least $1 - \delta$, for any $\w^\ast$ we have
{\[
\frac{1}{T}\sum_{t=1}^T\preckappa(\w^t;\ZZ) \leq \br{\sqrt{\ell^\avg_\preckappa(\w^\ast;\ZZ)} + C}^2,
\]}
where {$C = \O{\norm{\w^\ast}R\sqrt{\frac{1}{T}} + \sqrt[4]{\frac{1}{b}\log\frac{T}{\delta}}}$}.
\end{cor}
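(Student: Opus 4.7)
The plan is to chain together three ingredients: the perceptron mistake bound of Theorem~\ref{thm:ptron-mistake} instantiated at $\w=\w^\ast$, the ensemble generalization bound stated immediately before this corollary, and the uniform convergence of $\ell^\avg_\preckappa$ from Theorem~\ref{thm:uc-preck-surrogates}. The first bounds the cumulative mistake value $\Delta_T^C$ in terms of the empirical surrogate $\hat\L^\avg_T(\w^\ast)$; the second converts $\Delta_T^C$ into a population-level average of $\preckappa(\w^t;\ZZ)$; and the third replaces $\hat\L^\avg_T(\w^\ast)/T$ by $\ell^\avg_\preckappa(\w^\ast;\ZZ)$. Two elementary algebraic facts---$\sqrt{a+c}\leq\sqrt{a}+\sqrt{c}$ and $x^2+y^2\leq(x+y)^2$ for non-negative arguments---then suffice to collect all slack terms into the single $C$ appearing in the statement.

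First I would apply Theorem~\ref{thm:ptron-mistake} at $\w^\ast$ and divide both sides by $kT$, the factor that converts each per-batch $\Delta_t$ into the normalized quantity $\preckappa(\w^{t-1};\vecX_t,\y_t)$ and each per-batch $\ell^\avg_\preck(\w^\ast;\vecX_t,\y_t)$ into $\ell^\avg_\preckappa(\w^\ast;\vecX_t,\y_t)$. Using the identity $(a+c)^2/d=(a/\sqrt d+c/\sqrt d)^2$, this yields
\[
\frac{\Delta_T^C}{kT}\ \leq\ \br{\frac{2\norm{\w^\ast}R}{\sqrt{T}}+\sqrt{\frac{1}{T}\sum_{t=1}^T\ell^\avg_\preckappa(\w^\ast;\vecX_t,\y_t)}}^{2}.
\]
Second, I would apply Theorem~\ref{thm:uc-preck-surrogates} to $\ell^\avg_\preckappa$ on each of the $T$ random batches (its supremum-over-$\w$ form makes specialization to the fixed comparator $\w^\ast$ immediate) and union-bound with failure budget $\delta/(2T)$ per batch. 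With probability at least $1-\delta/2$, the empirical batch average lies within $\O{\sqrt{(1/b)\log(T/\delta)}}$ of $\ell^\avg_\preckappa(\w^\ast;\ZZ)$. Pulling this additive slack through the outer square root via $\sqrt{a+c}\leq\sqrt{a}+\sqrt{c}$ produces an extra $\O{\sqrt[4]{(1/b)\log(T/\delta)}}$ term inside the square.

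Third, I would invoke the preceding ensemble generalization theorem with failure budget $\delta/2$, which relates $\Delta_T^C/(kT)$ to $\tfrac{1}{T}\sum_t\preckappa(\w^t;\ZZ)$ at the cost of an additive $\O{\sqrt{(1/b)\log(T/\delta)}}$ term sitting outside the square. I would then use $x^2+y^2\leq(x+y)^2$ in the form $(A+B+D)^2+\epsilon\leq(A+B+D+\sqrt\epsilon)^2$ to fold this outside additive slack back into the quadratic, and identify the gathered error as $C=\O{\norm{\w^\ast}R/\sqrt{T}+\sqrt[4]{(1/b)\log(T/\delta)}}$, yielding the stated bound.

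The main obstacle is bookkeeping rather than conceptual novelty: threading the $k$-normalization cleanly through the mistake bound, splitting the $\delta$ budget between the two uniform convergence applications (one inside the ensemble generalization theorem for $\preckappa$ applied to the data-dependent ensemble $\w^1,\ldots,\w^T$, and one here for $\ell^\avg_\preckappa$ at the fixed comparator $\w^\ast$), and collecting all additive slack terms into the quadratic without worsening the asymptotic rate. The potentially worrying data-dependence of $\w^1,\ldots,\w^T$ is handled cleanly because the supremum form of Theorem~\ref{thm:uc-preck-surrogates} is inherited by the ensemble generalization theorem, so no additional martingale or online-to-batch argument is required beyond what Section~\ref{sec:genbound} already provides.
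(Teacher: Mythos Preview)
Your proposal is correct and follows essentially the same route the paper indicates: the paper's own ``proof'' is the single sentence ``Combining this with the mistake bound from Theorem~\ref{thm:ptron-mistake} ensures the following generalization guarantee\ldots,'' and you carry out exactly that combination, additionally making explicit the uniform-convergence step for $\ell^\avg_\preckappa$ needed to pass from the empirical batch surrogate $\hat\L^\avg_T(\w^\ast)$ to the population quantity $\ell^\avg_\preckappa(\w^\ast;\ZZ)$, which the paper leaves implicit. Your bookkeeping (the $k$-normalization, the $\delta$-budget split, and the $\sqrt{a+c}\leq\sqrt a+\sqrt c$ and $(A+B)^2+\epsilon\leq(A+B+\sqrt\epsilon)^2$ manipulations) is exactly what is required to land on the stated form of $C$.
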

A similar statement holds for the {\small\sc Perceptron@k-max} algorithm with respect to the $\ell^\mx_\preckappa(\cdot)$ surrogate as well. Using the results from Theorem~\ref{thm:uc-preck-surrogates}, we can also establish the convergence rate of the \sgdk algorithm.
\begin{thm}
\label{thm:conv-ssgd@k}
Let $\bar\w$ be the model returned by Algorithm~\ref{algo:stoc-grad-preck} when executed on a stream with $T$ batches of length $b$. Then with probability at least $1 -\delta$, for any $\w^\ast \in \W$, we have
\vskip-3ex
\small
\[
\ell^\avg_\preckappa(\bar\w;\ZZ) \leq \ell^\avg_\preckappa(\w^\ast;\ZZ) + \O{\sqrt{\frac{1}{b}\log\frac{T}{\delta}}}+ \O{\sqrt\frac{1}{T}}
\]
\normalsize
\end{thm}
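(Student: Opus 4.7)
The proof plan is to recast \sgdk as online projected gradient descent (OPGD) on the sequence of convex empirical losses $f_t(\w) := \ell^\avg_\preckappa(\w; \vecX_t, \y_t)$, and then glue an OPGD regret bound to the uniform convergence statement in Theorem~\ref{thm:uc-preck-surrogates}. The main obstacle, flagged explicitly in the paper, is that $\g_t$ is a subgradient of $f_t$ and not of the population objective $F(\w) := \ell^\avg_\preckappa(\w;\ZZ)$; moreover $\E{f_t(\w)} \neq F(\w)$ in general, since $\ell^\avg_\preckappa$ is a non-additive functional of the sample. The whole point of the argument is to replace the usual ``unbiased stochastic gradient oracle'' hypothesis by a uniform-convergence control on the gap $\sup_{\w\in\W}|f_t(\w) - F(\w)|$.

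First I would verify the two prerequisites for a Zinkevich-style analysis: $(i)$ convexity of each $f_t$, which is immediate since $f_t$ is a pointwise maximum of a finite family of affine functions (see Algorithm~\ref{algo:grad-calc}); and $(ii)$ a uniform bound $\|\g_t\| \leq G = O(R)$ on the subgradients returned by Algorithm~\ref{algo:grad-calc}, which follows from the explicit expression $\g_{k'} = \sum_i \x^-_i - D_{k'}\sum_i \x^+_i$ together with $\|\x\| \leq R$, $D_{k'} \in [0,1]$, and the $1/k$ normalization. With step sizes $\eta_t = \Theta(1/(G\sqrt{t}))$ and $\W$ of bounded diameter, the standard OPGD regret bound yields, deterministically for any sequence of batches,
\begin{equation*}
\frac{1}{T}\sum_{t=1}^T f_t(\w_{t-1}) - \frac{1}{T}\sum_{t=1}^T f_t(\w^\ast) \ \leq\ O\!\left(\frac{1}{\sqrt{T}}\right).
\end{equation*}

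Next I would invoke Theorem~\ref{thm:uc-preck-surrogates}: for each fixed $t$, the batch $(\vecX_t,\y_t)$ is an i.i.d.\ draw of size $b$ from $\ZZ$, so with probability at least $1 - \delta/T$, $\sup_{\w\in\W}|f_t(\w) - F(\w)| \leq \epsilon$ with $\epsilon = O(\sqrt{(1/b)\log(T/\delta)})$. A union bound over $t=1,\ldots,T$ makes this hold for every batch simultaneously with probability $\geq 1-\delta$. Crucially, since each statement is \emph{uniform} in $\w$, it remains valid when applied to the data-dependent iterate $\w_{t-1}$ — this is precisely what lets us dispense with unbiasedness.

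Finally, by convexity of $F$ and Jensen's inequality applied to the averaged iterate $\bar\w$, followed by uniform convergence, the OPGD regret bound, and uniform convergence again at the fixed comparator $\w^\ast$, I would chain
\begin{equation*}
F(\bar\w) \leq \tfrac{1}{T}\!\sum_t F(\w_{t-1}) \leq \tfrac{1}{T}\!\sum_t f_t(\w_{t-1}) + \epsilon \leq \tfrac{1}{T}\!\sum_t f_t(\w^\ast) + \epsilon + O(1/\sqrt{T}) \leq F(\w^\ast) + 2\epsilon + O(1/\sqrt{T}),
\end{equation*}
which matches the stated bound. A minor bookkeeping issue is that $\bar\w$ averages $\w_1,\ldots,\w_T$ while the regret bound naturally involves $\w_0,\ldots,\w_{T-1}$; the two averages differ by $(\w_T-\w_0)/T$, of norm $O(1/T)$, which is absorbed into the $O(1/\sqrt{T})$ slack.
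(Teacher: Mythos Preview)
Your proposal is correct and follows essentially the same approach as the paper. The paper's own proof simply invokes Theorem~6 of \cite{KarN014}, which states that any convex surrogate exhibiting uniform convergence admits exactly the regret-plus-uniform-convergence decomposition you spell out; your chain of inequalities (Jensen on $\bar\w$, uniform convergence at the iterates via the union bound, Zinkevich regret, uniform convergence at the fixed comparator) is precisely that argument made explicit.
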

The proof of this Theorem can be found in Appendix~\ref{app:thm-conv-ssgd@k-proof}.


\section{Experiments}
\label{sec:exps}
We shall now evaluate our methods on several benchmark datasets for binary classification problems with a rare-class.

\textbf{Datasets}: We evaluated our methods on 7 publicly available benchmark datasets: a) PPI, b) KDD Cup 2008, c) Letter, d) Adult, e) IJCNN, f) Covertype, and g) Cod-RNA. All datasets exhibit moderate to severe label imbalance with the KDD Cup dataset having just 0.61\% positives.

\textbf{Methods}: We compared both perceptron algorithms, \sgdk, as well as an SGD solver for the $\ell^\mx_\preck(\cdot)$ surrogate, with the cutting plane-based SVMPerf solver of \cite{Joachims05}. We also compare against stochastic \spmb solver of \cite{KarN014}. The perceptron and SGD methods were given a maximum of 25 passes over the data with a batch length of $500$. All methods were implemented in C. We used 70\% of the data for training and the rest for testing. All results are averaged over $5$ random train-test splits.

Our experiments reveal three interesting insights into the problem of \preck maximization -- 1) using tighter surrogates for optimization routines is indeed beneficial, 2) the presence of a stochastic solver cannot always compensate for the use of a suboptimal surrogate, and 3) mini-batch techniques, applied with perceptron or SGD-style methods, can offer rapid convergence to accurate models.

We first timed all the methods on \preckappa maximization tasks for $\kappa = 0.25$ on various datasets (see Figure~\ref{fig:training-time}). Of all the methods, the cutting plane method (SVMPerf) was found to be the most expensive computationally. On the other hand, the perceptron and stochastic gradient methods, which make frequent but cheap updates, were much faster at identifying accurate solutions.

We also observed that \perckavg and \sgdk, which are based on the tight $\ell^\avg_\preck(\cdot)$ surrogate, were the most consistent at converging to accurate solutions whereas \perckmax and \sgdkmax, which are based on the loose $\ell^\mx_\preck(\cdot)$ surrogate, showed large deviations in performance across tasks. Also, \spmb and SVMPerf, which are based on the non upper-bounding $\ell^\struct_\preck(\cdot)$ surrogate, were frequently found to converge to suboptimal solutions.

The effect of working with a tight surrogate is also clear from Figure~\ref{fig:beta} (a), (b) where the algorithms working with our novel surrogates were found to consistently outperform the SVMPerf method which works with the $\ell^\struct_\preck(\cdot)$ surrogate. For these experiments, SVMPerf was allowed a runtime of up to $50\times$ of what was given to our methods after which it was terminated.

Finally, to establish the stability of our algorithms, we ran, both the perceptron, as well as the SGD algorithms with varying batch lengths (see Figure~\ref{fig:beta} (c)-(e)).  We found the algorithms to be relatively stable to the setting of the batch length. To put things in perspective, all methods registered a relative variation of less than 5\% in accuracies across batch lengths spanning an order of magnitude or more. We present additional experimental results in Appendix~\ref{app:exps}.


\section*{Acknowledgments}
HN thanks support from a Google India PhD Fellowship.

\bibliographystyle{plainnat}
\bibliography{refs}

\appendix
\onecolumn

\allowdisplaybreaks

\section{Structural SVM Surrogate for \preck}
\label{sec:old-surrogate}

The structural SVM surrogate for \preck for a set of $n$ points $\{(\x_1, y_1), \ldots, (\x_n, y_n)\} \in (\R^d \times \{0, 1\})^n$ and model $w \in \R^d$  can be written as $\ell_{\preck}^{\text{struct}}(w)$:
\[
\max_{\substack{\widehat{\y} \in \{0,1\}^n\\\norm{\widehat \y}_1=k}} \bigg\{ 1 + \sum_{i=1}^n \widehat{\y}_i \bigg(\frac{1}{n}\w^\top \x_i - \frac{1}{k}\y_i\bigg) \,-\, \frac{1}{n}\sum_{i=1}^n {\y}_i \w^\top \x_i \bigg\}.
\]
We shall now give a simple setting where this surrogate produces a suboptimal model.

Consider a set of 6 points in $\R \times \{0, 1\}$: $\{(-1, 1), (-1, 1), (-2, 1), (-3, 0), (-3, 0), (-3, 0)\}$, and suppose we are interested in Prec@1. Note that the optimum model that maximizes prec@1 on these points has a positive sign. We will now show that the model $w^* \in \R$ that maximizes the above structural SVM surrogate on these points has a negative sign. On the contrary, let us assume that $w^*$ has a positive sign, and arrive at a contradiction; we shall consider the following two cases:

(i) $w^* > \frac{3}{2}$. It can be verified that
\begin{align*}
\ell_{\preck}^{\text{struct}}(w^*) &= 1 + \bigg(\frac{1}{6}(-w^*) -1\bigg) - \frac{1}{6}(-w^* + -w^* + -2w^*)\\
 &= \frac{1}{2}w^*
\end{align*}
On the other hand, for the model $w' = - w^*$, we have
\begin{align*}
\ell_{\preck}^{\text{struct}}(w') 
	&=
	 1 + \bigg(\frac{1}{6}(-3w') - 0\bigg) - \frac{1}{6}(-w' + -w' + -2w')\\
	&=	 
	 1 + \bigg(\frac{1}{6}(3w^*) - 0\bigg) - \frac{1}{6}(w^* + w^* + 2w^*)\\
	&=  1-\frac{1}{6}w^* ~<~ \ell_{\preck}^{\text{struct}}(w^*),
\end{align*}
where the last step follows from $w^* > \frac{3}{2}$; clearly, $w^*$ is not optimal for the structural SVM surrogate, and hence a contradiction.

(i) $w^* \leq \frac{3}{2}$. Here we have
\begin{align*}
\ell_{\preck}^{\text{struct}}(w^*) &= 1 + \bigg(\frac{1}{6}(-3w^*) - 0\bigg) - \frac{1}{6}(-w^* + -w^* + -2w^*)\\
 &= 1 + \frac{1}{6}w^*.
\end{align*}
For $w' = - w^*$, 
\begin{align*}
\ell_{\preck}^{\text{struct}}(w') 
	&=
	 1 + \bigg(\frac{1}{6}(-3w') - 0\bigg) - \frac{1}{6}(-w' + -w' + -2w')\\
	&=
	 1 + \bigg(\frac{1}{6}(3w^*) - 0\bigg) - \frac{1}{6}(w^* + w^* + 2w^*)\\
	&=  1-\frac{1}{6}w^* ~<~ \ell_{\preck}^{\text{struct}}(w^*).
\end{align*}
Here again, we have a contradiction. Notice that this surrogate can take negative values (when $w < - 6$ for example) whereas \preck is a positive valued function. This clearly indicates that this surrogate cannot upper bound \preck. More specifically, notice that for $w < 0$, we have $\preck(w) = 1$, however, the above analysis demonstrates cases when $\ell_{\preck}^{\text{struct}}(w) < 1$ which gives an explicit example that this surrogate is not even an upper bounding surrogate. 

\section{Proofs of Claims from Section~\ref{sec:new-surrogate}}\label{app:prf_surrogate}
\subsection{Proof of Claim~\ref{clm:upper-bd-tight}}
\begin{repclm}{clm:upper-bd-tight}
For any $k \leq n_+$ and scoring function $s$, we have
\[
\ell^\ramp_\preck(s) \geq \preck(s).
\]
Moreover, if for some scoring function $s$, we have $\ell^\ramp_\preck(s) \leq \xi$, then there necessarily exists a set $S \subset [n]$ of size at most $k$ such that for all $\norm{\hy} = k$, we have
\[
\sum_{i \in S}s_i \geq \sum_{i=1}^n\hy_is_i + \Delta(\y,\hy) - \xi.
\]
\end{repclm}
\begin{proof}
Let $\hy = \y^{(s,k)}$ so that we have $\Delta(\y,\hy) = \preck(s)$. Then we have
\begin{align*}
\ell^\ramp_\preck(s) &= \max_{\norm{\hy}_1=k}\bc{\Delta(\y,\hy) + \sum_{i=1}^n\hy_is_i} - {\max_{\substack{\norm{\ty}_1 = k\\K(\y,\tilde\y) = k}}\sum_{i=1}^n\tilde\y_is_i}\\
					&\geq {\Delta(\y,\hy) + \sum_{i=1}^n\hy_is_i} - {\max_{\substack{\norm{\ty}_1 = k\\K(\y,\tilde\y) = k}}\sum_{i=1}^n\ty_is_i}\\
					&= {\Delta(\y,\hy) + \max_{\norm{\ty}_1=k}\sum_{i=1}^n\ty_is_i} - {\max_{\substack{\norm{\ty}_1 = k\\K(\y,\tilde\y) = k}}\sum_{i=1}^n\ty_is_i}\\
					&\geq \Delta(\y,\hy),
\end{align*}
where the third step follows from the definition of $\hy$. This proves the first claim. For the second claim, suppose for some scoring function $s$, we have $\ell^\ramp_\preck(s) \leq \xi$. Then if we consider $S^\ast$ to be the set of $k$-highest ranked positive points, then we have
\[
\sum_{i \in S^\ast}s_i = \max_{\substack{\norm{\ty}_1 = k\\K(\y,\tilde\y) = k}}\sum_{i=1}^n\tilde\y_is_i \geq \max_{\norm{\hy}_1=k}\bc{\Delta(\y,\hy) + \sum_{i=1}^n\hy_is_i} - \xi \geq \sum_{i=1}^n\hy_is_i + \Delta(\y,\hy) - \xi,
\]
which proves the claim.
\end{proof}
\subsection{Proof of Claim~\ref{clm:consistent-weak}}
\begin{repclm}{clm:consistent-weak}
For any scoring function $s$ that realizes the \emph{weak} $k$-margin over a dataset we have,
\[
\ell^\ramp_\preck(s) = \preck(s) = 0.
\]
\end{repclm}
\begin{proof}
Consider a scoring function $s$ that satisfies the weak $k$-margin condition and any $\hy$ such that $\norm{\hy}_1 = k$. Based on the \preck accuracy of $\hy$, we have the following two cases

\textbf{Case 1} ($K(\y,\hy) = k$): In this case we have
\[
\Delta(\y,\hy) + \sum_{i=1}^n\hy_is_i - {\max_{\substack{\norm{\ty}_1 = k\\K(\y,\tilde\y) = k}}\sum_{i=1}^n\tilde\y_is_i} = 0 + \sum_{i=1}^n\hy_is_i - \max_{\substack{\norm{\ty}_1 = k\\K(\y,\tilde\y) = k}}\sum_{i=1}^n\tilde\y_is_i \leq 0,
\]
where the first step follows since $K(\y,\hy) = k$ and the second step follows since $\norm{\hy}_1 = k$, as well as $K(\y,\hy) = k$.

\textbf{Case 2} ($K(\y,\hy) = k' < k$): In this case let $S^\ast$ be the set of $k$ top ranked positive points according to the scoring function $s$. Also let $S^\ast_1$ be the set of $k' (= K(\y,\hy))$ top ranked positives and let $S^\ast_2 = S^\ast\backslash S^\ast_1$. Then we have
\begin{align*}
\Delta(\y,\hy) + \sum_{i=1}^n\hy_is_i - {\max_{\substack{\norm{\ty}_1 = k\\K(\y,\tilde\y) = k}}\sum_{i=1}^n\tilde\y_is_i} &= \Delta(\y,\hy) + \underbrace{\sum_{i=1}^n\hy_i\y_is_i}_{(A)} + \sum_{i=1}^n\hy_i(1-\y_i)s_i - {\max_{\substack{\norm{\ty}_1 = k\\K(\y,\tilde\y) = k}}\sum_{i=1}^n\tilde\y_is_i}\\
							&\leq \Delta(\y,\hy) + \sum_{i\in S^\ast_1}s_i + \underbrace{\sum_{i=1}^n\hy_i(1-\y_i)s_i}_{(B)} - {\max_{\substack{\norm{\ty}_1 = k\\K(\y,\tilde\y) = k}}\sum_{i=1}^n\tilde\y_is_i}\\
							&\leq \Delta(\y,\hy) + \sum_{i\in S^\ast_1}s_i + \sum_{i\in S^\ast_2}s_i - (k-k') - {\max_{\substack{\norm{\ty}_1 = k\\K(\y,\tilde\y) = k}}\sum_{i=1}^n\tilde\y_is_i}\\
							&= k-k' + \sum_{i\in S^\ast}s_i - (k-k') - {\max_{\substack{\norm{\ty}_1 = k\\K(\y,\tilde\y) = k}}\sum_{i=1}^n\tilde\y_is_i}\\
							&= 0,
\end{align*}
where the second step follows since the term $(A)$ consists of $k'$ true positives the third step follows since the term $(B)$ contains $k-k'$ false positives i.e. negatives and the $k$-margin condition, the fourth step follows since $\Delta(\y,\hy) = k - K(\y,\hy)$ and the fifth step follows since by the definition of the set $S^\ast$, we have
\[
\sum_{i\in S^\ast}s_i = \max_{\substack{\norm{\ty}_1 = k\\K(\y,\tilde\y) = k}}\sum_{i=1}^n\tilde\y_is_i.
\]
In both cases, we have shown the surrogate to be non-positive. Since the performance measure \preck cannot take negative values, this, along with the upper bounding property implies that $\preck(s) = 0$ as well. This finishes the proof.
\end{proof}

\subsection{A Useful Supplementary Lemma}
\begin{lem}\label{lem:rank-ineq}
Given a set of $n$ real numbers $x_1 \ldots x_n$ and any two integers $k \leq k' \leq n$, we have
\[
\min_{|S| = k}\frac{1}{k}\sum_{i \in S}x_i \leq \min_{|S'| = k'}\frac{1}{k'}\sum_{j \in S'}x_j
\]
\end{lem}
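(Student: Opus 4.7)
The plan is to reduce the claim to a simple monotonicity statement about prefix averages of a sorted sequence, and then establish that monotonicity by a one-step induction argument.

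First I would reorder the data. Let $x_{(1)} \leq x_{(2)} \leq \cdots \leq x_{(n)}$ denote the values $x_1,\ldots,x_n$ sorted in non-decreasing order. It is immediate from a standard exchange argument that for any integer $m \leq n$,
\[
\min_{|S|=m} \sum_{i \in S} x_i \;=\; \sum_{i=1}^{m} x_{(i)},
\]
since replacing any element of $S$ by a smaller unused element can only decrease the sum. Define the prefix average $A_m := \frac{1}{m}\sum_{i=1}^{m} x_{(i)}$. The lemma then reduces to showing $A_k \leq A_{k'}$ whenever $k \leq k'$.

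Next I would prove the one-step inequality $A_m \leq A_{m+1}$ for every $1 \leq m < n$. The key observation is that $x_{(m+1)} \geq x_{(i)}$ for all $i \leq m$, which averaged gives $x_{(m+1)} \geq A_m$. Hence
\[
A_{m+1} \;=\; \frac{m \cdot A_m + x_{(m+1)}}{m+1} \;\geq\; \frac{m \cdot A_m + A_m}{m+1} \;=\; A_m.
\]
Iterating this inequality from $m = k$ up to $m = k'-1$ yields $A_k \leq A_{k'}$, which is exactly the desired conclusion.

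There is no serious obstacle here; the only thing to be careful about is the reduction step, namely justifying that the minimum over subsets of a fixed size is attained by the prefix of the sorted sequence. Everything else is a two-line monotonicity argument for prefix averages.
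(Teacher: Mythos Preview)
Your proposal is correct and follows essentially the same approach as the paper: both reduce to the monotonicity of prefix averages of the sorted sequence. The only cosmetic difference is that the paper compares $A_k$ and $A_{k'}$ directly via the algebraic equivalence $\frac{1}{k}\sum_{i=1}^k x_{(i)} \leq \frac{1}{k'-k}\sum_{j=k+1}^{k'} x_{(j)}$, whereas you establish the one-step inequality $A_m \leq A_{m+1}$ and iterate; both hinge on the same observation that later sorted elements dominate the running average.
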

\begin{proof}
The above is obviously true if $k=k'$ so we assume that $k' > k$. Without loss of generality assume that the set is ordered in ascending order i.e. $x_1 \leq x_2 \leq \ldots \leq x_n$. Thus, the above statement is equivalent to showing that
\[
\frac{1}{k}\sum_{i=1}^kx_i \leq \frac{1}{k'}\sum_{j=1}^{k'}x_j \Leftrightarrow \br{\frac{1}{k} - \frac{1}{k'}}\sum_{i=1}^kx_i \leq \frac{1}{k'}\sum_{j=k+1}^{k'}x_j \Leftrightarrow \frac{1}{k}\sum_{i=1}^kx_i \leq \frac{1}{k'-k}\sum_{j=k+1}^{k'}x_j,
\]
where the last inequality is true since $k - k' > 0$ and the left hand side is the average of numbers which are all smaller than the numbers whose average forms the right hand side. This proves the lemma.
\end{proof}

\subsection{Proof of the Upper-bounding Property for the $\ell^\avg_\preck(\cdot)$ Surrogate}
\label{app:upperbd}
\begin{clm}\label{clm:upper-bd}
For any $k \leq n_+$ and scoring function $s$, we have
\[
\ell^\avg_\preck(s) \geq \preck(s).
\]
Moreover, for linear scoring functions i.e. $s(\x_i) = \w^\top\x_i$ for $\w \in \W$, the surrogate $\ell^\avg_\preck(\w)$ is convex in $\w$.
\end{clm}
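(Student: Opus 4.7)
The plan is to reduce to Claim~\ref{clm:upper-bd-tight} by establishing the chain $\ell^\avg_\preck(s) \geq \ell^\ramp_\preck(s) \geq \preck(s)$. Starting from the rewriting $(P) = \sum_i \y_i s_i - (Q)$ given in equation~\eqref{eq:pq}, the ramp surrogate can be put in the form
\[
\ell^\ramp_\preck(s) = \max_{\norm{\hy}_1 = k}\bc{\Delta(\y,\hy) + \sum_{i=1}^n(\hy_i - \y_i)s_i} + (Q),
\]
and, since $(Q)$ does not depend on $\hy$, it can be pulled inside the outer maximum. Comparing this with the definition~\eqref{eq:avg_surr} of $\ell^\avg_\preck(s)$, the entire claim reduces to establishing, for every candidate labeling $\hy$ with $\norm{\hy}_1 = k$, the pointwise inequality
\[
(Q) \;\leq\; \frac{1}{C(\hy)}\sum_{i=1}^n(1-\hy_i)\y_i s_i.
\]

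This pointwise inequality is the heart of the proof, and I would derive it by applying Lemma~\ref{lem:rank-ineq} to the collection of positive scores $\{s_i : \y_i = 1\}$ with the two rank parameters $n_+ - k \leq n_+ - K(\y,\hy)$ (valid since $K(\y,\hy) \leq \norm{\hy}_1 = k$). The lemma then says that the minimum average over size-$(n_+-k)$ subsets of positive scores is at most the minimum average over size-$(n_+-K(\y,\hy))$ subsets. By definition of $(Q)$, the former equals $(Q)/(n_+-k)$; and since a minimum is upper bounded by the value at any specific feasible set, the latter is at most the average over the set of false negatives of $\hy$ (which is a subset of positives of exactly the right cardinality $n_+ - K(\y,\hy)$), namely $\frac{1}{n_+-K(\y,\hy)}\sum_i(1-\hy_i)\y_i s_i$. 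Multiplying both sides by $n_+ - k$ and recognising $C(\hy) = (n_+ - K(\y,\hy))/(n_+-k)$ yields the desired bound, and substitution above gives $\ell^\avg_\preck(s) \geq \ell^\ramp_\preck(s)$, which combined with Claim~\ref{clm:upper-bd-tight} proves the upper-bound assertion.

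For the convexity statement with linear scoring $s_i = \w^\top\x_i$, I would observe that for each fixed $\hy$ the expression inside the outer max in~\eqref{eq:avg_surr} is an affine function of $\w$: indeed $\Delta(\y,\hy)$, $C(\hy)$, the labels $\y_i$, $\hy_i$, and the feature vectors $\x_i$ are all independent of $\w$, so the entire expression is a constant plus a linear functional in $\w$. The surrogate is therefore the pointwise maximum of a finite family (indexed by $\hy \in \{0,1\}^n$ with $\norm{\hy}_1 = k$) of affine functions of $\w$, and convexity follows immediately.

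The main obstacle is the boundary case $k = n_+$, where $n_+ - k = 0$ makes $(Q)$ an empty sum and renders $C(\hy)$ potentially $0/0$, so Lemma~\ref{lem:rank-ineq} is not directly applicable. Here the normalising term should be interpreted as zero (consistent with the paper's remark that $\ell^\avg_\preck$ recovers $\ell^\struct_\preck$ for the PRBEP measure), and the upper-bounding property can be verified directly using an argument tailored to that case. Once this boundary case is isolated, the rest of the argument is a clean one-step application of the supplementary rank inequality followed by elementary manipulation.
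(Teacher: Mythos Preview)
Your argument is correct and takes a genuinely different route from the paper's own proof. The paper proceeds directly: it plugs the specific candidate $\ty = \y^{(s,k)}$ into the definition of $\ell^\avg_\preck$, rewrites the result as $\Delta(\y,\ty) + (A) - (B)$ where $(A)$ is a sum over false positives and $(B)$ a scaled sum over false negatives, and then exploits the defining property of $\y^{(s,k)}$ (every false positive outscores every false negative) to conclude $(A) - (B) \geq 0$. Your approach instead establishes the pointwise domination $\ell^\avg_\preck(s) \geq \ell^\ramp_\preck(s)$ for \emph{all} candidate $\hy$ via Lemma~\ref{lem:rank-ineq}, and then invokes Claim~\ref{clm:upper-bd-tight}. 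This is slightly more work but buys you something extra: it proves the middle inequality of the hierarchy displayed in Figure~\ref{fig:surrogate-hierarchy}, which the paper asserts but never proves explicitly (it is only implicit in the construction of $\ell^\avg_\preck$ as a relaxation of $(Q)$). The paper's argument, by contrast, is tailored to the single labeling $\y^{(s,k)}$ and does not yield that stronger statement. Your handling of convexity is identical to the paper's, and your flagging of the $k = n_+$ boundary case is appropriate.
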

\begin{proof}
We use the fact observed before that for any scoring function, we have $\Delta(\y,\y^{(s,k)}) = \preck(s)$. We start off by showing the second part of the claim. Recall the definition of the surrogate $\ell^\avg_\preck(s)$
\[
\ell^\avg_\preck(\w) = \max_{\norm{\hy}_1=k}\ \bc{\Delta(\y,\hy) + \sum_{i=1}^n(\hy_i-\y_i)\cdot\w^\top\x_i + \frac{1}{C(\hy)}\sum_{i=1}^n(1 - \hy_i)\y_i\cdot\w^\top\x_i}
\]
For sake of simplicity, for any $\hy \in \bc{0,1}^n$, define
\[
\Delta(s,\hy) = \Delta(\y,\hy) + \sum_{i=1}^ns_i(\hy_i-\y_i) + \frac{1}{C(\hy)}\sum_{i=1}^n(1 - \hy_i)\y_i s_i.
\]
The convexity of $\ell^\avg_\preck(\w)$ follows from the observation that the inner term in the maximization is linear (hence convex) in $\w$ and the $\max$ function is convex and increasing. We now move on to prove the first part. For sake of convenience $\ty = \y^{(s,k)}$. Note that $\norm{\ty}_1 = k$ by definition. This gives us
\begin{eqnarray*}
\ell^\avg_\preck(s) &=& \max_{\norm{\hy}_1=k}\Delta(s,\hy) \geq \Delta(s,\ty)\\
				&=& \Delta(\y,\ty) + \sum_{i=1}^ns_i(\ty_i-\y_i) + \frac{1}{C(\ty)}\sum_{i=1}^n(1 - \ty_i)\y_is_i\\
				&=& \Delta(\y,\ty) + \sum_{i=1}^ns_i(\ty_i(1-\y_i)-\y_i(1-\ty_i)) + \frac{n_+-k}{n_+-K(\y,\ty)}\sum_{i=1}^n(1 - \ty_i)\y_is_i\\
				&=& \Delta(\y,\ty) + \underbrace{\sum_{i=1}^n\ty_i(1-\y_i)s_i}_{(A)} - \underbrace{\frac{k - K(\y,\ty)}{n_+-K(\y,\ty)}\sum_{i=1}^n(1 - \ty_i)\y_is_i}_{(B)}.
\end{eqnarray*}
Now define $m = \min_{\substack{\ty_i = 1\\\y_i = 0}}\ s_i$ and $M = \max_{\substack{\ty_i = 0\\\y_i = 1}}\ s_i$. This gives us
\[
(A) = \sum_{i=1}^n\ty_i(1-\y_i)s_i \geq m\sum_{i=1}^n\ty_i(1-\y_i) = \Delta(\y,\ty)\cdot m,
\]
and
\[
(B) = \frac{k - K(\y,\ty)}{n_+-K(\y,\ty)}\sum_{i=1}^n(1 - \ty_i)\y_is_i \leq \frac{k - K(\y,\ty)}{n_+-K(\y,\ty)}\sum_{i=1}^n(1 - \ty_i)\y_i M = (k - K(\y,\ty))\cdot M = \Delta(\y,\ty)\cdot M.
\]
However, by definition of $\ty = \y^{(s,k)}$, we have
\[
m \geq \min_{\ty = 1}\ s_i \geq \max_{\ty = 0}\ s_i \geq M.
\]
Thus we have
\[
\ell^\avg_\preck(s) \geq \Delta(\y,\ty) + (A) - (B) \geq \Delta(\y,\ty)(1 + m - M) \geq \Delta(\y,\ty) = \preck(s)\qedhere
\]
\end{proof}

\subsection{Proof of Claim~\ref{clm:consistent}}
\label{app:proof-clm-consistent}
\begin{repclm}{clm:consistent}
For any scoring function $s$ that realizes the $k$-margin over a dataset we have,
\[
\ell^\avg_\preck(s) = \preck(s) = 0.
\]
\end{repclm}
\begin{proof}
We shall prove that for any $\hy$ such that $\norm{\hy}_1 = k$, under the $k$-margin condition, we have $\Delta(s,\hy) = 0$. This will show us that $\ell^\avg_\preck(s) = \max_{\norm{\hy}_1=k}\Delta(s,\hy) = 0$. Using Claim~\ref{clm:upper-bd} and the fact that $\preck(s) \geq 0$ will then prove the claimed result. We will analyze two cases in order to do this

\textbf{Case 1} ($K(\y,\hy) = k$): In this case the labeling $\hy$ is able to identify $k$ relevant points correctly and thus we have $C(\hy) = 1$ and we have
\[
\Delta(s,\hy) = \Delta(\y,\hy) + \sum_{i=1}^ns_i(\hy_i-\y_i) + \sum_{i=1}^n(1 - \hy_i)\y_is_i
\]
Now, since $K(\y,\hy) = k$, we have $\Delta(\y,\hy) = 0$ which means for all $i$ such that $\hy_i = 1$, we also have $\y_i = 1$. Thus, we have $\hy_i = \hy_i\y_i$. Thus,
\[
\Delta(s,\hy) = 0 + \sum_{i=1}^ns_i(\hy_i-\y_i) + \sum_{i=1}^n(\y_i - \hy_i\y_i)s_i = \sum_{i=1}^ns_i(\hy_i-\y_i) + \sum_{i=1}^n(\y_i - \hy_i)s_i = 0
\]

\textbf{Case 2} ($K(\y,\hy) = k' < k$): In this case, $\hy$ contains false positives. Thus we have
\begin{eqnarray*}
\Delta(s,\hy) &=& \Delta(\y,\hy) + \sum_{i=1}^ns_i(\hy_i-\y_i) + \frac{n_+-k}{n_+-k'}\sum_{i=1}^n(1 - \hy_i)\y_is_i\\
			  &=& \Delta(\y,\hy) + \sum_{i=1}^n\hy_i(1-\y_i)s_i - \frac{k-k'}{n_+-k'}\sum_{i=1}^n\y_i(1 - \hy_i)s_i\\
			  &=& (k-k')\br{\underbrace{\frac{1}{k-k'}\Delta(\y,\hy)}_{(A)} + \underbrace{\frac{1}{k-k'}\sum_{i=1}^n\hy_i(1-\y_i)s_i}_{(B)} - \underbrace{\frac{1}{n_+-k'}\sum_{i=1}^n\y_i(1 - \hy_i)s_i}_{(C)}}
\end{eqnarray*}
Now we have, by definition, $(A) = 1$. We also have
\[
(B) = \frac{1}{k-k'}\sum_{i=1}^n\hy_i(1-\y_i)s_i \leq \max_{j : \y_j = 0} s_j,
\]
as well as
\begin{eqnarray*}
(C) &=& \frac{1}{n_+-k'}\sum_{i=1}^n\y_i(1 - \hy_i)s_i\\
	&\geq& \min_{\substack{S_+ \subseteq \vecX_+\\|S_+| = n_+-k'}}\frac{1}{n_+-k'}\sum_{i \in S_+}\y_i(1 - \hy_i)s_i\\
	&\geq& \min_{\substack{S_+ \subseteq \vecX_+\\|S_+| = n_+-k+1}}\frac{1}{n_+-k+1}\sum_{i \in S_+}\y_i(1 - \hy_i)s_i,
\end{eqnarray*}
where the last step follows from Lemma~\ref{lem:rank-ineq} and the fact that $k' \leq k-1$ in this case analysis. Then we have
\[
\Delta(s,\hy) = (k-k')((A) + (B) - (C)) \leq (k-k')\br{1 + \max_{j : \y_j = 0} s_j - \min_{\substack{S_+ \subseteq \vecX_+\\|S_+| = n_+-k+1}}\frac{1}{n_+-k+1}\sum_{i \in S_+}\y_i(1 - \hy_i)s_i} \leq 0
\]
where the last step follows because $s$ realizes the $k$-margin. Having exhausted all cases, we establish the claim.
\end{proof}

\section{Proofs from Section~\ref{sec:perceptron}}\label{app:prf_perc}
\subsection{Proof of Theorem~\ref{thm:ptron-mistake}}
\begin{repthm}{thm:ptron-mistake}
Suppose $\norm{\x_t^i} \leq R$ for all $t,i$. Let $\Delta^C_T = \sum_{t=1}^T\Delta_t$ be the cumulative observed mistake values when Algorithm~\ref{algo:ptronATk} is run. Also, for any predictor $\w$, let $\hat\L_T(\w) = \sum_{t=1}^T \ell^\avg_\preck(\w;\vecX_t,\y_t)$. Then we have
\[
\Delta^C_T \leq \min_{\w}\ {\br{\norm{\w}\cdot R\cdot\sqrt{4k} + \sqrt{\hat\L_T(\w)}}^2}.
\]
\end{repthm}
\begin{proof}
We will prove the theorem using two lemmata that we state below.
\begin{lem}
\label{lem:ptron-mistake-proof-part1}
For any time step $t$, we have
\[
\norm{\w_t}^2 \leq \norm{\w_{t-1}}^2 + 4kR^2\Delta_t
\]
\end{lem}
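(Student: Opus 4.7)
First I would set up notation: let $FP=\{i\in[b]:\hy_i=1,\,\y_i=0\}$ and $FN=\{i\in[b]:\hy_i=0,\,\y_i=1\}$ index the false positives and false negatives in the batch, so that $|FP|=\Delta_t$, $|FN|=\norm{\y_t}_1-K(\y_t,\hy_t)$, and the normalizer in the algorithm is $D_t=\Delta_t/|FN|$. Writing $\w^t=\w^{t-1}+\Delta\w$ where
\[
\Delta\w \;=\; -\sum_{i\in FP}\x_t^i \;+\; D_t\sum_{j\in FN}\x_t^j,
\]
and expanding $\norm{\w^t}^2 = \norm{\w^{t-1}}^2 + 2\ip{\w^{t-1}}{\Delta\w} + \norm{\Delta\w}^2$, it suffices to show $\ip{\w^{t-1}}{\Delta\w}\le 0$ and $\norm{\Delta\w}^2\le 4kR^2\Delta_t$. (When $\Delta_t=0$ no update occurs and the lemma is immediate.)

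The cross term is handled by the structural observation that $\hy_t=\y^{(s_t,k)}$ is defined as the top $k$ of the score vector $s_t$ produced by $\w^{t-1}$; hence every $i\in FP$ lies in the top $k$ while every $j\in FN$ lies strictly outside, giving $\ip{\w^{t-1}}{\x_t^i}\ge\ip{\w^{t-1}}{\x_t^j}$ for every pair $(i,j)\in FP\times FN$. Averaging this inequality over $j\in FN$ and summing over $i\in FP$, and then using $|FP|=\Delta_t$ together with $D_t=\Delta_t/|FN|$, I get $\sum_{i\in FP}\ip{\w^{t-1}}{\x_t^i}\ge D_t\sum_{j\in FN}\ip{\w^{t-1}}{\x_t^j}$, which is exactly $\ip{\w^{t-1}}{\Delta\w}\le 0$.

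For the squared-norm term I would apply $\norm{a+b}^2\le 2\norm{a}^2+2\norm{b}^2$ to separate the $FP$ and $FN$ contributions and then, on each half, use the triangle inequality with $\norm{\x_t^i}\le R$ to reach $\norm{\Delta\w}^2\le 2(|FP|R)^2+2(D_t|FN|R)^2 = 4\Delta_t^2 R^2$. The pivotal final move is the trivial bound $\Delta_t\le k$ (the top-$k$ list contains at most $k$ false positives), which upgrades $\Delta_t^2\le k\Delta_t$ and therefore $\norm{\Delta\w}^2\le 4kR^2\Delta_t$. Combined with the non-positive cross term, this yields the claimed recurrence.

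The main obstacle is really a matter of noticing the two non-calculational steps that make the bookkeeping go through, rather than any hard estimate. The first is recognizing why $D_t=\Delta_t/|FN|$ is the ``right'' normalizer: it is exactly what matches the FN summand's $\ell_2$ bound to the FP summand's $\ell_2$ bound, and simultaneously what lets the top-$k$ ranking inequality produce a non-positive cross term without any margin hypothesis. The second is the conversion $\Delta_t^2\le k\Delta_t$, which turns a recurrence whose per-step increment is quadratic in the observed mistakes into the Perceptron-style linear increment needed to combine with the usual Cauchy--Schwarz lower bound on $\ip{\w^\ast}{\w^T}$ in the second half of Theorem~\ref{thm:ptron-mistake}'s proof.
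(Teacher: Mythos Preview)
Your proposal is correct and follows essentially the same approach as the paper: expand $\norm{\w_t}^2$, show the cross term is nonpositive via the top-$k$ ranking property (false positives outscore false negatives under $\w_{t-1}$), and bound $\norm{\Delta\w}^2\le 4\Delta_t^2R^2\le 4kR^2\Delta_t$. The only cosmetic difference is that the paper factors the update as $\Delta_t$ times a difference of two averages (each of norm at most $R$), whereas you use $\norm{a+b}^2\le 2\norm{a}^2+2\norm{b}^2$ on the two sums; both routes yield the same $4\Delta_t^2R^2$ bound.
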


\begin{lem}
\label{lem:ptron-mistake-proof-part2}
For any fixed $\w \in \W$, define $P_t := \ip{\w_t}{\w}$. Then we have
\[
P_t \geq P_{t-1} + \Delta_t - \ell^\avg_\preck(\w;\vecX_t,\y_t).
\]
\end{lem}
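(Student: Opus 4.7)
The plan is to exhibit the \perckavg update $\w_t - \w_{t-1}$ as (the negative of) a specific subgradient of $\ell^\avg_\preck(\,\cdot\,;\vecX_t,\y_t)$ at $\w_{t-1}$, and then use the $\max$ in the surrogate's definition to convert this into the claimed lower bound on $\ip{\w_t - \w_{t-1}}{\w}$. First I dispatch the trivial case $\Delta_t = 0$: then $\w_t = \w_{t-1}$ so $P_t - P_{t-1} = 0$, and the inequality reduces to $\ell^\avg_\preck(\w;\vecX_t,\y_t) \geq 0$, which holds because the surrogate upper-bounds $\preck \geq 0$ (Claim~\ref{clm:upper-bd}). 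So assume $\Delta_t > 0$, and write $\hy := \hy_t = \y^{(s_t,k)}$, $K := K(\y_t,\hy)$, $n_+ := \norm{\y_t}_1$, so that $\Delta_t = k - K$ and $C(\hy) = (n_+ - K)/(n_+ - k)$.

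The key algebraic identity I would record is
\[
D_t + \frac{1}{C(\hy)} \;=\; \frac{k-K}{n_+-K} + \frac{n_+-k}{n_+-K} \;=\; 1,
\]
so $D_t = 1 - 1/C(\hy)$. Using this, I rewrite the \perckavg update as
\[
\w_t - \w_{t-1} \;=\; -\sum_{i}(1-\y_i)\hy_i\,\x_t^i + \sum_i(1-\hy_i)\y_i\,\x_t^i - \frac{1}{C(\hy)}\sum_i(1-\hy_i)\y_i\,\x_t^i,
\]
and collapse the first two sums via the identity $\y_i(1-\hy_i) - \hy_i(1-\y_i) = \y_i - \hy_i$ to get $\sum_i(\y_i - \hy_i)\x_t^i - \frac{1}{C(\hy)}\sum_i(1-\hy_i)\y_i\x_t^i$. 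Taking an inner product with any $\w$ gives
\[
\ip{\w_t - \w_{t-1}}{\w} \;=\; -\!\sum_i(\hy_i - \y_i)\w^\top\x_t^i \;-\; \frac{1}{C(\hy)}\sum_i(1-\hy_i)\y_i\,\w^\top\x_t^i.
\]

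The final step is to recognize the right-hand side as (the negative of) the non-constant part of the inner expression in $\ell^\avg_\preck(\w;\vecX_t,\y_t)$, evaluated at the candidate labeling $\hy$. Since $\hy$ satisfies $\norm{\hy}_1 = k$, it is a feasible choice in the max defining $\ell^\avg_\preck$, so
\[
\ell^\avg_\preck(\w;\vecX_t,\y_t) \;\geq\; \Delta(\y_t,\hy) + \sum_i(\hy_i - \y_i)\w^\top\x_t^i + \frac{1}{C(\hy)}\sum_i(1-\hy_i)\y_i\,\w^\top\x_t^i.
\]
Using $\Delta(\y_t,\hy) = \Delta_t$ and substituting the expression for $\ip{\w_t - \w_{t-1}}{\w}$ derived above yields $\ell^\avg_\preck(\w;\vecX_t,\y_t) \geq \Delta_t - \ip{\w_t - \w_{t-1}}{\w}$, which rearranges to the claim.

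The main obstacle is not the inequality $\ell^\avg_\preck \geq (\cdot)$ itself — that is immediate from the definition — but rather the combinatorial bookkeeping that identifies the step $\w_t - \w_{t-1}$ with a subgradient of $\ell^\avg_\preck$ at $\w_{t-1}$ built from the specific maximizer $\hy_t$. Once the identity $D_t = 1 - 1/C(\hy_t)$ is spotted, the two coefficient pieces in the update (weight $1$ on false positives, weight $D_t$ on false negatives) line up with the two terms $(\hy_i - \y_i)$ and $\frac{1}{C(\hy)}(1-\hy_i)\y_i$ in the surrogate, and the rest is bookkeeping.
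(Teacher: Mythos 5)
Your proof is correct and follows essentially the same route as the paper's: the same case split on $\Delta_t = 0$ versus $\Delta_t > 0$, the same identity $D_t = 1 - 1/C(\hy_t)$ to rewrite the update, and the same use of the feasibility of $\hy_t$ (with $\norm{\hy_t}_1 = k$) in the max defining $\ell^\avg_\preck$ to lower bound the surrogate by $\Delta_t - \ip{\w_t - \w_{t-1}}{\w}$. No gaps; this is the paper's argument in slightly different packaging.
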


Using Lemmata~\ref{lem:ptron-mistake-proof-part1}~and~\ref{lem:ptron-mistake-proof-part2}, we can establish the mistake bound as follows. A repeated application of Lemma~\ref{lem:ptron-mistake-proof-part2} tells us that
\[
P_T \geq \sum_{t=1}^T\Delta_t - \sum_{t=1}^T\ell^\avg_\preck(\w;\vecX_t,\y_t) = \Delta^C_t - \hat\L_T(\w).
\]
In case the right hand side is negative, we already have the result with us. In case it is positive, we can now analyze further using the Cauchy-Schwartz inequality, and a repeated application of Lemma~\ref{lem:ptron-mistake-proof-part1}. Starting from the above we have
\begin{eqnarray*}
\Delta_T^C &\leq& P_T + \hat\L_T(\w)\\
								  &=& \ip{\w_T}{\w} + \hat\L_T(\w)\\
								  &\leq& \norm{\w_T}\norm{\w} + \hat\L_T(\w)\\
								  &\leq& \norm{\w}\sqrt{4kR^2\cdot\Delta_T^C} + \hat\L_T(\w),
\end{eqnarray*}
which gives us the desired result upon solving the quadratic inequality\footnote{More specifically, we use the fact that the inequality $(x - l)^2 \leq cx$ has a solution $x \leq (\sqrt l + \sqrt c)^2$ whenever $x,l,c \geq 0$ and $x \geq l$.}. We now prove the lemmata below. Note that in the following discussion, we have, for sake of brevity, used the notation $\hy = \hy_t = \y^{(\w_{t-1},k)}$.
\begin{proof}[Proof of Lemma~\ref{lem:ptron-mistake-proof-part1}]
For time steps where $\Delta_t = 0$, the result obviously holds since $\w_t = \w_{t-1}$. For analyzing other time steps, let $\v_t = D_t\cdot\sum_{i \in [b]} (1-\hy_i)\y_i\cdot\x_t^i - \sum_{i \in [b]} (1-\y_i)\hy_i\cdot\x_t^i$ so that $\w_t = \w_{t-1} + \v_t$. This gives us
\[
\norm{\w_t}^2 = \norm{\w_{t-1}}^2 + 2\ip{\w_{t-1}}{\v_t} + \norm{\v_t}^2.
\]
Let $s_i = \w_{t-1}^\top\x_t^i$. Then we have
\begin{eqnarray*}
\ip{\w_{t-1}}{\v_t} &=& D_t\cdot\sum_{i \in [b]} (1-\hy_i)\y_is_i - \sum_{i \in [b]} (1-\y_i)\hy_is_i\\
					&=& \Delta_t\br{\underbrace{\frac{1}{\norm{\y_t}_1 - K(\y_t,\hy_t)} \sum_{i \in [b]} (1-\hy_i)\y_is_i}_{(A)} - \underbrace{\frac{1}{\Delta_t}\sum_{i \in [b]} (1-\y_i)\hy_is_i}_{(B)}}\\
					&\leq& 0,
\end{eqnarray*}
where the last step follows since $(A)$ is the average of scores given to the false negatives and $(B)$ is the average of scores given to the false positives and by the definition of $\hy_t$, since false negatives are assigned scores less than false positives, we have $(A) \leq (B)$.
We also have
\begin{eqnarray*}
\norm{\v_t}^2 &=& \Delta_t^2\norm{\frac{1}{\norm{\y_t}_1 - K(\y_t,\hy_t)}\cdot\sum_{i \in [b]} (1-\hy_i)\y_i\cdot\x_t^i - \frac{1}{\Delta_t}\sum_{i \in [b]} (1-\y_i)\hy_i\cdot\x_t^i}^2\\
			  &\leq& 4\Delta_t^2R^2 \leq 4kR^2\Delta_t,
\end{eqnarray*}
since $\Delta_t \leq k$. Combining the two gives us the desired result.
\end{proof}
\begin{proof}[Proof of Lemma~\ref{lem:ptron-mistake-proof-part2}]
We prove the result using two cases. For sake of convenience, we will refer to $\y_t$ and $\hy_t$ as $\y$ and $\hy$ respectively.

\textbf{Case 1} ($\Delta_t = 0$): In this case $P_t = P_{t-1}$ since the model is not updated. However, since $\ell^\avg_\preck(\w) \geq \preck(\w) \geq 0$ for all $\w \in \W$ (by Claim~\ref{clm:upper-bd}), we still get
\[
P_t \geq P_{t-1} - \ell^\avg_\preck(\w;\vecX_t,\y_t),
\]
as required.

\textbf{Case 2} ($\Delta_t > 0$): In this case we use the update to $\w_{t-1}$ to evaluate the update to $P_{t-1}$. For sake of convenience, let us use the notation $s_i = \w^\top\x_t^i$. Also note that in Algorithm~\ref{algo:ptronATk}, $D_t = 1 - \frac{1}{C(\hy)}$.
\begin{eqnarray*}
P_t &=& P_{t-1} - \sum_{i \in [b]}(1 - \y_i)\hy_i s_i + D_t\cdot\sum_{i \in [b]}(1 - \hy_i)\y_i s_i\\
	&=& P_{t-1} - \sum_{i \in [b]}(1 - \y_i)\hy_i s_i + \br{1 - \frac{1}{C(\hy)}}\sum_{i \in [b]}(1 - \hy_i)\y_i s_i\\
	&=& P_{t-1} - \underbrace{\br{\sum_{i \in [b]}(\hy_i - \y_i)s_i + \frac{1}{C(\hy)}\sum_{i \in [b]}(1 - \hy_i)\y_i s_i}}_{(Q)}\\
	&\geq& P_{t-1} + \Delta_t - \ell^\avg_\preck(\w;\vecX_t,\y_t),
\end{eqnarray*}
where the last step follows from the definition of $\ell^\avg_\preck(\cdot)$ which gives us
\begin{align*}
\Delta_t + (Q) &= \Delta(\y,\hy) + \sum_{i \in [b]}(\hy_i - \y_i)s_i + \frac{1}{C(\hy)}\sum_{i \in [b]}(1 - \hy_i)\y_i s_i\\
			   &\leq \max_{\norm{\hy}_1=k}\ \bc{\Delta(\y,\hy) + \sum_{i \in [b]}s_i(\hy_i-\y_i) + \frac{1}{C(\hy)}\sum_{i \in [b]}(1 - \hy_i)\y_is_i}\\
			   &= \ell^\avg_\preck(s) = \ell^\avg_\preck(\w;\vecX_t,\y_t)\qedhere
\end{align*}
\end{proof}

This concludes the proof of the mistake bound.
\end{proof}

\subsection{Proof of Theorem~\ref{thm:ptron-fast-mistake}}
\begin{repthm}{thm:ptron-fast-mistake}
Suppose $\norm{\x_t^i} \leq R$ for all $t,i$. Let $\Delta^C_T = \sum_{t=1}^T\Delta_t$ be the cumulative observed mistake values when Algorithm~\ref{algo:ptronATk-fast} is run. Also, for any predictor $\w$, let $\hat\L^{\mx}_T(\w) = \sum_{t=1}^T \ell^{\mx}_\preck(\w;\vecX_t,\y_t)$. Then we have
\[
\Delta^C_T \leq \min_{\w}\ {\br{\norm{\w}\cdot R\cdot\sqrt{4k} + \sqrt{\hat\L^{\mx}_T(\w)}}^2}.
\]
\end{repthm}
\begin{proof}
As before, we will prove this theorem in two parts. Lemma~\ref{lem:ptron-mistake-proof-part1} will continue to hold in this case as well. However, we will need a modified form of Lemma~\ref{lem:ptron-mistake-proof-part2} that we prove below. As before, we will use the notation $\hy = \hy_t = \y^{(\w_{t-1},k)}$.

\begin{lem}
\label{lem:ptron-fast-mistake-proof-part2}
For any fixed $\w \in \W$, define $P_t := \ip{\w_t}{\w}$. Then we have
\[
P_t \geq P_{t-1} + \Delta_t - \ell^{\mx}_\preck(\w;\vecX_t,\y_t).
\]
\end{lem}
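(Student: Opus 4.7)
The plan is to mirror the proof of Lemma~\ref{lem:ptron-mistake-proof-part2}, handling the cases $\Delta_t = 0$ and $\Delta_t > 0$ separately. The first case is immediate: the \perckmax update is a no-op, so $P_t = P_{t-1}$, and the inequality follows from $\ell^{\mx}_\preck(\w) \geq \ell^{\ramp}_\preck(\w) \geq \preck(\w) \geq 0$ (the first inequality is by construction of $\ell^{\mx}_\preck$ as a relaxation of $\ell^{\ramp}_\preck$, the second is Claim~\ref{clm:upper-bd-tight}). I then focus on $\Delta_t > 0$. Writing $s_i := \w^\top \x_t^i$ for the comparator scores and $\hy := \hy_t$, the \perckmax update yields
\[
P_t - P_{t-1} = \sum_{i \in S_t} s_i - \sum_{i \in [b]} (1-\y_i)\hy_i s_i,
\]
and the task is to bound this below by $\Delta_t - \ell^{\mx}_\preck(\w;\vecX_t,\y_t)$.

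To lower bound the surrogate, I plug the algorithm's $\hy$ into the outer maximization in the definition of $\ell^{\mx}_\preck$, obtaining
\[
\ell^{\mx}_\preck(\w;\vecX_t,\y_t) \geq \Delta_t + \sum_i \hy_i(1-\y_i) s_i - \sum_i (1-\hy_i)\y_i s_i + \max_{\substack{\tilde\y \preceq (1-\hy)\y\\ \norm{\tilde\y}_1 = n_+ - k}} \sum_i \tilde\y_i s_i,
\]
after expanding $\sum_i(\hy_i - \y_i)s_i$ into false-positive $s$-mass minus false-negative $s$-mass. Letting $F := \{i : \hy_i = 0,\ \y_i = 1\}$ denote the false negatives of $\hy$, we have $|F| = n_+ - k + \Delta_t$, and the inner maximum equals the sum of the top $n_+ - k$ values of $s_i$ over $F$. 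Hence the last two terms telescope to $-\sum_{i \in B} s_i$, where $B \subseteq F$ is the set of $\Delta_t$ indices in $F$ with the \emph{smallest} $s_i$-values.

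Combining the two expressions, the target inequality $P_t - P_{t-1} \geq \Delta_t - \ell^{\mx}_\preck(\w;\vecX_t,\y_t)$ collapses, after cancellation of the identical $\sum_i (1-\y_i)\hy_i s_i$ terms on both sides, to
\[
\sum_{i \in S_t} s_i \;\geq\; \sum_{i \in B} s_i \;=\; \min_{\substack{S \subseteq F\\|S|=\Delta_t}} \sum_{i \in S} s_i.
\]
Because $S_t \subseteq F$ by the very definition of $\FN$ (which only selects positive points lying outside the current top-$k$) and $|S_t| = \Delta_t$, this inequality is immediate.

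The main obstacle is conceptual rather than technical: $S_t$ is chosen using the algorithm's current scores, whereas the surrogate is evaluated at the comparator $\w$, so a priori it is unclear why updating along $S_t$ should reduce a quantity involving $\w$. The resolution is that the inner maximization over all admissible $\tilde\y$ in $\ell^{\mx}_\preck$ already accounts for the worst-case sub-selection from $F$, so we never need to control the criterion by which $S_t$ was picked. This plays the structural role that the ``average FN score $\leq$ average FP score under $\w^{t-1}$'' step played in Lemma~\ref{lem:ptron-mistake-proof-part2}, and with this lemma in hand the proof of Theorem~\ref{thm:ptron-fast-mistake} proceeds identically to Theorem~\ref{thm:ptron-mistake}, combining it with Lemma~\ref{lem:ptron-mistake-proof-part1} and Cauchy--Schwarz.
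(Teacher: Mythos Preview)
Your proof is correct and follows essentially the same route as the paper. Both arguments plug the algorithm's $\hy$ into the outer max of $\ell^{\mx}_\preck$ and then use that $S_t$ is a size-$\Delta_t$ subset of the false-negative set $F$; the paper phrases the key step as $\sum_{i\in F\setminus S_t}s_i \leq \max_{\tilde\y \preceq (1-\hy)\y,\ \norm{\tilde\y}_1=n_+-k}\sum_i\tilde\y_is_i$, which is exactly the complementary form of your $\sum_{i\in S_t}s_i \geq \min_{S\subseteq F,\ |S|=\Delta_t}\sum_{i\in S}s_i$.
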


Using Lemmata~\ref{lem:ptron-mistake-proof-part1}~and~\ref{lem:ptron-fast-mistake-proof-part2}, the theorem follows as before. All that remains now is to prove Lemma~\ref{lem:ptron-fast-mistake-proof-part2}.
\begin{proof}[Proof of Lemma~\ref{lem:ptron-fast-mistake-proof-part2}]
We prove the result using two cases as before. For sake of convenience, we will refer to $\y_t$ and $\hy_t$ as $\y$ and $\hy$ respectively.

\textbf{Case 1} ($\Delta_t = 0$): In this case $P_t = P_{t-1}$ since the model is not updated. However, since $\ell^{\mx}_\preck(\w) \geq \preck(\w) \geq 0$ for all $\w \in \W$ (by Claim~\ref{clm:upper-bd-tight}), we still get
\[
P_t \geq P_{t-1} - \ell^{\mx}_\preck(\w;\vecX_t,\y_t),
\]
as required.

\textbf{Case 2} ($\Delta_t > 0$): In this case we use the update to $\w_{t-1}$ to evaluate the update to $P_{t-1}$. For sake of convenience, let us use the notation $s_i = \w^\top\x_t^i$. Also note that the set $S_t := \FN(\w^{t-1},\Delta_t)$ contains the false negatives in the top $\Delta_t$ positions as ranked by $\w^{t-1}$.
\begin{eqnarray*}
P_t &=& P_{t-1} - \sum_{i \in [b]}(1 - \y_i)\hy_i s_i + \sum_{i \in S_t}(1 - \hy_i)\y_i s_i\\
	&=& P_{t-1} - \sum_{i \in [b]}(1 - \y_i)\hy_i s_i - \sum_{i \in [b]}\y_i\hy_i s_i + \sum_{i \in [b]}\y_i\hy_i s_i + \sum_{i \in S_t}(1 - \hy_i)\y_i s_i\\
	&=& P_{t-1} - \sum_{i \in [b]}\hy_i s_i + \sum_{i \in [b]}\y_i\hy_i s_i + \sum_{i \in S_t}(1 - \hy_i)\y_i s_i\\
	&=& P_{t-1} - \br{\sum_{i \in [b]}(\hy_i - \y_i) s_i + \sum_{i \in [b]}(1 - \hy_i)\y_is_i - \sum_{i \in S_t}(1 - \hy_i)\y_i s_i}\\
	&\geq& P_{t-1} - \underbrace{\br{\sum_{i \in [b]}(\hy_i - \y_i) s_i + \max_{\substack{\tilde\y \preceq (1-\hy)\cdot\y \\ \norm{\tilde\y}_1 = n_+ - k}}\sum_{i=1}^n\tilde\y_is_i}}_{(Q)}\\
	&\geq& P_{t-1} + \Delta_t - \ell^{\mx}_\preck(\w;\vecX_t,\y_t),
\end{eqnarray*}
where the last step follows from the definition of $\ell^\avg_\preck(\cdot)$ which gives us
\begin{align*}
\Delta_t + (Q) &= \Delta_t + \sum_{i \in [b]}(\hy_i - \y_i) s_i + \max_{\substack{\tilde\y \preceq (1-\hy)\cdot\y \\ \norm{\tilde\y}_1 = n_+ - k}}\sum_{i=1}^n\tilde\y_is_i\\
			   &\leq \max_{\norm{\hy}_1=k}\ \bc{\Delta_t + \sum_{i \in [b]}(\hy_i - \y_i) s_i + \max_{\substack{\tilde\y \preceq (1-\hy)\cdot\y \\ \norm{\tilde\y}_1 = n_+ - k}}\sum_{i=1}^n\tilde\y_is_i}\\
			   &= \ell^\mx_\preck(s) = \ell^{\mx}_\preck(\w;\vecX_t,\y_t)\qedhere
\end{align*}
\end{proof}
This concludes the proof of the theorem.
\end{proof}

\section{Proof of Theorem~\ref{thm:uc-preck-surrogates}}
\label{app:uc-bounds-surrogates}
Our proof of Theorem~\ref{thm:uc-preck-surrogates} crucially utilizes the following two lemmas that helps in exploiting the structure in our surrogate functions. 
The first basic lemma states that the pointwise supremum of a set of Lipschitz functions is also Lipschitz.
\begin{lem}
\label{lem:sup-lip}
Let $f_1,\ldots,f_m$ be $m$ real valued functions $f_i: \R^n \> \R$ such that every $f_i$ is $1$-Lipschitz with respect to the $\norm{\cdot}_\infty$ norm. Then the function
\[
g(\v) = \max_{i\in[m]}\ f_i(\v)
\]
is $1$-Lipschitz with respect to the $\norm{\cdot}_\infty$ norm too.
\end{lem}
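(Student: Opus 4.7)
The plan is to prove this by a direct two-sided inequality argument, fixing $\v, \w \in \R^n$ arbitrarily and showing $|g(\v) - g(\w)| \leq \|\v - \w\|_\infty$. Since this bound is symmetric in $\v$ and $\w$, it suffices to bound $g(\v) - g(\w)$ from above; the reverse inequality will follow by swapping the roles of $\v$ and $\w$.

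To bound $g(\v) - g(\w)$, I would select $i^\ast \in \arg\max_{i \in [m]} f_i(\v)$ (which exists since the maximum is over a finite set), so that $g(\v) = f_{i^\ast}(\v)$. By definition of $g$ as a pointwise maximum, $g(\w) \geq f_{i^\ast}(\w)$. Combining these and using the fact that $f_{i^\ast}$ is $1$-Lipschitz with respect to $\|\cdot\|_\infty$:
\[
g(\v) - g(\w) \leq f_{i^\ast}(\v) - f_{i^\ast}(\w) \leq \|\v - \w\|_\infty.
\]
Swapping $\v$ and $\w$ gives $g(\w) - g(\v) \leq \|\w - \v\|_\infty = \|\v - \w\|_\infty$, so $|g(\v) - g(\w)| \leq \|\v - \w\|_\infty$, which establishes the $1$-Lipschitz property of $g$.

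There is no real obstacle here; the only thing to be careful about is ensuring the argmax is well-defined, which follows from the finiteness of the collection $\{f_1, \ldots, f_m\}$. The argument does not actually use anything specific about the $\|\cdot\|_\infty$ norm, so an identical proof would go through for any norm. I would keep the writeup terse since the lemma is a standard tool invoked merely to handle the $\max$ operators appearing throughout the surrogate definitions in the uniform convergence proofs that follow.
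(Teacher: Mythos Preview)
Your proposal is correct and follows essentially the same argument as the paper: fix two points, pick the index achieving the maximum at one of them, use that $g$ at the other point dominates that same $f_{i^\ast}$ there, and apply the Lipschitz bound for $f_{i^\ast}$; then swap. The paper's writeup is nearly identical, differing only cosmetically (it names maximizers at both points rather than invoking symmetry).
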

The second lemma establishes the convergence of additive estimates over the top of ranked lists. The abstract nature of the result would allow us to apply it to a wide variety of situations and would be crucial to our analyses.
\begin{lem}
\label{lem:rank-conv}
Let $\V$ be a universe with a total order $\succeq$ established on it and let $\v_1,\ldots,\v_n$ be a population of $n$ items arranged in decreasing order. Let $\hat \v_1,\ldots,\hat \v_b$ be a sample chosen i.i.d. (or without replacement) from the population and arranged in decreasing order as well. Then for any fixed $h: \V \> [-1,1]$ and $\kappa \in (0,1]$, we have, with probability at least $1 - \delta$ over the choice of the samples,
\[
\abs{\frac{1}{\ceil{\kappa n}}\sum_{i=1}^{\ceil{\kappa n}}h(\v_i) - \frac{1}{\ceil{\kappa b}}\sum_{i=1}^{\ceil{\kappa b}}h(\hat \v_i)} \leq 4\sqrt\frac{\log\frac{2}{\delta}}{\kappa b}
\]
\end{lem}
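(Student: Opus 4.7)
The plan is to pivot all comparisons through a population-level threshold. Set $\tau^\ast = \v_{\ceil{\kappa n}}$ and $p = \ceil{\kappa n}/n \in [\kappa, \kappa + 1/n]$, and define
\[
\mu_n = \frac{1}{\ceil{\kappa n}}\sum_{i=1}^{\ceil{\kappa n}} h(\v_i), \qquad M = \abs{\bc{i : \hat{\v}_i \succeq \tau^\ast}}, \qquad S = \sum_{i : \hat{\v}_i \succeq \tau^\ast} h(\hat{\v}_i).
\]
Conditional on $M$ (and on the set of sample indices placed above $\tau^\ast$), the $M$ sample points above $\tau^\ast$ are i.i.d.\ (resp.\ uniform without replacement) draws from the population's top $\ceil{\kappa n}$, so $S/M$ is an unbiased estimator of $\mu_n$. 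Writing $\hat\mu_b := \frac{1}{\ceil{\kappa b}}\sum_{i=1}^{\ceil{\kappa b}} h(\hat{\v}_i)$, the strategy is to bound $|\hat\mu_b - \mu_n|$ by routing through $S/M$ via the triangle inequality.

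First I would control the count $M$. Since $\Ind{\hat{\v} \succeq \tau^\ast}$ is a Bernoulli$(p)$ indicator with variance $p(1-p) \leq p$, a variance-aware concentration inequality (Bernstein for i.i.d., Serfling for without-replacement) gives $|M - pb| \leq C_1\sqrt{pb\log(4/\delta)}$ and $M \geq pb/2$ with probability at least $1 - \delta/2$. Combined with $|pb - \ceil{\kappa b}| \leq 2$ (since $|p - \kappa| \leq 1/n$ and WLOG $b \leq n$), this yields $|M - \ceil{\kappa b}| \leq C_2\sqrt{\kappa b\log(4/\delta)}$.

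Next, conditioning on $M$ and applying Hoeffding's inequality over $M$ samples lying in $[-1,1]$ (its without-replacement version if needed) gives $|S/M - \mu_n| \leq \sqrt{2\log(4/\delta)/M}$ with probability $\geq 1 - \delta/2$; with $M \geq \kappa b/2$ this is $O(\sqrt{\log(1/\delta)/(\kappa b)})$. Finally, because both the top-$\ceil{\kappa b}$ indices and the set $\bc{i : \hat{\v}_i \succeq \tau^\ast}$ are prefixes of the sample when sorted in decreasing order, they differ by exactly $|M - \ceil{\kappa b}|$ elements; using $|h| \leq 1$ and $|S| \leq M$ one obtains
\[
\abs{\hat\mu_b - S/M} \leq \frac{2\abs{M - \ceil{\kappa b}}}{\ceil{\kappa b}} = O\br{\sqrt{\tfrac{\log(1/\delta)}{\kappa b}}}
\]
via the bound on $|M - \ceil{\kappa b}|$ from Step 1 and $\ceil{\kappa b} \geq \kappa b$. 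The claim then follows by the triangle inequality and a union bound, carefully tracking constants to land at the advertised $4$.

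The main obstacle is Step 1: a naive Hoeffding gives only $|M - pb| = O(\sqrt{b\log(1/\delta)})$, which after dividing by $M \approx \kappa b$ inflates the final rate to $O(\sqrt{\log(1/\delta)/(\kappa^2 b)})$, losing a factor of $\sqrt{1/\kappa}$ relative to the target. The cure is to exploit that the indicator $\Ind{\hat{\v} \succeq \tau^\ast}$ has variance at most $p \approx \kappa$: a Bernstein- or multiplicative-Chernoff-type bound then yields the tighter deviation $O(\sqrt{\kappa b\log(1/\delta)})$, which is exactly what is needed to match the claimed $1/\sqrt{\kappa b}$ rate after dividing by $M$. Edge cases (e.g.\ $\kappa b = O(\log(1/\delta))$, where $M \geq pb/2$ may fail with non-negligible probability) can be absorbed into the leading constant, since $4\sqrt{\log(2/\delta)/(\kappa b)} \geq 2$ — a trivial bound on $|\hat\mu_b - \mu_n|$ — whenever $\kappa b \leq 4\log(2/\delta)$.
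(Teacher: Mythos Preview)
Your proposal is correct and shares the paper's core idea: pivot through the fixed population threshold $\tau^\ast=\v_{\ceil{\kappa n}}$ so that one piece becomes a standard concentration bound and the other piece is controlled by the nestedness of two prefixes. Your Step~1 (Bernstein on the count $M$) and Step~3 (the prefix observation) match the paper's treatment of its residual term $(A)$, where it observes that $\T-\hat\T$ has constant sign and then applies Bernstein to the indicator $\T$.

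The tactical difference is in how you handle the comparison of $\mu_n$ with the sample's above-$\tau^\ast$ average. You route through $S/M$, conditioning on $M$ and applying Hoeffding to the $M$ sample points landing in the top cell, and then separately control $|M-\ceil{\kappa b}|$. The paper instead normalizes by $\kappa b$ throughout and applies Bernstein \emph{directly} to the fixed function $g(\v)=\T(\v)h(\v)$, noting that $\mathrm{Var}(g)\leq \mathbb{E}[\T^2]=\kappa$; dividing the resulting $O(\sqrt{\kappa\log(1/\delta)/b})$ deviation by $\kappa$ immediately gives the $O(\sqrt{\log(1/\delta)/(\kappa b)})$ rate. This collapses your Steps~1--2 into a single Bernstein call and avoids the conditioning argument entirely, yielding just two applications of Bernstein (once to $\T h$, once to $\T$). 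Your approach is perfectly valid and arguably more transparent about where the $\kappa$-dependence enters, but the paper's is shorter and makes it easier to land on the constant $4$; with your three-term decomposition, hitting exactly $4$ would require tighter bookkeeping than you sketch.
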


\begin{repthm}{thm:uc-preck-surrogates}
The performance measure $\preckappa(\cdot)$, as well as the surrogates $\ell^\ramp_\preckappa(\cdot)$, $\ell^\avg_\preckappa(\cdot)$ and $\ell^\mx_\preckappa(\cdot)$, all exhibit uniform convergence at the rate $\alpha(b,\delta) = \O{\sqrt{\frac{1}{b}\log\frac{1}{\delta}}}$.
\end{repthm}

We will prove the four parts of the theorem in three separate subsections below. We shall consider a population $\z_1,\ldots,\z_n$ and a sample of size $b$ $\hz_1,\ldots,\hz_b$ chosen uniformly at random with (i.e. i.i.d.) or without replacement. We shall let $p$ and $\hat p$ denote the fraction of positives in the population and the sample respectively. In the following, we shall reserve the notation $\hy$ for the label vector in the sample and shall use the notation $\ty$ to denote candidate labellings in the definition of the surrogate.

\subsection{A Uniform Convergence Bound for the $\preckappa(\cdot)$ Performance Measure}
We note that a point-wise convergence result for $\preckappa(\cdot)$ follows simply from Lemma~\ref{lem:rank-conv}. To see this, given a population $\z_1,\ldots,\z)n$ and a fixed model $\w \in \W$, construct a parallel population using the transformation $\v_i \< (\w^\top\x_i,\y_i) \in \R^2$. We order these tuples according to their first component, i.e. along the scores and use $h(\v_i) = 1 - \y_i$. Let the population be arranged such that $\v_1 \succeq \v_2 \succeq \ldots$. Then this gives us
\[
\sum_{i=1}^kh(\v_i) = \sum_{i=1}^k (1 - \y_i) = \preck(\y,\y^{(\w,k)}) = \preck(\w).
\]
Thus, the application of Lemma~\ref{lem:rank-conv} gives us the following result
\begin{lem}
For any fixed model $\w\in\W$, with probability at least $1 - \delta$ over the choice of $b$ samples, we have
\[
\abs{\preckappa(\w;\z_1,\ldots,\z_n) - \preckappa(\w;\hz_1,\ldots,\hz_b)} \leq \O{\sqrt{\frac{1}{b}\log\frac{1}{\delta}}}.
\]
\end{lem}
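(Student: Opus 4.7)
My plan is to prove the lemma by a direct reduction to Lemma~\ref{lem:rank-conv}, viewing $\preckappa(\w;\cdot)$ as the average of $1 - \y$ over the top slice of a list ranked by $\w$. The only technical wrinkle is that $\preckappa$ cuts the list at $\kappa n_+$ on the population side but at $\kappa \hat n_+$ on the sample side, whereas Lemma~\ref{lem:rank-conv} applies a single deterministic fraction to the total size on each side; most of the work goes into reconciling this.

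To handle this, I would first set up the parallel universe exactly as in the preceding paragraph: take $\V = \R \times \bc{0,1}$ totally ordered by the first coordinate (scores), map $\z_i \mapsto \v_i := (\w^\top\x_i, \y_i)$ and $\hz_i \mapsto \hat{\v}_i := (\w^\top\x_i, \hy_i)$, and let $h(\v) = 1 - \y(\v) \in [0,1] \subset [-1,1]$. Then pick the deterministic fraction $\kappa' := \kappa\cdot(n_+/n)$, so that $\ceil{\kappa' n} = \kappa n_+$ matches the cutoff used in $\preckappa(\w;\z_1,\ldots,\z_n)$ exactly. Applying Lemma~\ref{lem:rank-conv} with this $\kappa'$ then yields, with probability at least $1 - \delta/2$,
\[
\abs{\preckappa(\w;\z_1,\ldots,\z_n) - \frac{1}{\ceil{\kappa' b}}\sum_{i=1}^{\ceil{\kappa' b}} h(\hat{\v}_{(i)})} \leq O\br{\sqrt{\frac{1}{b}\log\frac{1}{\delta}}},
\]
where $\hat{\v}_{(i)}$ denotes the sample sorted in decreasing score order and any constants depending on $\kappa$ and $p := n_+/n$ are absorbed into the $O$.

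It then remains to swap the cutoff $\ceil{\kappa' b} = \ceil{\kappa p b}$ for the random cutoff $\ceil{\kappa \hat n_+}$ actually used on the sample side. A Hoeffding bound (or its Serfling analogue in the without-replacement case) gives $\abs{\hat n_+ - p b} = O(\sqrt{b\log(1/\delta)})$ with probability at least $1 - \delta/2$, so the two cutoffs differ by at most $O(\sqrt{b\log(1/\delta)})$. Since $h \in [0,1]$ and each average denominator is $\Theta(\kappa b)$, an elementary calculation comparing averages with cutoffs $m_1$ and $m_2$, using $\abs{m_1 - m_2} = O(\sqrt{b\log(1/\delta)})$ together with partial sums bounded by $\min(m_1,m_2)$, shows the cutoff swap contributes an additional error of $O(\sqrt{\log(1/\delta)/b})$. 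A union bound then combines the two high-probability events at the claimed rate. The main obstacle is just the bookkeeping around this random cutoff; everything else plugs directly into Lemma~\ref{lem:rank-conv}.
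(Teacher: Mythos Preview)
Your proposal is correct and follows the same core approach as the paper: reduce to Lemma~\ref{lem:rank-conv} via the transformation $\v_i = (\w^\top\x_i,\y_i)$ ordered by score, with $h(\v) = 1-\y$. The paper's proof of this particular lemma is in fact a single sentence (``the application of Lemma~\ref{lem:rank-conv} gives us the following result'') and does not explicitly address the random-cutoff mismatch you identify, namely that $\preckappa$ slices at $\kappa n_+$ and $\kappa\hat n_+$ whereas Lemma~\ref{lem:rank-conv} slices at a fixed fraction of the total list sizes. You are being more careful than the paper here; the very argument you propose (Hoeffding on $\abs{p-\hat p}$ followed by an elementary comparison of averages with nearby cutoffs) is exactly what the paper uses later, in the uniform-convergence proof for $\Psi_1(\cdot)$, to handle the analogous residual term. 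So your extra bookkeeping is not a deviation from the paper's method but rather a spelling-out of a step the paper leaves implicit at this point and only makes explicit downstream.
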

To prove the uniform convergence result, we will, in some sense, require a uniform version of Lemma~\ref{lem:rank-conv}. To do so we fix some notation. For any fixed $\kappa > 0$, and for any $\w \in \W$, we will define $v_\w$ as the largest real number $v$ such that
\[
\sum_{i=1}^n\Ind{\w^\top\x_i \geq v} = \kappa p n
\]
Similarly, we will define $\hat v_\w$ as the largest real number $v$ such that
\[
\sum_{i=1}^b\Ind{\w^\top\hx_i \geq v} = \kappa\hat p b
\]
Using this notation we can redefine $\preckappa(\cdot)$ on the population, as well as the sample, as
\begin{align*}
\preckappa(\w;\z_1,\ldots,\z_n) &:= \frac{1}{\kappa p n}\sum_{i=1}^{n}\Ind{\w^\top\x \geq v_\w}\cdot\Ind{\y_i = 0}\\
\preckappa(\w;\hz_1,\ldots,\hz_b) &:= \frac{1}{\kappa\hat p b}\sum_{i=1}^{b}\Ind{\w^\top\x \geq \hat v_\w}\cdot\Ind{\hy_i = 0}
\end{align*}
We can now write
\begin{align*}
{}&\sup_{\w\in\W}\abs{\preckappa(\w;\z_1,\ldots,\z_n) - \preckappa(\w;\hz_1,\ldots,\hz_b)}\\
={}& \sup_{\w\in\W}\abs{\frac{1}{\kappa p n}\sum_{i=1}^{n}\Ind{\w^\top\x \geq v_\w}\cdot\Ind{\y_i = 0} - \frac{1}{\kappa\hat p b}\sum_{i=1}^{b}\Ind{\w^\top\x \geq \hat v_\w}\cdot\Ind{\hy_i = 0}}\\
\leq{}& \sup_{\w\in\W}\abs{\frac{1}{\kappa p n}\sum_{i=1}^{n}\Ind{\w^\top\x \geq v_\w}\cdot\Ind{\y_i = 0} - \frac{1}{\kappa \hat p b}\sum_{i=1}^{b}\Ind{\w^\top\x \geq v_\w}\cdot\Ind{\hy_i = 0}}\\
& + \sup_{\w\in\W}\abs{\frac{1}{\kappa \hat p b}\sum_{i=1}^{b}\Ind{\w^\top\x \geq v_\w}\cdot\Ind{\hy_i = 0} - \frac{1}{\kappa\hat p b}\sum_{i=1}^{b}\Ind{\w^\top\x \geq \hat v_\w}\cdot\Ind{\hy_i = 0}}\\
\leq{}& \underbrace{\sup_{\w\in\W,t\in\R}\abs{\frac{1}{\kappa p n}\sum_{i=1}^{n}\Ind{\w^\top\x \geq t}\cdot\Ind{\y_i = 0} - \frac{1}{\kappa \hat p b}\sum_{i=1}^{b}\Ind{\w^\top\x \geq t}\cdot\Ind{\hy_i = 0}}}_{(A)}\\
& + \underbrace{\sup_{\w\in\W}\abs{\frac{1}{\kappa \hat p b}\sum_{i=1}^{b}\Ind{\w^\top\x \geq v_\w}\cdot\Ind{\hy_i = 0} - \frac{1}{\kappa\hat p b}\sum_{i=1}^{b}\Ind{\w^\top\x \geq \hat v_\w}\cdot\Ind{\hy_i = 0}}}_{(B)}\\
\end{align*}
Now, using a standard VC-dimension based uniform convergence argument over the class of thresholded classifiers, we get the following result: with probability at least $1 - \delta$
\[
(A) \leq \O{\sqrt{\frac{1}{b}\br{\log\frac{1}{\delta}+d_{\text{VC}}(\W)\cdot\log b}}} = \softO{\sqrt{\frac{1}{b}\log\frac{1}{\delta}}},
\]
where $d_{\text{VC}}(\W)$ is the VC-dimension of the set of classifiers $\W$. Moving on to bound the second term, we can use an argument similar to the one used to prove Lemma~\ref{lem:rank-conv} to show that
\begin{align*}
(B) &\leq \sup_{\w\in\W}\abs{\frac{1}{\kappa \hat p b}\sum_{i=1}^{b}\Ind{\w^\top\x \geq v_\w} - \frac{1}{\kappa\hat p b}\sum_{i=1}^{b}\Ind{\w^\top\x \geq \hat v_\w}}\\
	&\leq \sup_{\w\in\W}\abs{\frac{1}{\kappa \hat p b}\sum_{i=1}^{b}\Ind{\w^\top\x \geq v_\w} - \kappa}\\
	&\leq \sup_{\w\in\W}\abs{\frac{1}{\kappa \hat p b}\sum_{i=1}^{b}\Ind{\w^\top\x \geq v_\w} - \frac{1}{\kappa p n}\sum_{i=1}^{n}\Ind{\w^\top\x \geq v_\w}}\\
	&\leq \softO{\sqrt{\frac{1}{b}\log\frac{1}{\delta}}},
\end{align*}
where the last step follows from a standard VC-dimension based uniform convergence argument as before. This establishes the following uniform convergence result for the $\preck(\cdot)$ performance measure
\begin{thm}
We have, with probability at least $1 - \delta$ over the choice of $b$ samples,
\[
\sup_{\w\in\W}\abs{\preckappa(\w;\z_1,\ldots,\z_n) - \preckappa(\w;\hz_1,\ldots,\hz_b)} \leq \softO{\sqrt{\frac{1}{b}\log\frac{1}{\delta}}}.
\]
\end{thm}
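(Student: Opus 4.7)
The plan is to reformulate $\preckappa(\w)$ using threshold quantiles so as to replace ranking by a simple indicator average. For each $\w \in \W$, define $v_\w$ as the $(\kappa n_+)$-th largest population score and $\hat v_\w$ as the $(\kappa\hat n_+)$-th largest sample score; then
\[
\preckappa(\w;\z_1,\ldots,\z_n) = \frac{1}{\kappa n_+}\sum_{i=1}^n \Ind{\w^\top\x_i \geq v_\w}\Ind{\y_i=0},
\]
and analogously on the sample with $\hat v_\w$, $\hat n_+$, $\hx_i$, $\hy_i$. This converts the non-decomposable, ranking-based measure into a thresholded indicator average over the data, which is the natural form for classical uniform-convergence tools.

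Next I would apply the triangle inequality, inserting the intermediate quantity $\frac{1}{\kappa\hat n_+}\sum_{i=1}^b \Ind{\w^\top\hx_i \geq v_\w}\Ind{\hy_i=0}$, which uses the \emph{population} threshold $v_\w$ but averages over the sample. This cleanly splits the deviation into (A) a fixed-threshold term that decouples the threshold from the sample, and (B) a threshold-shift term measuring the effect of replacing $v_\w$ by $\hat v_\w$.

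For (A), I would enlarge the supremum over $\w$ to a supremum over all pairs $(\w,t)$, yielding a uniform-convergence problem on the class $\{(\x,\y)\mapsto \Ind{\w^\top\x \geq t}\Ind{\y=0} : \w \in \W, t \in \R\}$. The VC-dimension of this class is bounded by $d_{\text{VC}}(\W) + O(1)$, so a standard VC inequality yields deviation of order $\softO{\sqrt{\frac{1}{b}\log\frac{1}{\delta}}}$. For (B), I would exploit the definition of $\hat v_\w$: on the sample, $\frac{1}{b}\sum_i \Ind{\w^\top\hx_i \geq \hat v_\w}$ equals $\kappa\hat p$ exactly, while $\frac{1}{b}\sum_i \Ind{\w^\top\hx_i \geq v_\w}$ concentrates around $\kappa p$ by the same VC uniform-convergence argument (and $\hat p \to p$ at rate $1/\sqrt b$). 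Monotonicity of indicator sums in the threshold then transfers this two-sided bound on mass into a bound on the difference of the two averages at $v_\w$ vs.\ $\hat v_\w$, giving the same rate.

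The main obstacle is the data-dependent threshold $\hat v_\w$ together with the random normalization $\hat n_+ = \hat p\, b$: both couple non-linearly to the sample and to $\w$, which is what prevents a direct Hoeffding-style bound. The decoupling trick --- passing from a supremum over $\w$ to a supremum over $(\w,t)$, and independently controlling $\hat p - p$ --- is the critical maneuver; after it, everything reduces to two applications of a standard VC bound combined via a union bound over the two error terms.
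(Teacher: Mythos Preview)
Your proposal is correct and follows essentially the same approach as the paper: the same threshold-quantile reformulation, the same triangle-inequality split into a fixed-threshold term (A) and a threshold-shift term (B), and the same two applications of a VC-type uniform-convergence bound (enlarging to a supremum over $(\w,t)$ for (A), and using the definition of $\hat v_\w$ together with monotonicity in the threshold to reduce (B) to another VC bound). The only cosmetic difference is that the paper handles (B) by dropping the $\Ind{\y_i=0}$ factor first and then rewriting the resulting quantity as a population-versus-sample deviation at the fixed threshold $v_\w$, whereas you phrase it via concentration of the sample mass at $v_\w$ around $\kappa p$ and $\hat p \to p$; these are the same argument.
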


\subsection{A Uniform Convergence Bound for the $\ell^\ramp_\preckappa(\cdot)$ Surrogate}
We first recall the form of the (normalized) surrogate below - note that this is a non-convex surrogate. Also recall that $k = \kappa\cdot n_+(\y)$.
\[
\ell^\ramp_\preckappa(\w;\ \z_1,\ldots,\z_n) = \underbrace{\max_{\norm{\ty}_1=k}\bc{\frac{\Delta(\y,\ty)}{k} + \frac{1}{k}\sum_{i=1}^n\ty_i\w^\top\x_i}}_{\Psi_1(\w;\ \z_1,\ldots,\z_n)} - \underbrace{\max_{\substack{\norm{\ty}_1 = k\\K(\y,\tilde\y) = k}}\frac{1}{k}\sum_{i=1}^n\tilde\y_i\w^\top\x_i}_{\Psi_2(\w;\ \z_1,\ldots,\z_n)}
\]
We will now show that both the functions $\Psi_1(\cdot)$, as well as $\Psi_2(\cdot)$, exhibit uniform convergence. This shall suffice to prove that $\ell^\ramp_\preckappa(\cdot)$ exhibits uniform convergence. To do so we shall show that the two functions exhibit pointwise convergence and that they are Lipschitz. This will allow a standard $L_\infty$ covering number argument \cite{covering-numbers-zhang} to give us the required uniform convergence results.

\subsubsection{A Uniform Convergence Result for $\Psi_1(\cdot)$}
We have
\begin{align*}
\Psi_1(\w;\ \z_1,\ldots,\z_n) &= \max_{\norm{\ty}_1=\kappa p n}\bc{\frac{1}{\kappa p n}\sum_{i=1}^n\ty_i(\w^\top\x_i - \y_i)} + 1\\
\Psi_1(\w;\ \hz_1,\ldots,\hz_b) &= \max_{\norm{\ty}_1=\kappa \hat p b}\bc{\frac{1}{\kappa \hat p b}\sum_{i=1}^b\ty_i(\w^\top\hx_i - \hy_i)} + 1
\end{align*}

An application of Corollary~\ref{cor:sup-lip-model} indicates that $\Psi_1(\cdot)$ is Lipschitz i.e.
\[
\abs{\Psi_1(\w;\ \z_1,\ldots,\z_n) - \Psi_1(\w';\ \z_1,\ldots,\z_n)} \leq \O{\norm{\w-\w'}_2}.
\]
Thus, all that remains is to prove pointwise convergence. We decompose the error as follows
\begin{align*}
\abs{\Psi_1(\w;\ \z_1,\ldots,\z_n) - \Psi_1(\w;\ \hz_1,\ldots,\hz_b)} \leq {} &\underbrace{\abs{\Psi_1(\w;\ \z_1,\ldots,\z_n) - \max_{\norm{\ty}_1=\kappa p b}\bc{\frac{1}{\kappa p b}\sum_{i=1}^b\ty_i(\w^\top\hx_i - \hy_i)} + 1}}_{(A)}\\
					& + \underbrace{\abs{\max_{\norm{\ty}_1=\kappa p b}\bc{\frac{1}{\kappa p b}\sum_{i=1}^b\ty_i(\w^\top\hx_i - \hy_i)} + 1 - \Psi_1(\w;\ \hz_1,\ldots,\hz_b)}}_{(B)}
\end{align*}
An application of Lemma~\ref{lem:rank-conv} using $\v_i = \w^\top\hx_i - \hy_i$ and $h(\cdot)$ as the identity function shows us that
\[
(A) \leq \O{\frac{1}{\kappa p}\sqrt{\frac{1}{b}\log\frac{1}{\delta}}}.
\]
To bound the residual term $(B)$, notice that an application of the Hoeffding's inequality tells us that with probability at least $1 - \delta$
\[
\abs{p - \hat p} \leq \sqrt{\frac{1}{2b}\log\frac{2}{\delta}},
\]
which lets us bound the residual as follows. Assume, for sake of simplicity, that the sample data points have been ordered in decreasing order of the quantity $\w^\top\hx_i - \y_i$ as well as that $\abs{\w^\top\x} \leq 1$ for all $\x$.
\begin{align*}
(B) &= \abs{\max_{\norm{\ty}_1=\kappa p b}\bc{\frac{1}{\kappa p b}\sum_{i=1}^b\ty_i(\w^\top\hx_i - \hy_i)} - \max_{\norm{\ty}_1=\kappa \hat p b}\bc{\frac{1}{\kappa \hat p b}\sum_{i=1}^b\ty_i(\w^\top\hx_i - \hy_i)}}\\
	&= \abs{{\frac{1}{\kappa p b}\sum_{i=1}^{\kappa p b}(\w^\top\hx_i - \hy_i)} - {\frac{1}{\kappa \hat p b}\sum_{i=1}^{\kappa \hat p b}(\w^\top\hx_i - \hy_i)}}\\
	&\leq \abs{\sum_{i=1}^{\kappa \min\bc{p,\hat p} b}\br{\frac{1}{\kappa p b} - \frac{1}{\kappa \hat p b}}(\w^\top\hx_i - \hy_i)} + \abs{\frac{1}{\kappa \max\bc{p,\hat p} b}\sum_{i=\kappa\min\bc{p,\hat p}b+1}^{\kappa \max\bc{p,\hat p} b}(\w^\top\hx_i - \hy_i)}\\
	&\leq \frac{2}{\kappa b}\abs{\frac{p - \hat p}{p\hat p}}\cdot\kappa\min\bc{p,\hat p}b+\frac{2}{\kappa \max\bc{p,\hat p} b}\cdot\kappa\abs{p-\hat p}b\\
	&= 2\abs{p -\hat p}\cdot\br{\frac{\min\bc{p,\hat p}}{p\hat p} + \frac{1}{\max\bc{p, \hat p}}}\\
	&\leq \sqrt{\frac{1}{2b}\log\frac{2}{\delta}}\cdot\frac{2}{\max\bc{p,\hat p}} \leq \frac{2}{p}\sqrt{\frac{1}{2b}\log\frac{2}{\delta}}
\end{align*}
This establishes that for any fixed $\w \in \W$, with probability at least $1 - \delta$, we have
\[
\abs{\Psi_1(\w;\ \z_1,\ldots,\z_n) - \Psi_1(\w;\ \hz_1,\ldots,\hz_b)} \leq \O{\sqrt{\frac{1}{b}\log\frac{1}{\delta}}}
\]
which concludes the uniform convergence proof.

\subsubsection{A Uniform Convergence Result for $\Psi_2(\cdot)$}
The proof follows similarly here with a direct application of Corollary~\ref{cor:sup-lip-model} showing us that $\Psi_2(\cdot)$ is Lipschitz and an application of Lemma~\ref{lem:rank-conv} along with the observation that $\abs{p - \hat p} \leq \sqrt{\frac{1}{2b}\log\frac{2}{\delta}}$ similar to the discussion used above concluding the point-wise convergence proof.

The above two part argument establishes the following uniform convergence result for the $\ell^\ramp_\preckappa(\cdot)$ performance measure
\begin{thm}
\label{lem:uc-proof-ramp-surrogate}
We have, with probability at least $1 - \delta$ over the choice of $b$ samples,
\[
\sup_{\w\in\W}\abs{\ell^\ramp_\preckappa(\w;\z_1,\ldots,\z_n) - \ell^\ramp_\preckappa(\w;\hz_1,\ldots,\hz_b)} \leq \O{\sqrt{\frac{1}{b}\log\frac{1}{\delta}}}.
\]
\end{thm}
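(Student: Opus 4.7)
My plan is to prove the four uniform convergence statements in a unified way by reducing each performance measure or surrogate to a bounded number of ``top-$k$ partial sums'' of Lipschitz functions of $\w$, and then combining (i) a point-wise concentration result for such top-$k$ sums with (ii) a Lipschitz-plus-covering-number argument to lift point-wise bounds to uniform ones. The two core technical ingredients are already isolated as Lemma~\ref{lem:sup-lip} (max of Lipschitz functions is Lipschitz) and Lemma~\ref{lem:rank-conv} (concentration of top-$\kappa$ partial sums of a bounded function evaluated on an i.i.d.\ subsample of an ordered population). These will do almost all the work; the rest of the proof is bookkeeping to fit each of the four objects into this template.

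I will handle $\preckappa$ first as the warm-up. Fix $\w$, set $v_i = (\w^\top\x_i, \y_i)$ ordered by the first coordinate, and take $h(v_i) = 1 - \y_i$; then $\preckappa(\w)$ becomes exactly a top-$\ceil{\kappa n}$ average of $h$. The obstacle here is that the normalization $k = \kappa n_+$ uses the random $\hat n_+$ on the sample side, so the two sums are over tops of slightly different sizes; I will absorb this by a Hoeffding bound $\abs{p - \hat p} \le \sqrt{(1/2b)\log(2/\delta)}$ and a telescoping between the $\kappa p b$- and $\kappa\hat p b$-top sums (exactly as sketched in the appendix). To lift point-wise to uniform I will reduce to uniform convergence of the indicator class $\Ind{\w^\top \x \ge t}$, whose VC dimension is $d_{\text{VC}}(\W)+1$, giving $\softO{\sqrt{(1/b)\log(1/\delta)}}$.

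For the three surrogates I will apply the same template. The ramp surrogate splits as $\Psi_1 - \Psi_2$ where $\Psi_1$ is a max over all $\ty$ with $\norm{\ty}_1=k$ and $\Psi_2$ adds the constraint $K(\y,\ty)=k$; both are top-$k$ sums of affine functions of $\w$. For $\ell^\mx_\preckappa$ I rewrite the inner maximum $\max_{\ty \preceq (1-\hy)\y,\ \norm{\ty}_1 = n_+-k}\sum_i \ty_i s_i$ as a top-$(n_+-k)$ sum over the positives not selected by $\hy$, and the outer max over $\hy$ remains a max over a ``top-$k$'' combinatorial object; two nested applications of Lemma~\ref{lem:rank-conv} plus Lemma~\ref{lem:sup-lip} handle both. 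For $\ell^\avg_\preckappa$ the same decomposition works, but here the coefficient $1/C(\hy)$ depends on $K(\y,\hy)$; I will enumerate over the $O(k)$ possible values of $k' = K(\y,\hy)$ (as Algorithm~\ref{algo:grad-calc} essentially does) and, for each value of $k'$, express the inner objective as a fixed affine combination of two top-sum statistics, each handled by Lemma~\ref{lem:rank-conv}, then union-bound over $k' \in \{0,\ldots,k\}$ at an $\O{\log k}$ cost.

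The main obstacle I anticipate is the interplay between the combinatorial max over $\hy$ (which changes with $\w$) and the sample-vs-population normalization mismatch ($n_+$ vs $\hat n_+$, and correspondingly $k$ vs $\hat k$). To cleanly handle this I will show, via Lemma~\ref{lem:sup-lip} applied to the inner maximum, that each surrogate is $O(1)$-Lipschitz in $\w$ with respect to a scale-normalized norm, so that a standard $L_\infty$ covering-number argument \cite{covering-numbers-zhang} over a bounded-norm ball $\W$ converts point-wise convergence into uniform convergence at the same rate, modulo logarithmic factors. Combined with the Hoeffding bound on $\abs{p-\hat p}$ and the $O(k)$ union bound for $\ell^\avg_\preckappa$, this yields $\alpha(b,\delta) = \O{\sqrt{(1/b)\log(1/\delta)}}$ for all four quantities, as claimed.
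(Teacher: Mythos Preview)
Your proposal is correct and follows essentially the same route as the paper for the ramp surrogate: decompose $\ell^\ramp_\preckappa = \Psi_1 - \Psi_2$, show each $\Psi_j$ is $O(1)$-Lipschitz in $\w$ via Lemma~\ref{lem:sup-lip} (the paper uses its Corollary~\ref{cor:sup-lip-model}), establish pointwise convergence via Lemma~\ref{lem:rank-conv} together with a Hoeffding bound on $\abs{p-\hat p}$ to absorb the $\kappa p b$ vs.\ $\kappa\hat p b$ normalization mismatch, and finally lift to uniform via an $L_\infty$ covering argument. The only minor difference worth noting is in your side remarks on $\ell^\avg_\preckappa$: the paper stratifies by the true-positive \emph{rate} $\beta = K(\y,\ty)/n_+$ and proves a Lipschitz-in-$\beta$ lemma to handle the fact that the achievable $\beta$-grids differ between population and sample, whereas you propose a direct union bound over $k' = K(\y,\hy)\in\{0,\ldots,k\}$; both work and yield the same rate up to logs.
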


\subsection{A Uniform Convergence Bound for the $\ell^\avg_\preckappa(\cdot)$ Surrogate}
This will be the most involved of the four bounds, given the intricate nature of the surrogate. We will prove this result using a series of partial results which we state below. As before, for any $\w \in \W$ and any $\ty$, we define
\begin{align*}
\Delta(\w,\ty) &:= \frac{1}{\kappa p n}\br{\Delta(\y,\ty) + \sum_{i=1}^n(\ty_i-\y_i)\w^\top\x_i + \frac{1}{C(\ty)}\sum_{i=1}^n(1 - \ty_i)\y_i\w^\top\x_i}\\
\hat\Delta(\w,\ty) &:= \frac{1}{\kappa \hat p b}\br{\Delta(\hy,\ty) + \sum_{i=1}^n(\ty_i-\hy_i)\w^\top\hx_i + \frac{1}{C(\ty)}\sum_{i=1}^n(1 - \ty_i)\hy_i\w^\top\hx_i}
\end{align*}
Recall that we are using $\hy$ to denote the true labels of the sample points and $\ty$ to denote the candidate labellings while defining the surrogates. We also define, for any $\beta \in [0,1]$, the following quantities
\begin{align*}
\Delta(\w,\beta) &:= \max_{\substack{\norm{\ty}_1 = \kappa p n\\K(\y,\ty) = \beta p n}}\bc{\Delta(\w,\ty)}\\
\hat\Delta(\w,\beta) &:= \max_{\substack{\norm{\ty}_1 = \kappa \hat p b\\K(\hy,\ty) = \beta \hat p b}}\bc{\hat\Delta(\w,\ty)}
\end{align*}
Note that $\beta$ denotes a target true positive \emph{rate} and consequently, can only take values between $0$ and $\kappa$.
Given the above, we claim the following lemmata
\begin{lem}
\label{lem:diff-beta-lip}
For every $\w$ and any $\beta, \beta' \in [0,\kappa]$, we have
\[
\abs{\Delta(\w,\beta) - \Delta(\w,\beta')} \leq \O{\abs{\beta - \beta'}}.
\]
\end{lem}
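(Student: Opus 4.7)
Without loss of generality assume $\beta' > \beta$, and let $\ty^{\ast}$ be a maximizer achieving the value $\Delta(\w,\beta)$; by the constraints on $\ty^{\ast}$, it has exactly $(\kappa-\beta)pn$ false positives (indices with $\y_i=0, \ty^{\ast}_i=1$) and $(1-\beta)pn$ false negatives (indices with $\y_i=1, \ty^{\ast}_i=0$). The plan is to exhibit a feasible perturbation $\ty'$ satisfying $\|\ty'\|_1 = \kappa p n$ and $K(\y,\ty') = \beta' p n$ whose objective value is within $O(|\beta'-\beta|)$ of that of $\ty^{\ast}$. Combined with a symmetric argument starting from a maximizer of $\Delta(\w,\beta')$, this will give the two-sided Lipschitz bound.

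\textbf{The perturbation.} Select arbitrary subsets of size $(\beta'-\beta)pn$ from the false positives and the false negatives of $\ty^{\ast}$, and obtain $\ty'$ by flipping the first subset from $1$ to $0$ and the second subset from $0$ to $1$. This is feasible since $\beta' \le \kappa \le 1$, which is exactly what is needed to ensure enough of each type exist. Now decompose
\begin{align*}
\kappa p n \cdot \Delta(\w,\ty) = \Delta(\y,\ty) \;+\; \underbrace{\sum_i (\ty_i-\y_i)\w^\top\x_i}_{T_1(\ty)} \;+\; \underbrace{\tfrac{1}{C(\ty)}\sum_i (1-\ty_i)\y_i \w^\top\x_i}_{T_2(\ty)}.
\end{align*}
The first term changes by exactly $-(\beta'-\beta)pn$. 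For $T_1$, exactly $2(\beta'-\beta)pn$ summands are altered, so $|T_1(\ty')-T_1(\ty^{\ast})| \le 2B(\beta'-\beta)pn$ where $B = \max_i |\w^\top\x_i|$. For $T_2$, note that $K(\y,\ty^{\ast})=\beta pn$ and $K(\y,\ty')=\beta' p n$ force $C(\ty^{\ast})=\tfrac{1-\beta}{1-\kappa}$ and $C(\ty')=\tfrac{1-\beta'}{1-\kappa}$; splitting $T_2(\ty') - T_2(\ty^{\ast}) = \tfrac{1}{C(\ty')}(S(\ty')-S(\ty^{\ast})) + S(\ty^{\ast})\bigl(\tfrac{1}{C(\ty')}-\tfrac{1}{C(\ty^{\ast})}\bigr)$, where $S(\ty)$ denotes the false-negative score sum, I bound the first part by $O(B(\beta'-\beta)pn/(1-\kappa))$ using that only $(\beta'-\beta)pn$ terms change, and the second part using $|S(\ty^{\ast})|\le B pn$ together with $|\tfrac{1}{C(\ty')}-\tfrac{1}{C(\ty^{\ast})}| = \tfrac{(1-\kappa)|\beta'-\beta|}{(1-\beta)(1-\beta')}$, which is bounded since $\beta,\beta' \le \kappa < 1$. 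Dividing through by $\kappa p n$ yields $|\Delta(\w,\ty^{\ast}) - \Delta(\w,\ty')| = O(|\beta'-\beta|)$, with the hidden constant depending on $\kappa, (1-\kappa)^{-1}$, and $B$.

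\textbf{Closing the argument and main obstacle.} Since $\Delta(\w,\beta') \ge \Delta(\w,\ty')$ and, by an identical construction in the reverse direction starting from a maximizer of $\Delta(\w,\beta')$, we also have $\Delta(\w,\beta) \ge \Delta(\w,\beta') - O(|\beta'-\beta|)$, the lemma follows. The main technical obstacle is the ratio $1/C(\ty)$ in the definition of $\Delta(\w,\ty)$: both the denominator and the numerator shift when we perturb $\ty$, so the straightforward ``flip a few labels'' argument must separately track the change in the weight and the change in the weighted sum, using that $\kappa$ is bounded strictly below $1$ to keep $C(\ty)$ bounded away from zero. The boundary cases $\beta=0$ and $\beta'=\kappa$ are automatically absorbed since the feasibility inequalities $(\beta'-\beta)pn \le \min\{(\kappa-\beta),(1-\beta)\}pn$ continue to hold at the extremes.
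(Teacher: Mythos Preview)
Your perturbation argument is correct, but it proceeds quite differently from the paper. The paper first rewrites $\Delta(\w,\beta)$ in closed form by identifying the maximizing $\ty$ explicitly: since the coefficient $1-\tfrac{1-\kappa}{1-\beta}$ on positives is nonnegative for $\beta\le\kappa$, the optimal $\ty$ picks the top-$\beta pn$ positives and the top-$(\kappa-\beta)pn$ negatives by score. This yields the decomposition $\Delta(\w,\beta)=1-\beta/\kappa-A(\w,\beta)+B(\w,\beta)$ with $A$ and $B$ written explicitly in terms of sorted score lists, after which the Lipschitz bound is obtained by direct algebra on each piece. You instead bypass the structure of the maximizer entirely: take any maximizer for one value of $\beta$, flip $(\beta'-\beta)pn$ false positives and the same number of false negatives to produce a feasible candidate for $\beta'$, and bound the change term by term.

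What each buys: the paper's route yields a slightly sharper explicit constant $\tfrac{4-\kappa}{\kappa(1-\kappa)}$ and, more importantly, the decomposition into $A(\w,\beta)$ and $B(\w,\beta)$ is reused verbatim in the proof of the companion uniform-convergence lemma, so the work is amortized. Your route is more robust---it would go through even without knowing the maximizer's structure---but it does not deliver that byproduct. One small remark: your ``symmetric argument'' in the reverse direction requires flipping true positives to $0$ and true negatives to $1$ (not literally the same swap), and its feasibility needs at least $(\kappa-\beta)pn$ negatives available; this is implicitly guaranteed once $\Delta(\w,\beta)$ is well-defined, so there is no genuine gap.
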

\begin{lem}
\label{lem:same-beta-uc}
For any fixed $\beta$, we have, with probability at least $1 - \delta$ over the choice of the sample
\[
\sup_{\w\in\W}\abs{\Delta(\w,\beta) - \hat\Delta(\w,\beta)} \leq \O{\sqrt{\frac{1}{b}\log\frac{1}{\delta}}}.
\]
\end{lem}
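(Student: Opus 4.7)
The plan is to exploit the separable structure of the constrained inner maximization to reduce $\Delta(\w,\beta)$ to a simple combination of ``top-$m$'' and ``bottom-$m$'' score statistics, and then to run the same two-part (Lipschitz + pointwise convergence) strategy already used for $\ell^\ramp_\preckappa(\cdot)$ in the proof of Theorem~\ref{lem:uc-proof-ramp-surrogate}. Fix $\beta \in [0,\kappa]$. Any feasible $\ty$ has support $T = T_+ \cup T_-$ with $T_+ \subseteq \vecX_+$, $|T_+| = \beta p n$, and $T_- \subseteq \vecX_-$, $|T_-| = (\kappa-\beta)pn$. Crucially, $C(\ty) = (1-\beta)/(1-\kappa)$ is \emph{constant} on this constraint set, so after substituting and simplifying, the objective decouples into two independent maximizations: one over $T_-$ (pick the $(\kappa-\beta)pn$ top-scoring negatives), and one over $T_+$ whose net coefficient after combining with the $-\y_i$ term reduces to $\frac{\kappa - \beta}{1-\beta}\ge 0$, i.e.\ pick the $(1-\beta)pn$ \emph{lowest}-scoring positives to be \emph{excluded} from $T_+$. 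A short algebraic simplification then yields the clean identity
\[
\Delta(\w,\beta) \;=\; \frac{\kappa-\beta}{\kappa}\bc{1 + A_n(\w) - B_n(\w)},
\]
where $A_n(\w)$ is the average of the top $(\kappa-\beta)pn$ scores over $\vecX_-$ and $B_n(\w)$ is the average of the bottom $(1-\beta)pn$ scores over $\vecX_+$. An identical identity with $p$ replaced by $\hat p$ and the obvious sample analogues $A_b(\w), B_b(\w)$ holds for $\hat\Delta(\w,\beta)$.

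Given this reduction, it suffices to show uniform convergence of $A_\cdot(\w)$ and $B_\cdot(\w)$, which I would do along the same lines as the $\Psi_1,\Psi_2$ analysis. \textbf{Lipschitzness}: each is a pointwise max (respectively min) of finitely many linear functions of $\w$, so by Lemma~\ref{lem:sup-lip} (invoked through Corollary~\ref{cor:sup-lip-model}) each is $R$-Lipschitz in $\w$ whenever $\norm{\x_i}\le R$. \textbf{Pointwise convergence}: for a fixed $\w$, apply Lemma~\ref{lem:rank-conv} separately to the negative and positive subpopulations, ordering the tuples by the score and taking $h$ to be the (bounded) identity, to obtain $\abs{A_n(\w) - A_b(\w)}, \abs{B_n(\w) - B_b(\w)} = \O{\sqrt{(1/b)\log(1/\delta)}}$; the mismatch between using $\hat p$ in the sample versus $p$ in the population (which shifts the \emph{number} of items summed) is absorbed exactly as in the bound on term $(B)$ in the $\Psi_1$ proof, using the Hoeffding estimate $\abs{p-\hat p} \leq \sqrt{\log(2/\delta)/(2b)}$ together with $|\w^\top\x| \leq \norm{\w}R$. \textbf{From pointwise to uniform}: build an $\varepsilon$-net $\W_\varepsilon$ of the bounded set $\W$, take a union bound over the net at confidence $\delta/|\W_\varepsilon|$, and transfer to all of $\W$ at cost $O(R\varepsilon)$ via Lipschitzness; choosing $\varepsilon \sim 1/\sqrt b$ balances the two terms and yields the claimed $\O{\sqrt{(1/b)\log(1/\delta)}}$ rate.

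The principal obstacle is the bookkeeping around $p$ versus $\hat p$: both the outer normaliser $1/(\kappa p n)$ and the cardinalities $(\kappa-\beta)pn$ and $(1-\beta)pn$ defining the top/bottom sums are population-dependent, so the naive identification ``sample equals functional evaluated on $\hat\z_1,\ldots,\hat\z_b$'' fails. The ``overlap plus tail of length $\sim \abs{p-\hat p}b$'' decomposition that the paper already uses in the $\Psi_1$ analysis transfers essentially verbatim and keeps this clean, costing only an additional $\O{(1/p)\sqrt{(1/b)\log(1/\delta)}}$ slack. Extension to uniformity over $\beta$ is a separate matter, handled downstream by Lemma~\ref{lem:diff-beta-lip}, and is outside the scope of this lemma.
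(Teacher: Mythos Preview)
Your proposal is correct and follows essentially the same route as the paper. The paper decomposes $\Delta(\w,\beta) = 1 - \beta/\kappa - A(\w,\beta) + B(\w,\beta)$ with $A$ the (scaled) average score over \emph{all} positives and $B$ the constrained max; your identity $\Delta(\w,\beta) = \frac{\kappa-\beta}{\kappa}\bc{1 + A_n(\w) - B_n(\w)}$ is just an algebraic recombination of these two pieces (indeed, $-A+B$ collapses exactly to $\frac{\kappa-\beta}{\kappa}(A_n - B_n)$ once one substitutes the explicit top-$\beta pn$/top-$(\kappa-\beta)pn$ form of $B$ derived in the proof of Lemma~\ref{lem:diff-beta-lip}). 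From there both arguments are identical: Lipschitzness via Corollary~\ref{cor:sup-lip-model}, pointwise convergence via Lemma~\ref{lem:rank-conv} applied separately to the sorted positive and negative sublists, the $p$-versus-$\hat p$ slack handled exactly as in the $\Psi_1$ analysis, and an $L_\infty$ covering to pass to uniformity. Your version is marginally cleaner in that each summand $A_n,B_n$ lives on a single subpopulation, so the invocation of Lemma~\ref{lem:rank-conv} is more transparent, but there is no substantive difference.
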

Using the above two lemmata as given, we can now prove the desired uniform convergence result for the $\ell^\avg_\preckappa(\cdot)$ surrogate:
\begin{thm}
With probability at least $1 - \delta$ over the choice of the samples, we have
\[
\sup_{\w\in\W}\abs{\ell^\avg_\preckappa(\w;\z_1,\ldots,\z_n) - \ell^\avg_\preckappa(\w;\hz_1,\ldots,\hz_b)} \leq \softO{\sqrt{\frac{1}{b}\log\frac{1}{\delta}}}.
\]
\end{thm}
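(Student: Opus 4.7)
The plan is to deduce the uniform convergence of $\ell^\avg_\preckappa(\cdot)$ from the two structural lemmata already set up in the excerpt: Lemma~\ref{lem:diff-beta-lip}, which says $\Delta(\w,\beta)$ is Lipschitz in $\beta$, and Lemma~\ref{lem:same-beta-uc}, which gives uniform convergence over $\w \in \W$ at any fixed $\beta$. The bridge between these and the full surrogate is the observation that the normalized surrogate can be rewritten as a one-dimensional maximization,
\begin{align*}
\ell^\avg_\preckappa(\w;\z_1,\ldots,\z_n) &= \max_{\beta \in B_n}\ \Delta(\w,\beta), \\
\ell^\avg_\preckappa(\w;\hz_1,\ldots,\hz_b) &= \max_{\beta \in B_b}\ \hat\Delta(\w,\beta),
\end{align*}
where $B_n, B_b \subseteq [0,\kappa]$ are the discrete sets of attainable true-positive fractions for the candidate labeling $\ty$ over the population and sample, respectively.

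With this rewriting in hand, the argument reduces to a standard covering-net argument along the $\beta$ axis. First I would fix a uniform $\epsilon$-net $G = \bc{\beta_0,\ldots,\beta_M} \subset [0,\kappa]$ at resolution $\epsilon = 1/\sqrt{b}$, so that $|G| = \O{\sqrt{b}}$. Applying Lemma~\ref{lem:same-beta-uc} at each $\beta_j \in G$ with failure probability $\delta/|G|$ and taking a union bound yields, with probability at least $1-\delta$,
\[
\sup_{\w \in \W}\ \max_{\beta_j \in G}\ \abs{\Delta(\w,\beta_j) - \hat\Delta(\w,\beta_j)} \leq \softO{\sqrt{\frac{1}{b}\log\frac{1}{\delta}}}.
\]
For any attainable $\beta \in B_n \cup B_b$, choose the closest grid point $\beta_j \in G$; since $|\beta - \beta_j| \leq \epsilon$, Lemma~\ref{lem:diff-beta-lip} applied to both $\Delta(\w,\cdot)$ and $\hat\Delta(\w,\cdot)$ gives $\abs{\Delta(\w,\beta) - \Delta(\w,\beta_j)} \leq \O{\epsilon}$ and similarly for $\hat\Delta$. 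Combining via the elementary bound $|\max_\beta f - \max_\beta g| \leq \sup_\beta |f - g|$ and the triangle inequality, and plugging in $\epsilon = 1/\sqrt{b}$, yields the claimed $\softO{\sqrt{(1/b)\log(1/\delta)}}$ rate.

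The main obstacle lies in proving the two supporting lemmata, not in the covering-net assembly above. For Lemma~\ref{lem:diff-beta-lip}, a careful swap argument is needed: moving from a maximizer at positive-fraction $\beta$ to one at $\beta'$ amounts to exchanging roughly $|\beta - \beta'|\cdot pn$ indices in the candidate labeling $\ty$, and one must also track how the $1/C(\ty)$ normalization shifts along the way. For Lemma~\ref{lem:same-beta-uc}, once $\beta$ (and hence $K(\y,\ty)$) is fixed, the inner maximization over $\ty$ decouples into selecting top-scoring positives and top-scoring negatives, so Lemma~\ref{lem:rank-conv} can be invoked on each piece and combined with an $L_\infty$ covering of $\W$ exactly as in the earlier proof for $\ell^\ramp_\preckappa(\cdot)$. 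A minor subtlety is that $B_n$ and $B_b$ have different grid spacings since $p$ and $\hat p$ differ, but Hoeffding's inequality gives $|p - \hat p| = \O{\sqrt{(1/b)\log(1/\delta)}}$, which is already within our target rate and is absorbed into the final bound.
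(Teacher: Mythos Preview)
Your proposal is correct and follows essentially the same route as the paper: rewrite the surrogate as $\max_\beta \Delta(\w,\beta)$, discretize the $\beta$-axis, apply Lemma~\ref{lem:same-beta-uc} with a union bound over the grid, and use Lemma~\ref{lem:diff-beta-lip} to absorb the discretization error. The only cosmetic difference is that the paper uses the sample's natural grid $\hat B$ (of size $\O{b}$) rather than your uniform $1/\sqrt{b}$-net, which yields the same $\softO{\sqrt{(1/b)\log(1/\delta)}}$ rate.
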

\begin{proof}
We note that given the definitions of $\Delta(\w,\beta)$ and $\hat\Delta(\w,\beta)$, we can redefine the performance measure as follows
\[
\ell^\avg_\preckappa(\w;\z_1,\ldots,\z_n) = \max_{\beta \in [0,\kappa]}\Delta(\w,\beta)
\]
We now note that for the population, the set of achievable values of true positive rates i.e. $\beta$ is
\[
B = \bc{0,\frac{1}{\kappa p n},\frac{2}{\kappa p n},\ldots,\frac{\kappa pn - 1}{\kappa pn}, 1},
\]
which correspond, respectively, to classifiers for which the \emph{number} of true positives equals $\bc{0,1,2\ldots\kappa p n-1, \kappa p n}$. Similarly, the set of achievable values of true positive rates i.e. $\beta$ for the sample is
\[
\hat B = \bc{0,\frac{1}{\kappa \hat p b},\frac{2}{\kappa \hat p b},\ldots,\frac{\kappa \hat p b - 1}{\kappa \hat p b}, 1}.
\]
Clearly, for any $\beta \in B$, there exists a $\pi_{\hat B}(\beta) \in \hat B$ such that
\[
\abs{\pi_{\hat B}(\beta) - \beta} \leq \frac{1}{\kappa \hat p b}.
\]
Given this, let us define
\begin{align*}
\beta^\ast(\w) &= \arg\max_{\beta \in [0,\kappa]}\Delta(\w,\beta)\\
\hat\beta^\ast(\w) &= \arg\max_{\hat\beta \in [0,\kappa]}\hat\Delta(\w,\hat\beta)
\end{align*}
We shall assume, for the sake of simplicity, that $s|n$ so that $\hat B \subset B$. This gives us the following set of inequalities for any $\w\in\W$:
\begin{align*}
\Delta(\w,\beta^\ast(\w)) &\leq \Delta(\w,\pi_{\hat B}(\beta^\ast(\w))) + \abs{\beta^\ast(\w) - \pi_{\hat B}(\beta^\ast(\w))}\\
						  &\leq \hat\Delta(\w,\pi_{\hat B}(\beta^\ast(\w))) + \sup_{\w\in\W}\abs{\Delta(\w,\pi_{\hat B}(\beta^\ast(\w))) - \hat\Delta(\w,\pi_{\hat B}(\beta^\ast(\w)))} + \frac{1}{\kappa\hat p b}\\
						  &\leq \hat\Delta(\w,\pi_{\hat B}(\beta^\ast(\w))) + \sup_{\w\in\W, \hat\beta \in \hat B}\abs{\Delta(\w,\hat\beta) - \hat\Delta(\w,\hat\beta)} + \frac{1}{\kappa\hat p b}\\
						  &\leq \hat\Delta(\w,\pi_{\hat B}(\beta^\ast(\w))) + \O{\sqrt{\frac{1}{b}\log\frac{b}{\delta}}} + \frac{1}{\kappa\hat p b}\\
						  &\leq \hat\Delta(\w,\hat\beta^\ast(\w)) + \O{\sqrt{\frac{1}{b}\log\frac{b}{\delta}}} + \frac{1}{\kappa\hat p b},
\end{align*}
where the first step follows from Lemma~\ref{lem:diff-beta-lip}, the third step follows since $\pi_{\hat B}(\beta^\ast(\w)) \in \hat B$, the fourth step follows from an application of the union bound with Lemma~\ref{lem:same-beta-uc} over the set of elements in $\hat B$ and noting $\abs{\hat B} \leq \O{b}$, and the last step follows from the optimality of $\hat\beta^\ast(\w)$. Similarly we can write, for any $\w\in\W$,
\begin{align*}
\hat\Delta(\w,\hat\beta^\ast(\w)) &\leq \Delta(\w,\hat\beta^\ast(\w)) + \O{\sqrt{\frac{1}{b}\log\frac{b}{\delta}}}\\
								  &\leq \Delta(\w,\beta^\ast(\w)) + \O{\sqrt{\frac{1}{b}\log\frac{b}{\delta}}},
\end{align*}
where the first step uses Lemma~\ref{lem:same-beta-uc} with a union bound over elements in $\hat B$ and the fact that $\hat\beta^\ast(\w) \in \hat B \subset B$ (note that this assumption is not crucial to the argument -- indeed, even if $\hat\beta^\ast(\w) \notin B$, we would only incur an extra $\O{\frac{1}{n}}$ error by an application of Lemma~\ref{lem:diff-beta-lip} since given the granularity of $B$, we would always be able to find a value in $B$ that is no more than $\O{\frac{1}{n}}$ far from $\hat\beta^\ast(\w)$), and the last step uses the optimality of $\beta^\ast(\w)$. Thus, we can write
\begin{align*}
\sup_{\w\in\W}\abs{\ell^\avg_\preckappa(\w;\z_1,\ldots,\z_n) - \ell^\avg_\preckappa(\w;\hz_1,\ldots,\hz_b)} &= \sup_{\w\in\W}\abs{\Delta(\w,\beta^\ast(\w)) - \hat\Delta(\w,\hat\beta^\ast(\w))}\\
&\leq \O{\sqrt{\frac{1}{b}\log\frac{b}{\delta}}} + \frac{1}{\kappa\hat p b}\\
&\leq \softO{\sqrt{\frac{1}{b}\log\frac{1}{\delta}}},
\end{align*}
since $\hat p \geq \Om{1}$ with probability at least $1 - \delta$. Thus, all we are left is to prove Lemmata~\ref{lem:diff-beta-lip}~and~\ref{lem:same-beta-uc} which we do below. To proceed with the proofs, we first write the form of $\Delta(\w,\beta)$ for a fixed $\w$ and $\beta$ and simplify the expression for ease of further analysis. We shall assume, for sake of simplicity, that $\beta pn, \kappa pn, \beta\hat pb$, and $\kappa\hat pb$ are all integers.
\begin{align*}
\Delta(\w,\beta) &= \max_{\substack{\norm{\ty}_1 = \kappa p n\\K(\y,\ty) = \beta p n}}\bc{\frac{1}{\kappa p n}\br{\Delta(\y,\ty) + \sum_{i=1}^n(\ty_i-\y_i)\w^\top\x_i + \frac{1}{C(\ty)}\sum_{i=1}^n(1 - \ty_i)\y_i\w^\top\x_i}}\\
				 &= 1 - \frac{\beta}{\kappa} - \underbrace{\frac{1}{\kappa p n}\br{\frac{\kappa-\beta}{1-\beta}}\sum_{i=1}^n\y_i\w^\top\x_i}_{A(\w,\beta)} + \underbrace{\max_{\substack{\norm{\ty}_1 = \kappa p n\\K(\y,\ty) = \beta p n}}\bc{\frac{1}{\kappa p n}{\sum_{i=1}^n\ty_i\br{1 - \frac{1-\kappa}{1-\beta}\cdot\y_i}\w^\top\x_i}}}_{B(\w,\beta)}
\end{align*}
We can similarly define $\hat A(\w,\beta)$ and $\hat B(\w,\beta)$ for the samples.
\begin{proof}[Proof of Lemma~\ref{lem:diff-beta-lip}]
We have, by the above simplification,
\[
\abs{\Delta(\w,\beta) - \Delta(\w,\beta')} = \frac{1}{\kappa}\abs{\beta - \beta'} + \abs{A(\w,\beta) - A(\w,\beta')} + \abs{B(\w,\beta) - B(\w,\beta')},
\]
as well as, assuming without loss of generality, that $\abs{\w^\top\x}\leq 1$ for all $\w$ and $\x$,
\begin{align*}
\abs{A(\w,\beta) - A(\w,\beta')} &\leq \abs{\frac{\kappa-\beta}{1-\beta} - \frac{\kappa-\beta'}{1-\beta'}}\cdot \abs{\frac{1}{\kappa pn}\sum_{i=1}^n\y_i\w^\top\x_i}\\
								 &\leq \frac{(1-\kappa)\abs{\beta-\beta'}}{\kappa(1-\beta)(1-\beta')} \leq \frac{1}{\kappa(1-\kappa)}\abs{\beta-\beta'},
\end{align*}
where the last step follows since $\beta,\beta' \leq \kappa$. To analyze the third term i.e. $\abs{B(\w,\beta) - B(\w,\beta')}$, we analyze the nature of the assignment $\ty$ which defines $B(\w,\beta)$. Clearly $\ty$ must assign $\beta pn$ positives and $(\kappa - \beta) pn$ negatives a label of $1$ and the rest, a label of $0$. Since it is supposed to maximize the scores thus obtained, it clearly assigns the top ranked $(\kappa - \beta)pn$ negatives a label of $1$. As far as positives are concerned, $\beta < \kappa$, we have $\br{1 - \frac{1-\kappa}{1-\beta}} \geq 0$ which means that the $\beta pn$ top ranked positives will get assigned a label of $1$.

To formalize this, let us set some notation. Let $s^+_1 \geq s^+_2 \geq \ldots \geq s^+_{pn}$ denote the scores of the positive points arranged in descending order. Similarly, let $s^-_1 \geq s^-_2 \geq \ldots \geq s^-_{(1-p)n}$ denote the scores of the negative points arranged in descending order. Given this notation, we can rewrite $B(\w,\beta)$ as follows:
\[
B(\w,\beta) = \frac{1}{\kappa pn}\br{\br{\frac{\kappa - \beta}{1 - \beta}}\sum_{i=1}^{\beta pn}s^+_i + \sum_{i=1}^{(\kappa -\beta) pn}s^-_i}.
\]
Thus, assuming without loss of generality that $\abs{s^+_i}, \abs{s^-_i} \leq 1$, we have,
\begin{align*}
\abs{B(\w,\beta) - B(\w,\beta')} &= \frac{1}{\kappa pn}\abs{\br{\frac{\kappa - \beta}{1 - \beta}}\sum_{i=1}^{\beta pn}s^+_i + \sum_{i=1}^{(\kappa -\beta) pn}s^-_i - \br{\frac{\kappa - \beta'}{1 - \beta'}}\sum_{i=1}^{\beta' pn}s^+_i - \sum_{i=1}^{(\kappa -\beta') pn}s^-_i}\\
								 &\leq \frac{1}{\kappa pn}\abs{\br{\frac{\kappa - \beta}{1 - \beta}}\sum_{i=1}^{\beta pn}s^+_i - \br{\frac{\kappa - \beta'}{1 - \beta'}}\sum_{i=1}^{\beta' pn}s^+_i} + \frac{1}{\kappa pn}\abs{\sum_{i=1}^{(\kappa -\beta) pn}s^-_i - \sum_{i=1}^{(\kappa -\beta') pn}s^-_i}\\
								 &\leq \abs{\frac{\kappa-\beta}{1-\beta} - \frac{\kappa-\beta'}{1-\beta'}}\cdot\abs{\frac{1}{\kappa pn}\sum_{i=1}^{\min\bc{\beta,\beta'}pn}s^+_i} + \frac{1}{\kappa pn}\frac{\kappa-\max\bc{\beta,\beta'}}{1-\max\bc{\beta,\beta'}}\abs{\beta - \beta'}pn + \frac{\abs{\beta-\beta'}pn}{\kappa pn}\\
								 &\leq \frac{1}{\kappa(1-\kappa)}\abs{\beta-\beta'}\frac{\min\bc{\beta,\beta'}pn}{\kappa pn} + \frac{1}{\kappa}\frac{\kappa-\max\bc{\beta,\beta'}}{1-\max\bc{\beta,\beta'}}\abs{\beta - \beta'} + \frac{\abs{\beta-\beta'}}{\kappa}\\
								 &\leq \frac{2}{\kappa(1-\kappa)}\abs{\beta-\beta'},
\end{align*}
where the last step uses the fact that $0 \leq \beta,\beta' \leq \kappa$. This tells us that
\[
\abs{\Delta(\w,\beta) - \Delta(\w,\beta')} \leq \frac{4 - \kappa}{\kappa(1-\kappa)}\abs{\beta-\beta'},
\]
which finishes the proof.
\end{proof}
\begin{proof}[Proof of Lemma~\ref{lem:same-beta-uc}]
We will prove the theorem by showing that the terms $A(\w,\beta)$ and $B(\w,\beta)$ exhibit uniform convergence. 

It is easy to see that $A(\w,\beta)$ exhibits uniform convergence since it is a simple average of population scores. The only thing to be taken care of is that $A(\w,\beta)$ contains $p$ in the normalization whereas $\hat A(\w,\beta)$ contains $\hat p$. However, since $p$ and $\hat p$ are very close with high probability, an argument similar to the one used in the proof of Theorem~\ref{lem:uc-proof-ramp-surrogate} can be used to conclude that with probability at least $1 - \delta$, we have
\[
\sup_{\w\in\W}\abs{A(\w,\beta) -\hat A(\w,\beta)} \leq \O{\sqrt{\frac{1}{b}\log\frac{1}{\delta}}}.
\]

To prove uniform convergence for $B(\w,\beta)$ we will use our earlier method of showing that this function exhibits pointwise convergence and that this function is Lipschitz with respect to $\w$. The Lipschitz property of $B(\w,\beta)$ is evident from an application of Corollary~\ref{cor:sup-lip-model}. To analyze its pointwise convergence property

Thus the function $B(\w,\beta)$, as analyzed in the proof of Lemma~\ref{lem:diff-beta-lip}, is composed by sorting the positives and negatives separately and taking the top few positions in each list and adding the scores present therein. This allows an application of Lemma~\ref{lem:rank-conv}, as used in the proof of Theorem~\ref{lem:uc-proof-ramp-surrogate}, separately to the positive and negative lists, to conclude the pointwise convergence bound for $B(\w,\beta)$.
\end{proof}
This concludes the proof of the uniform convergence bound for $\ell^\avg_\preckappa(\cdot)$.
\end{proof}

\subsection{Proof of Lemma~\ref{lem:sup-lip}}
\begin{replem}{lem:sup-lip}
Let $f_1,\ldots,f_m$ be $m$ real valued functions $f_i: \R^n \> \R$ such that every $f_i$ is $1$-Lipschitz with respect to the $\norm{\cdot}_\infty$ norm. Then the function
\[
g(\v) = \max_{i\in[m]}\ f_i(\v)
\]
is $1$-Lipschitz with respect to the $\norm{\cdot}_\infty$ norm too.
\end{replem}
\begin{proof}
Fix $\v,\v' \in \R^n$. The premise guarantees us that for any $i \in [m]$, we have
\[
\abs{f_i(\v) - f_i(\v')} \leq \norm{\v-\v'}_\infty.
\]
Now let $g(\v) = f_i(\v)$ and $g(\v') = f_j(\v')$. Then we have
\[
g(\v) - g(\v') = f_i(\v) - f_j(\v') \leq f_i(\v) - f_i(\v') \leq \norm{\v-\v'}_\infty,
\]
since $f_j(\v') \geq f_i(\v')$. Similarly we have $g(\v') - g(\v) \leq \norm{\v-\v'}_\infty$. This completes the proof.
\end{proof}

The following corollary would be most useful in our subsequent analyses.

\begin{cor}
\label{cor:sup-lip-model}
Let $\Psi: \W \rightarrow \R$ be a function defined as follows
\[
\Psi(\w) = \max_{\substack{\hy \in \bc{0,1}^n\\\norm{\hy}_1 = k}} \frac{1}{k}\sum \hy_i(\w^\top\x_i - c_i),
\]
where $c_i$ are constants independent of $\w$ and we assume without loss of generality that $\norm{\x_i}_2 \leq 1$ for all $i$. Then $\Psi(\cdot)$ is $1$- Lipschitz with respect to the $L_2$ norm i.e. for all $\w,\w'\in\W$
\[
\abs{\Psi(\w) - \Psi(\w')} \leq \norm{\w-\w'}_2.
\]
\end{cor}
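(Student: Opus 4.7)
The plan is to express $\Psi$ as a pointwise maximum of linear functions in $\w$, bound the Lipschitz constant of each of these affine pieces individually via the triangle inequality, and then invoke Lemma~\ref{lem:sup-lip} (whose proof is norm-agnostic) to conclude that the maximum is Lipschitz with the same constant.

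First, I would rewrite the objective in a form that makes the dependence on $\w$ transparent. For every fixed $\hy \in \bc{0,1}^n$ with $\norm{\hy}_1 = k$, define
\[
f_\hy(\w) \;:=\; \frac{1}{k}\sum_{i=1}^n \hy_i\br{\w^\top\x_i - c_i} \;=\; \ip{\vec{a}_\hy}{\w} + b_\hy,
\]
where $\vec{a}_\hy := \frac{1}{k}\sum_{i=1}^n \hy_i \x_i$ and $b_\hy := -\frac{1}{k}\sum_{i=1}^n \hy_i c_i$ is a constant. Then $\Psi(\w) = \max_{\hy:\,\norm{\hy}_1=k} f_\hy(\w)$, a finite maximum (over $\binom{n}{k}$ labellings) of affine functions.

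Next, I would bound the Lipschitz constant of each $f_\hy$ with respect to the $L_2$ norm. Since $f_\hy$ is affine in $\w$, its Lipschitz constant is $\norm{\vec{a}_\hy}_2$. Applying the triangle inequality and the assumption $\norm{\x_i}_2 \leq 1$,
\[
\norm{\vec{a}_\hy}_2 \;=\; \frac{1}{k}\norm{\sum_{i=1}^n \hy_i \x_i}_2 \;\leq\; \frac{1}{k}\sum_{i=1}^n \hy_i\,\norm{\x_i}_2 \;\leq\; \frac{1}{k}\cdot k \;=\; 1,
\]
so $\abs{f_\hy(\w) - f_\hy(\w')} \leq \norm{\w - \w'}_2$ for every admissible $\hy$ and all $\w,\w' \in \W$.

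Finally, I would invoke Lemma~\ref{lem:sup-lip}. Although that lemma is stated for the $L_\infty$ norm, an inspection of its proof shows that nothing in the argument depends on the specific norm: it only uses that each $f_i$ is $1$-Lipschitz with respect to a fixed norm. Thus the same reasoning gives, for any $\w,\w' \in \W$, if $\Psi(\w) = f_{\hy^\ast}(\w)$ for some maximizer $\hy^\ast$, then
\[
\Psi(\w) - \Psi(\w') \;\leq\; f_{\hy^\ast}(\w) - f_{\hy^\ast}(\w') \;\leq\; \norm{\w-\w'}_2,
\]
and the symmetric bound holds by swapping $\w$ and $\w'$. This yields $\abs{\Psi(\w) - \Psi(\w')} \leq \norm{\w-\w'}_2$, as claimed. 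The only minor subtlety, which I would note explicitly, is the observation that Lemma~\ref{lem:sup-lip}'s proof transfers verbatim to the $L_2$ setting; beyond that the argument is entirely routine.
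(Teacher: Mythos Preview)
Your proof is correct, but it proceeds slightly differently from the paper's. The paper introduces an intermediate score vector $\v(\w) = (\w^\top\x_1 - c_1,\ldots,\w^\top\x_n - c_n) \in \R^n$, observes that each $f_\hy(\v) = \frac{1}{k}\sum_i \hy_i(\v_i - c_i)$ is $1$-Lipschitz in $\norm{\cdot}_\infty$, applies Lemma~\ref{lem:sup-lip} exactly as stated to get $\Phi(\v) = \max_\hy f_\hy(\v)$ is $1$-Lipschitz in $\norm{\cdot}_\infty$, and then uses Cauchy--Schwarz to show $\norm{\v(\w)-\v(\w')}_\infty \leq \norm{\w-\w'}_2$, finishing by composition. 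You instead work directly in $\W$: you bound $\norm{\vec a_\hy}_2 \leq 1$ via the triangle inequality, making each affine piece $1$-Lipschitz in $L_2$, and then invoke the norm-agnostic version of Lemma~\ref{lem:sup-lip}. Your route is a bit more direct and avoids the auxiliary space; the paper's route has the minor advantage of using Lemma~\ref{lem:sup-lip} verbatim rather than needing the observation that its proof transfers to other norms. Both arguments are equally valid and of comparable length.
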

\begin{proof}
Note that for any $\hy$ such that $\norm{\hy}_1 = k$, the function $f_{\hy}(\v) = \frac{1}{k}\sum \hy_i(\v_i - c_i)$ is $1$-Lipschitz with respect to the $\norm{\cdot}_\infty$ norm. Thus if we define
\[
\Phi(\v) = \max_{\norm{\hy}_1=k}\ f_{\hy}(\v),
\]
then an application of Lemma~\ref{lem:sup-lip} tells us that $\Phi(\cdot)$ is $1$-Lipschitz with respect to the $\norm{\cdot}_\infty$ norm as well. Also note that if we define
\[
\v(\w) = \br{\w^\top\x_1- c_1, \ldots, \w^\top\x_n - c_n},
\]
then we have
\[
\Psi(\w) = \Phi(\v(\w))
\]
We now note that by an application of Cauchy-Schwartz inequality, and the fact that $\norm{\x_i}_2 \leq 1$ for all $i$, we have
\[
\norm{\v(\w)-\v(\w')}_\infty \leq \norm{\w-\w'}_2
\]
Thus we have
\[
\abs{\Psi(\w) - \Psi(\w')} = \abs{\Phi(\v(\w)) - \Phi(\v(\w'))} \leq \norm{\v(\w)-\v(\w')}_\infty \leq \norm{\w-\w'}_2
\]
which gives us the desired result.
\end{proof}

\subsection{Proof of Lemma~\ref{lem:rank-conv}}
\begin{replem}{lem:rank-conv}
Let $\V$ be a universe with a total order $\succeq$ established on it and let $\v_1,\ldots,\v_n$ be a population of $n$ items arranged in decreasing order. Let $\hat \v_1,\ldots,\hat \v_b$ be a sample chosen i.i.d. (or without replacement) from the population and arranged in decreasing order as well. Then for any fixed $h: \V \> [-1,1]$ and $\kappa \in (0,1]$, we have, with probability at least $1 - \delta$ over the choice of the samples,
\[
\abs{\frac{1}{\ceil{\kappa n}}\sum_{i=1}^{\ceil{\kappa n}}h(\v_i) - \frac{1}{\ceil{\kappa b}}\sum_{i=1}^{\ceil{\kappa b}}h(\hat \v_i)} \leq 4\sqrt\frac{\log\frac{2}{\delta}}{\kappa b}
\]
\end{replem}
\begin{proof}
We will assume, for sake of simplicity, that $\kappa n$ and $\kappa b$ are both integers so that there are no rounding off issues. Let $\v^\ast_n := \v_{\kappa n}$ and $\v^\ast_b := \hat \v_{\kappa b}$ denote the elements at the bottom of the $\kappa$-th fraction of the top in the sorted population and sample lists (recall that the population and the sample lists are sorted in descending order). Also let $\T(\v) := \Ind{\v \succeq \v^\ast_n}$ and $\hat\T(\v) := \Ind{\v \succeq \v^\ast_b}$ (note that $\Ind{E}$ is the indicator variable for the event $E$) so that we have
\begin{align*}
\abs{\frac{1}{\kappa n}\sum_{i=1}^{\kappa n}h(\v_i) - \frac{1}{\kappa b}\sum_{i=1}^{\kappa b}h(\hat \v_i)} &= \abs{\frac{1}{\kappa n}\sum_{i=1}^{n}\T(\v_i)\cdot h(\v_i) - \frac{1}{\kappa b}\sum_{i=1}^{b}\hat\T(\hat \v_i)\cdot h(\hat \v_i)}\\
						&\leq \abs{\frac{1}{\kappa n}\sum_{i=1}^{n}\T(\v_i)\cdot h(\v_i) - \frac{1}{\kappa b}\sum_{i=1}^{b}\T(\hat \v_i)\cdot h(\hat \v_i)} + \abs{\frac{1}{\kappa b}\sum_{i=1}^{b}\br{\T(\hat \v_i)-\hat\T(\hat \v_i)}\cdot h(\hat \v_i)}\\
						&\leq 2\sqrt\frac{\log\frac{2}{\delta}}{\kappa b} + \underbrace{\abs{\frac{1}{\kappa b}\sum_{i=1}^{b}\br{\T(\hat \v_i)-\hat\T(\hat \v_i)}\cdot h(\hat \v_i)}}_{(A)},
\end{align*}
where the third step follows from Bernstein's inequality (which holds in situations with sampling without replacement as well \cite{Boucheron04concentrationinequalities}) since $\abs{\T(\v)\cdot h(\v)} \leq 1$ for all $\v$ and we have assumed $b \geq \frac{1}{\kappa}\log\frac{2}{\delta}$. Now if $\v^\ast_n \succeq \v^\ast_b$, then we have $\hat\T(\v) \geq \T(\v)$ for all $\v$. On the other hand if $\v^\ast_b \succeq \v^\ast_n$, then we have $\hat\T(\v) \leq \T(\v)$ for all $\v$. This means that since $\abs{h(\v)} \leq 1$ for all $\v$, we have
\[
(A) \leq \abs{\frac{1}{\kappa b}\sum_{i=1}^{b}\br{\T(\hat \v_i)-\hat\T(\hat \v_i)}} = \abs{\frac{1}{\kappa b}\sum_{i=1}^{b}\T(\hat \v_i)- 1} \leq 2\sqrt\frac{\log\frac{2}{\delta}}{\kappa b},
\]
where the second step follows since $\frac{1}{\kappa b}\sum_{i=1}^{b}\hat\T(\hat \v_i) = 1$ by definition and the last step follows from another application of Bernstein's inequality. This completes the proof.
\end{proof}

\subsection{A Uniform Convergence Bound for the $\ell^{\mx}_\preckappa(\cdot)$ Surrogate}
Having proved a generalization bound for the $\ell^\avg_\preckappa(\cdot)$ surrogate, we note that similar techniques, that involve partitioning the candidate label space into labels that have a fixed true positive rate $\beta$, and arguing uniform convergence for each partition, can be used to prove a generalization bound for the $\ell^{\mx}_\preckappa(\cdot)$ surrogate as well. We postpone the details of the argument to a later version of the paper.

\section{Proof of Theorem~\ref{thm:conv-ssgd@k}}
\label{app:thm-conv-ssgd@k-proof}
\begin{repthm}{thm:conv-ssgd@k}
Let $\bar\w$ be the model returned by Algorithm~\ref{algo:stoc-grad-preck} when executed on a stream with $T$ batches of length $b$. Then with probability at least $1 -\delta$, for any $\w^\ast \in \W$, we have
\[
\ell^\avg_\preckappa(\bar\w;\ZZ) \leq \ell^\avg_\preckappa(\w^\ast;\ZZ) + \O{\sqrt{\frac{1}{b}\log\frac{T}{\delta}}}+ \O{\sqrt\frac{1}{T}}
\]
\end{repthm}
\begin{proof}
The proof of this theorem closely follows that of Theorems 7 and 8 in \cite{KarN014}. More specifically, Theorem 6 from \cite{KarN014} ensures that any convex loss function demonstrating uniform convergence would ensure a result of the kind we are trying to prove. Since Theorem~\ref{thm:uc-preck-surrogates} confirms that $\ell^\avg_\preckappa(\cdot)$ exhibits uniform convergence, the proof follows.
\end{proof}


\section{Additional Empirical Results}
\label{app:exps}

\begin{figure*}[h!]
\centering
\subfigure[KDD08]{
\includegraphics[scale=0.51]{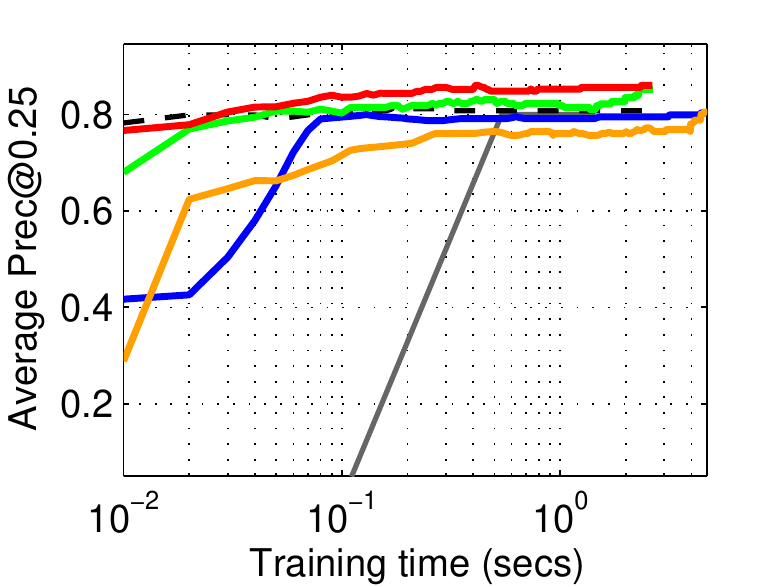}
\label{subfig:kdd08}
}
\subfigure[Covtype]{
\includegraphics[scale=0.51]{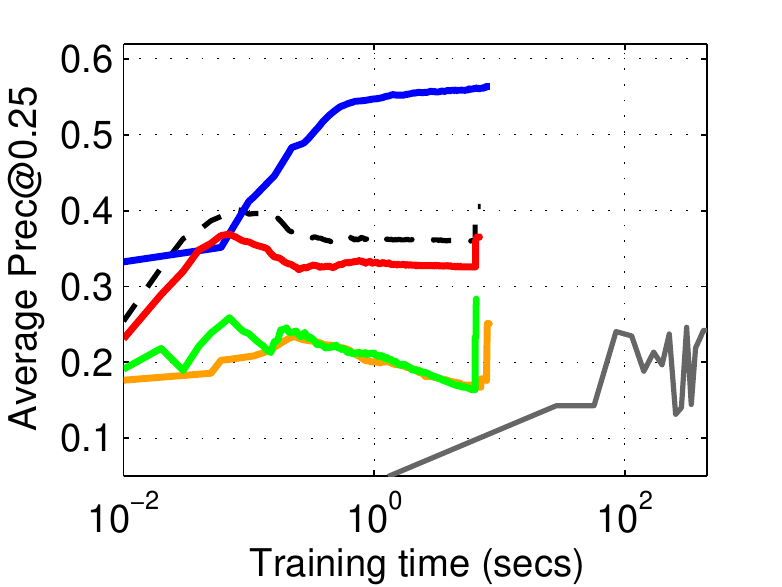}
\label{subfig:covtype}
}
\subfigure[Cod-RNA]{
\includegraphics[scale=0.51]{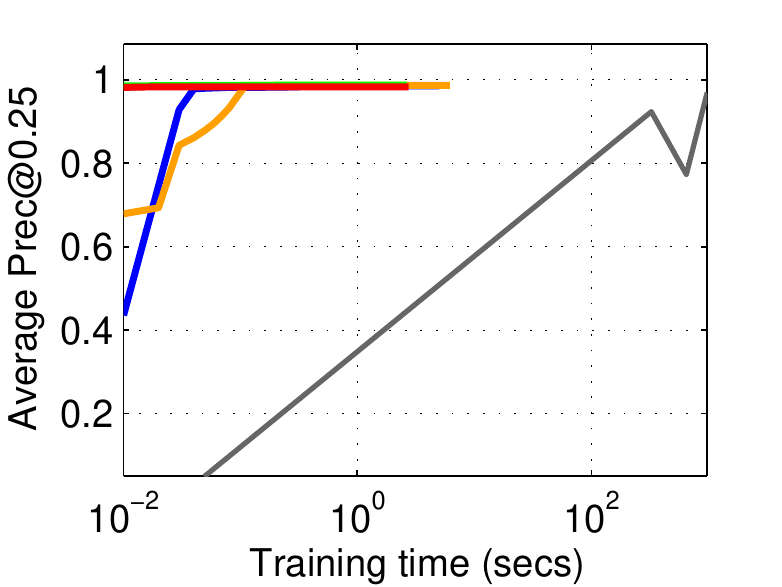}
\label{subfig:codrna}
}
\subfigure{
\includegraphics[scale=0.15]{Plots/0.25/legend.png}
}
\caption{A comparison of the proposed perceptron and SGD based methods with baseline methods (SVMPerf and \spmb) on prec@0.25 maximization tasks.}
\label{fig:app_training-time}
\end{figure*}

\end{document}